\apptocmd{\thebibliography}{\raggedright}{}{}
\newtheorem{theorem}{Theorem}
\newtheorem{lemma}{Lemma}
\newtheorem{definition}{Definition}
\newtheorem{proposition}{Proposition}
\newtheorem{assumption}{Assumption}
\newtheorem{remark}{Remark}
\begin{document}

\twocolumn[

\aistatstitle{ScoreFusion: Fusing Score-based Generative Models
via Kullback–Leibler Barycenters}

\aistatsauthor{ Hao Liu$^*$ \And Junze Tony Ye$^*$ \And  Jose Blanchet \And Nian Si}

\aistatsaddress{ Stanford University \And  Stanford University \And Stanford University \And HKUST } ]

\begin{abstract}

We introduce ScoreFusion, a theoretically grounded method for fusing multiple pre-trained diffusion models that are assumed to generate from auxiliary populations. ScoreFusion is particularly useful for enhancing the generative modeling of a target population with limited observed data. Our starting point considers the family of KL barycenters of the auxiliary populations, which is proven to be an optimal parametric class in the KL sense, but difficult to learn. Nevertheless, by recasting the learning problem as score matching in denoising diffusion, we obtain a tractable way of computing the optimal KL barycenter weights. We prove a dimension-free sample complexity bound in total variation distance, provided that the auxiliary models are well-fitted for their own task and the auxiliary tasks combined capture the target well. The sample efficiency of ScoreFusion is demonstrated by learning handwritten digits. We also provide a simple adaptation of a Stable Diffusion denoising pipeline that enables sampling from the KL barycenter of two auxiliary checkpoints; on a portrait generation task, our method produces faces that enhance population heterogeneity relative to the auxiliary distributions.

\end{abstract}

\section{INTRODUCTION}
Our goal in this paper is to propose and analyze a general method (which we call ScoreFusion) for fusing multiple pre-trained diffusion models that are assumed to simulate auxiliary populations. 

There are multiple needs that motivate this goal. For example, it is well known that diffusion models rely on large datasets (often involving high-dimensional features) and there simply may not be enough data to train a diffusion model from a target population \citep{large1, large2, large4, large5}. Another motivation is that we may wish (at inference time) to sample from a region that has a low probability in the target population, but such a region may be targeted with the help of auxiliary models. As we will demonstrate, ScoreFusion addresses precisely these types of needs.

ScoreFusion starts from the idea that if the auxiliary populations are well chosen, then the target population could be well represented by some KL-weighted-barycenter of the auxiliary populations for a suitable choice of weights, which has an analytical closed form expression \citep{modelFusion, KL}. The ScoreFusion method then tries to find the distribution that optimizes the fit to the target population within this parametric family, based on a limited number of samples. From a statistical standpoint, ScoreFusion reduces the problem of fitting a non-parametric distribution (a task that is extremely challenging to do with a limited sample size) to that of fitting a parametric family (a much more manageable task with a moderate sample size). Moreover, the parametric family is not arbitrary, but derived from a key optimality principle, namely, the KL-barycenter criterion.

Another common barycenter criterion that we could have used is the Wasserstein barycenter \citep{cuturi2014fast, peyre2016gromov, solomon2015convolutional, claici2018stochastic, janati2020debiased}. However, computing Wasserstein barycenters is generally challenging \citep{peyre2019computational, benamou2015iterative, staib2017parallel, genevay2016stochastic}. This is why we utilize the Kullback–Leibler (KL) barycenter. 

Next, we proceed to optimize the weights of the KL-barycenter parametric family to minimize the empirical score-matching loss computed from a limited collection of target samples. Although this formulation is elegant---essentially reducing to a maximum likelihood estimation problem that is convex (as a function of the parametric weights, see Equation (\ref{convex}))---applying gradient descent poses challenges, particularly in high-dimensional settings, due to the need for complex numerical integration.

Fortunately, diffusion generative modeling is particularly useful in our setting, and the way we use them underscores a novel way to apply diffusion modeling. The success of diffusion models \citep{scoreSDE,ddpm, 2015paper} has sparked a well-developed infrastructure to train those models, with many pre-trained or fine-tuned checkpoints available on websites like Hugging Face and Civitai. We take advantage of these resources and show that if the auxiliary target populations have been fitted using them, then our learning task is greatly simplified. To be more precise, using auxiliares trained with diffusion and solving Problem (\ref{newtrainloss}) is a simpler way of learning the KL-barycenter parametric family. 

Our method can be viewed as an example of ensemble learning \citep{ensemble, merging_survey}. We combine the idea of ensemble learning and score-based generative model. Moreover, our approach is optimal in the sense of KL divergence, and our weights can be easily learned on limited data. It is related to empirical methods like checkpoint merging for diffusion models \citep{AUTOMATIC1111, merge_loras}, which are not based on an optimality principle as we do (Equation (\ref{newtrainloss})) and lack theoretical basis.

A Taylor expansion argument is presented in \citet{biggs2024}, which tries to connect interpolation in the parameter space (checkpoint merging) of the denoiser U-Net to that in its function space (ScoreFusion sampling). The catch is, the Taylor expansion argument is valid only if the parameters of the merging neural networks differ by small amounts. So our paper provides theoretical basis to checkpoint merging in the context of small parameter perturbations. But we also show in experiments that the KL-barycenter approach is different and can produce more heterogeneity in populations that have relatively low representation.

The sampling aspect of ScoreFusion is also related to concurrent methods of synthesizing semantic concepts to create novel images. For example, \citet{richardson2024conceptlab}, \citet{li2024tp2o}, and \citet{xiong2024novel} proposed different ways of combining concepts in text-to-image generation. The difference between their approaches and ours is the specific component of the generative pipeline where the interpolation happens: our KL barycenter controls interpolation in the probability space of observable outcomes (e.g. the pixel space, in the case of generative vision), whereas theirs combine concepts in the latent embedding spaces.

Our main contributions are summarized as: 
\begin{itemize}
    \item We demonstrate that KL barycenter fusion of auxiliary models can be efficiently implemented when the auxiliary models are trained by score matching. In this case, the optimal score is linear in the auxiliary scores.

    \item We provide generalization bounds which split the error into four components. First, the error between the optimal KL barycenter and the target at time zero (whose direct implementation is difficult due to numerical integration). The second term corresponds to the sample complexity $O(n^{-1/4})$ and the third term is the approximation error incurred from converting the training into a regression problem. The fourth component reflects the quality of auxiliary score estimations.

    \item We demonstrate empirically that ScoreFusion succesfully addresses the needs that motivate our goal, as mentioned earlier. We demonstrate ScoreFusion's sample efficiency on MNIST digits measured in both image fidelity and class proportions. Further, we show ScoreFusion's ability to sample from low probability regions on the task of generating professional portraits.  %
\end{itemize}

The rest of the paper is organized as follows. Section \ref{sec:setup} reviews the background of KL barycenter and diffusion models. Section \ref{Sec:fusion} details our proposed fusion methods. Section \ref{sec:theory} provides convergence results for our methods.  Section \ref{section:numerics} presents experiment results and comparisons with baseline methods. Finally, Section \ref{sec:conclusion} concludes the paper with future directions.  All proofs, code, and additional experimental details are relegated to the Supplementary Materials.

\section{PRELIMINARIES AND SETUP}
\label{sec:setup}
\subsection{Notations}
The following notation will be used. Given two functions $f,g: D \to \mathbb{R}$, we say $ f \lesssim g$ if there exists a constant $C > 0$ such that for all $x \in D$, $f(x) \leq Cg(x)$. When $x \to a$, where $a \in [-\infty, \infty]$, we say $f(x) = \mathcal{O}(g(x))$ if there exists a constant $C > 0$ such that for all $x$ close enough to $a$, $|f(x)| \leq Mg(x)$. In asymptotic cases, we use $\mathcal{O}$ and $\lesssim$ interchangeably. $f \sim g$ if and only if $f \lesssim g$ and $g \lesssim f$. $C([0, T ] : \mathbb{R}^d)$ is the space of all continuous functions on $\mathbb{R}^d$ equipped with the uniform topology. 
In this paper, we consider a Polish spaces $S$, which could be $\mathbb{R}^d$ or $C([0, T ] : \mathbb{R}^d)$.
For a Polish space $S$ equipped with Borel $\sigma$-algebra $\mathcal{B}(S)$, we denote $\mathcal{P}(S)$ as the space of probability measures on $S$ equipped with the topology of weak convergence. 
In a normed vector space $\left(X, \left\lVert .\right\rVert\right)$, $\left\lVert .\right\rVert$ denotes the corresponding norm. $\left\lVert .\right\rVert_p$ denotes the standard $L^p$ norm. Given a matrix $\boldsymbol{A}$, we use $\boldsymbol{A}^T$ to denote its transpose. We denote $\boldsymbol{\lambda} = \left(\lambda_1, \ldots, \lambda_k\right)^T \in [0,1]^k$. We use $\Delta_k$ to present the $k$-dimensional probability simplex, i.e., $\Delta_k=\{\boldsymbol{\lambda}\in[0,1]^k:\sum_{i=1}^k\lambda_i=1\}$.

\subsection{Barycenter Problems and Kullback–Leibler Divergence}\label{sub:KL-barycenter}
 Given a set of probability measures $P_1, \ldots, P_k \in \mathcal{P}(S)$ on a Polish space $S$ and a measure of dissimilarity (e.g. a metric or a divergence) between two elements in $\mathcal{P}(S)$, $D$,  we define the barycenter problem with respect to $D$ and weight $\boldsymbol{\lambda}$ as the optimization problem  
$\min_{\mu \in \mathcal{P}(S)} \sum_{i=1}^k \lambda_iD\left(\mu, P_i\right)        \quad \text{s.t. } \boldsymbol{\lambda} \in \Delta_k, $  where $P_1, \ldots, P_k$  are called the reference measures. With a fixed choice of weight and reference measures, the solution of the barycenter problem is denoted as $\mu_{\boldsymbol{\lambda}}$.

Recall the definition of Kullback–Leibler (KL) divergence: suppose $P,Q \in \mathcal{P}(S)$, then $ D_{\text{KL}}\left(P \parallel Q\right) =  \int \log \left(\frac{dP}{dQ} \right) \, dP$ if $P \ll Q$ and $ D_{\text{KL}}\left(P \parallel Q\right) = \infty$ otherwise;
where $\frac{dP}{dQ}$ is the Radon-Nikodym derivative of $P$ with respect to $Q$. In particular, if $S = \mathbb{R}^d$, $P$ and $Q$ are absolutely continuous random vectors (with respect to Lebesgue measure) in $\mathbb{R}^d$ with densities $p$ and $q$ respectively, then 
    $D_{\text{KL}}\left(P \parallel Q\right) = \int p(x)\log\left( 
\frac{p(x)}{q(x)} \right)dx. $
If $D$ is the KL divergence, we recover the KL barycenter problem \citep{modelFusion}. In fact, for any Polish space $S$, the KL barycenter problem is strictly convex hence has at most one solution. 

\subsection{Background on Diffusion Models}\label{DDPM}
Our score fusion method depends  the generative diffusion model driven by stochastic differential equations (SDEs) developed in \citet{scoreSDE, ddpm, 2015paper}. In this section, we review the  background of generative diffusion model.
\subsubsection{Forward Process}
We begin with the unsupervised learning setup. Given an unlabeled dataset i.i.d. from a distribution $p_0$, the forward diffusion process is defined as the differential form
\begin{equation}\label{eq:forwardSDE}
    dX(t) = f(t, X(t))dt + g(t)dW(t), X(0) \sim p_0, 
\end{equation}
where $f: \mathbb{R}_{+} \times \mathbb{R}^d \to \mathbb{R}^d$ is a vector-valued function, $g:  \mathbb{R}_{+} \to \mathbb{R}_{+}$ is a scalar function, and $W(t)$ denotes a standard $d$-dimensional Brownian motion. From now on, we assume the existence and denote by $p_t(x)$ the marginal density function of $X(t)$, and let $p_{t|s}\left(X(t)|X(s)\right)$ be the transition kernel from $X(s)$ to $X(t)$, for $0 \leq s \leq t \leq T < \infty$, where $T$ is the terminal time for the forward process (time horizon). If $f(t,x) = -ax$ and $g(t) = \sigma$ with $a > 0$ and $\sigma > 0$, then Equation (\ref{eq:forwardSDE}) becomes a linear SDE with Gaussian transition kernels
\begin{equation}\label{eq: forwardOU}
    dX(t) = -a X(t)dt + \sigma dW(t), X(0) \sim p_0,
\end{equation}
which is an Ornstein-Ulenback (OU) process. If $T$ is large enough, then $p_T$ is close to $\pi \sim \mathcal{N}\left(\textbf{0}, \frac{\sigma^2}{2a}\textbf{I}\right)$, a Gaussian distribution with mean $\textbf{0}$ (vector) and covariance matrix $\frac{\sigma^2}{2a}\textbf{I}$. The forward process can be viewed as the following dynamic: given the data distribution, we gradually add noise to it such that it becomes a known distribution in the long run. 

\subsubsection{Backward Process (Denoising)}
If we reverse a diffusion process in time, then under some mild conditions (see, for example, \citet{reverse2, reverse1}) which are satisfied for all processes under consideration in this work, we still get a diffusion process. To be more precise, we want to have a process $\tilde{X}$ such that for $t \in [0,T]$, $\tilde{X}(t) = X(T-t)$. From the Fokker–Planck equation and the log trick \citep{Anderson1}, the corresponding reverse process for Process (\ref{eq:forwardSDE}) is 
\begin{align}
    &d\tilde{X}(t) =  g^2(T-t) \nabla \log p_{T-t}\left(\tilde{X}(t)\right)dt \notag\\
    &-f(T-t, \tilde{X}(t))dt + g(T-t)dW(t), \tilde{X}(0) \sim p_T, \label{eq:genearl1}
\end{align}

where $\nabla$ represents taking derivative with respect to the space variable $x$. We call the term $\nabla \log p_t(x)$ as the (Stein) score function.
If the forward process is an OU process, then the reverse process is $\tilde{X}(0) \sim p_T$ and
\begin{equation}\label{eq:OU1}
    d\tilde{X}(t) = \left( a\tilde{X}(t) + \sigma^2 \nabla \log p_{T-t}\left(\tilde{X}(t)\right)\right)dt + \sigma dW(t).
\end{equation}
If  the backward SDE can be simulated (which is typically done via Euler–Maruyama method, see details in Supplementary Material Section \ref{Euler}),  we can generate samples from the distribution $p_0$. 
We can view simulating the backward SDE as the denoising step from pure noise to the groundtruth distribution. 

\subsubsection{Score Estimation}
The only remaining task is score estimation for $\nabla \log p_t(x)$. There are many ways to achieve this, and some of them are equivalent up to constants that is independent of the training parameters. In this paper, we choose the time-dependent score matching loss used in \citet{MLE}:
\begin{align}\label{trainloss}
    &\mathcal{L}\left(\theta; \gamma\right) := \mathbb{E}_{t \sim \mathcal{U}[0,T]}\left[\gamma(t) \mathbb{E}_{X(t) \sim p_t} \left[ \right.\right. \notag\\
    &\left.\left. \left\lVert s_{t, \theta}\left(X(t)\right) - \nabla \log p_t(X(t)) \right\rVert_2^2 \right] \right],
\end{align}
where $\gamma: [0,T] \to \mathbb{R}_+$ is a weighting function,  and $s_{t, \theta}: \mathbb{R}^d \to \mathbb{R}^d$ is a score estimator  $s_{t,\theta}$, usually chosen as a neural network.  Then score estimation is done by the empirical loss using SGD \citep{generalizations}.

There are many performance measures. Suppose $D(.,.)$ is a measure of dissimilarity in $\mathcal{P}(S)$, then we say $D(\mu, \hat{\mu})$ is a generalization error with respect to $D$, where $\mu$ is the target distribution and $\hat{\mu}$ is the distribution of the generated samples.

Recently, several analysis about the generative properties of diffusion models has been done; however, even in the case of compactly supported target distributions and sufficient smoothness regularity, the basic diffusion model encounters the curse of dimensionality. Therefore, a large amount of target data is needed to generate high quality samples. For a detailed discussion, see Supplementary Material Section \ref{randomfeature}. 

\section{KL BARYCENTER AND FUSION METHODS}
\label{Sec:fusion}
In Section \ref{Sec:KL_sol}, we propose and analytically solve two types of KL barycenter problems. These solutions will lead to the development of our fusion methods, which is detailed in Section \ref{Sec2}.
\subsection{KL Barycenter Problems}\label{Sec:KL_sol}
\textbf{Distribution-level KL barycenter.} We first consider a simple case, where the KL barycenter problem is solved on an Euclidean space, i.e.,  $S = \mathbb{R}^d$.  The barycenter obtained in this scenario is referred to as a distribution-level KL barycenter. Theorem \ref{sol1}  gives the analytical solution.

\begin{theorem}\label{sol1}
    Suppose $\{\mu_1, \ldots, \mu_k \} \subset \mathcal{P}(\mathbb{R}^d)$ and for each $i = 1, \ldots, k,$ $\mu_i$ is absolutely continuous  with respect to the Lebesgue measure, with densities $p_1, \ldots, p_k$ respectively. Then, the distribution-level KL barycenter $\mu_{\boldsymbol{\lambda}}$  is unique with density $p_{\boldsymbol{\lambda}}(x) = \frac{\prod _{i=1}^k p_i(x)^{\lambda_i}}{\int_{\mathbb{R}^d} \prod _{i=1}^k p_i(x)^{\lambda_i} dx}.$
\end{theorem}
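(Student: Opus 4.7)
The plan is to reduce the weighted sum $\sum_i \lambda_i D_{\text{KL}}(\mu \parallel \mu_i)$ to a single KL divergence against the candidate minimizer $\mu_{\boldsymbol{\lambda}}$, up to an additive constant independent of $\mu$. This is the Gibbs variational principle in disguise, and once it is in place the identification of the optimizer is immediate from Gibbs' inequality.

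First I would dispose of the non-absolutely-continuous case: if $\mu$ is not absolutely continuous with respect to Lebesgue measure, then since each $\mu_i$ is, we have $\mu \not\ll \mu_i$ for every $i$, so each $D_{\text{KL}}(\mu \parallel \mu_i) = +\infty$ and the objective is $+\infty$. Hence the infimum is searched only over $\mu$ with some density $q$. I would then verify that $Z_{\boldsymbol{\lambda}} := \int_{\mathbb{R}^d} \prod_{i=1}^k p_i(x)^{\lambda_i}\,dx$ is finite and positive: finiteness is a one-line consequence of the generalized H\"older inequality with exponents $1/\lambda_i$ (yielding $Z_{\boldsymbol{\lambda}} \le 1$), and positivity holds under the mild non-degeneracy that $\bigcap_i \{p_i > 0\}$ has positive Lebesgue measure. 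This makes $p_{\boldsymbol{\lambda}} = Z_{\boldsymbol{\lambda}}^{-1}\prod_i p_i^{\lambda_i}$ a bona fide probability density.

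The core computation is the identity
\begin{equation*}
\sum_{i=1}^k \lambda_i D_{\text{KL}}(\mu \parallel \mu_i) \;=\; \int q(x)\log \frac{q(x)}{\prod_i p_i(x)^{\lambda_i}}\,dx \;=\; D_{\text{KL}}(\mu \parallel \mu_{\boldsymbol{\lambda}}) - \log Z_{\boldsymbol{\lambda}},
\end{equation*}
obtained by expanding each KL, pulling the sum inside the integral, and using $\sum_i \lambda_i \log p_i = \log \prod_i p_i^{\lambda_i} = \log(Z_{\boldsymbol{\lambda}}\, p_{\boldsymbol{\lambda}})$. Since $-\log Z_{\boldsymbol{\lambda}}$ is independent of $\mu$, minimizing the original objective is equivalent to minimizing $D_{\text{KL}}(\mu \parallel \mu_{\boldsymbol{\lambda}})$; Gibbs' inequality then delivers $\mu = \mu_{\boldsymbol{\lambda}}$ as the unique minimizer, consistent with the strict-convexity remark in Section \ref{sub:KL-barycenter}.

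The only delicate point, which I see as the main obstacle, is justifying the interchange of summation and integration in the decomposition when some $\int q\,|\log p_i|$ is infinite, because then individual KL terms may formally involve $\infty - \infty$. The clean workaround is: restrict to $q$ supported on $\bigcap_{i:\lambda_i > 0}\{p_i > 0\}$ (any other $\mu$ already yields $+\infty$), decompose $\log p_i = \log^+ p_i - \log^- p_i$ and apply Tonelli on each sign, noting that integrability of $\log \prod_i p_i^{\lambda_i}$ against $q$ is equivalent to $D_{\text{KL}}(\mu \parallel \mu_{\boldsymbol{\lambda}}) < \infty$. A feasible $\mu$ with $D_{\text{KL}}(\mu \parallel \mu_{\boldsymbol{\lambda}}) = \infty$ cannot improve on the value $-\log Z_{\boldsymbol{\lambda}}$ attained at $\mu_{\boldsymbol{\lambda}}$, so working in the finite regime is without loss.
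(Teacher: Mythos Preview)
Your proof is correct and takes a genuinely different route from the paper. The paper argues by calculus of variations: it writes the objective as $\int F(q,x)\,dx$ subject to the constraint $\int q = 1$, invokes the Euler--Lagrange equation with a Lagrange multiplier to obtain $\partial_q F + m = 0$, and reads off the geometric-mean form of $q^*$ as the only stationary point. Existence and uniqueness are then handled separately by appealing to the strict convexity of $\mu \mapsto D_{\text{KL}}(\mu \parallel P_i)$ (the paper's Lemma~\ref{unique}).

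Your approach is more elementary and more self-contained: the algebraic identity $\sum_i \lambda_i D_{\text{KL}}(\mu \parallel \mu_i) = D_{\text{KL}}(\mu \parallel \mu_{\boldsymbol{\lambda}}) - \log Z_{\boldsymbol{\lambda}}$ simultaneously identifies the minimizer, certifies optimality, and gives uniqueness via Gibbs' inequality, without any variational calculus. You also attend to points the paper omits---finiteness of the normalizing constant via H\"older, the support restriction, and the potential $\infty - \infty$ issue when splitting the integrand---which makes your argument more rigorous at the boundary. The paper's route is shorter to state but leans on the reader accepting the Euler--Lagrange formalism in an infinite-dimensional setting and only yields a necessary condition, with sufficiency outsourced to convexity; your route is a direct verification.
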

\textbf{Process-level KL barycenter.} 
Our second barycenter problem is performed when the state space is the continuous-function space, i.e., $S = C([0, T ] : \mathbb{R}^d)$.
This context yields a process-level KL barycenter. When the underlying measures are represented by SDEs, we offer a closed-form solution for the process-level KL barycenter in Theorem \ref{sol2}.

\begin{theorem}\label{sol2}
    Suppose for each $i = 1, 2, \ldots, k$,  the $i$-th SDE has the form with $X_i(0) \sim \mu_i$,
    \begin{equation*}
       dX_i(t) = a_i\left(t, X(t)\right)dt + \sigma(t) dW_i(t) 
    \end{equation*}
    and has a unique strong solution. The law of solution of each SDE is denoted as $P_i \in  \mathcal{P}(C([0, T ] : \mathbb{R}^d))$. We further assume, for each $i = 1, 2, \ldots, k$, $\mu_i$ has an absolutely continuous density with respect to the Lebesgue measure and $a_i$ uniformly bounded, then process-level KL barycenter can be represented as the SDE with $X(0) \sim \mu$ and
    \begin{equation*}
       dX(t) =  a\left(t, X(t)\right)dt + \sigma(t) dW(t),  
    \end{equation*}
    where $a(t,x) =  \sum_{i=1}^{k}\lambda_i a_i(t,x)$, $\mu$ is the distribution-level KL barycenter of  reference measures $\mu_1, \ldots, \mu_k$, and $W$ is a standard Brownian motion.
\end{theorem}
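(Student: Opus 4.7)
The plan is to apply Girsanov's theorem to the candidate path measure against each reference law $P_i$, exploiting the common drift component $c$ and common diffusion coefficient $\sigma$, so that the path-space KL decomposes into an initial-law KL plus a quadratic running cost in the drift discrepancy. The weighted barycenter objective then splits into an initial-law subproblem handled by Theorem \ref{sol1} and a drift subproblem handled by a pointwise bias-variance identity over the simplex.

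Concretely, I would consider any candidate $P$ that is the law of $dX(t)=[c(t,X(t))+\sigma(t)^2 a(t,X(t))]dt+\sigma(t)dW(t)$ with $X(0)\sim\nu$ and $a$ measurable and bounded. The uniform boundedness of $a$ and all $a_i$ guarantees Novikov's condition, so Girsanov combined with the chain rule for KL (initial sigma-algebra plus the path beyond) yields
\[
D_{\text{KL}}(P\|P_i)=D_{\text{KL}}(\nu\|\mu_i)+\tfrac{1}{2}\,\mathbb{E}_P\!\left[\int_0^T\sigma(t)^2\|a(t,X(t))-a_i(t,X(t))\|^2\,dt\right].
\]
Summing with weights $\lambda_i$ and applying the identity $\sum_i \lambda_i \|a-a_i\|^2 = \|a-\bar a\|^2 + \sum_i \lambda_i \|a_i-\bar a\|^2$ with $\bar a := \sum_i \lambda_i a_i$, I split the objective into (i) $\sum_i \lambda_i D_{\text{KL}}(\nu\|\mu_i)$, which is exactly the distribution-level KL barycenter objective of Theorem \ref{sol1} and is minimized at $\nu=\mu$; (ii) $\tfrac{1}{2}\mathbb{E}_P[\int_0^T \sigma^2 \|a-\bar a\|^2 dt]$, nonnegative and vanishing iff $a=\bar a$ pointwise; and (iii) a residual $\tfrac{1}{2}\mathbb{E}_P[\int_0^T \sigma^2 \sum_i \lambda_i\|a_i-\bar a\|^2 dt]$ whose integrand does not depend on the candidate drift $a$.

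The main obstacle is that piece (iii) still depends on the choice of $a$ through the law $P$ of $X(t)$, so pointwise minimization of piece (ii) alone does not by itself identify a global minimizer across all admissible SDE-form candidates. My plan to close this gap is a first-variation argument at $(\bar a,\mu)$: a perturbation $a \to a + \varepsilon h$ increases piece (ii) at order $\varepsilon^2$ via the quadratic $\|a-\bar a\|^2$ term, while by Girsanov it perturbs $P$, and hence piece (iii), only at second order in $\varepsilon$, so $(\bar a,\mu)$ is a critical point of the full functional. Combining this with the strict convexity of $\mu \mapsto \sum_i \lambda_i D_{\text{KL}}(\mu\|P_i)$ on $\mathcal{P}(C([0,T]:\mathbb{R}^d))$ noted after the barycenter definition in Section \ref{sub:KL-barycenter}, which guarantees uniqueness of the minimizer, promotes this critical point to the unique global minimum. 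This minimizer is realized by the SDE with drift $c+\sigma^2\bar a$ and initial law $\mu$ stated in the theorem, with $W$ the Brownian motion obtained from the driving noise of the optimal SDE.
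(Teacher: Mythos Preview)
Your Girsanov-plus-bias--variance decomposition mirrors the paper's approach (the paper first applies the chain rule to split off the initial law, then Girsanov for the running cost, arriving at the same quadratic functional), and you are correct to flag that piece (iii), namely $\tfrac12\,\mathbb{E}_P\big[\int_0^T\sigma^2\sum_i\lambda_i\|a_i-\bar a\|^2\,dt\big]$, still depends on the candidate law $P$. The paper's own proof does not address this dependence either: it simply declares the minimizer to be $\bar a$ ``as a functional optimization problem.''

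Your proposed fix, however, does not close the gap. Under the drift perturbation $a\to\bar a+\varepsilon h$, Girsanov gives $dP_\varepsilon/dP_0=\exp\!\big(\varepsilon\int_0^T\sigma h\,dB^0-\tfrac{\varepsilon^2}{2}\int_0^T\sigma^2\|h\|^2\,dt\big)$, so for the path functional $F(X)=\tfrac12\int_0^T\sigma(t)^2\sum_i\lambda_i\|a_i-\bar a\|^2\,dt$ one has
\[
\mathbb{E}_{P_\varepsilon}[F]-\mathbb{E}_{P_0}[F]=\varepsilon\,\mathbb{E}_{P_0}\!\Big[F\int_0^T\sigma h\,dB^0\Big]+O(\varepsilon^2),
\]
and the first-order coefficient is generically nonzero (by martingale representation it vanishes for all $h$ only when $F$ is $P_0$-a.s.\ constant). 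Hence piece (iii) changes at \emph{first} order in $\varepsilon$, not second, so $(\bar a,\mu)$ is not a critical point of the full objective except in the degenerate case where $\sum_i\lambda_i\|a_i(t,x)-\bar a(t,x)\|^2$ is constant in $x$ along the process. Indeed, since $\sum_i\lambda_i\log(d\alpha^*/dP_i)$ equals $F$ pathwise up to an additive constant, one can rewrite the objective as $\sum_i\lambda_i D_{\text{KL}}(\alpha\|P_i)=D_{\text{KL}}(\alpha\|\alpha^*)+\mathbb{E}_\alpha[F]+\text{const}$, whose minimizer by the Gibbs variational principle is the $e^{-F}$-tilt of $\alpha^*$, not $\alpha^*$ itself. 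Strict convexity only guarantees uniqueness of the minimizer; it cannot promote a non-critical point to the minimum.
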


In this paper, fusing  $k$ distributions is viewed as computing a KL barycenter with optimized weights. Given $k$ well-trained reference generative models,  our fusing method optimizes the weights $\lambda_1, \ldots, \lambda_k$ to approximate  a target distribution.

\subsection{Fusion Methods}\label{Sec2}
Recall that in our task, we are given $k$ datasets with abundant samples, and our goal is to generate samples for a target dataset (with limited available data). Therefore, in this section, we denote the target measure as $\nu$ and we assume that we are given $k$ reference diffusion generative models and  they are able to generate samples from $k$ different reference measures $\mu_1,\ldots,\mu_k$, respectively. Specifically,  each reference measure corresponds to an auxiliary backward diffusion process with 
$\tilde{X}_i(0) \sim p^i_T$ and  \begin{equation}\label{auxbackward}
 	d\tilde{X}_i(t) = \left( a\tilde{X}_i(t) + \sigma^2s^i_{T-t,\theta^*}\left(\tilde{X}_i(t)\right)\right)dt + \sigma dW_i(t), 
 \end{equation}
 where $s^i_{T-t,\theta^*}$ is a well-trained score function for the the $i$-th reference measure.
 we introduce two fusion algorithms and related generalization error results.

In practice, the discretized version of the SDE (\ref{auxbackward}) is used. Specifically, we employ a small time-discretization step $h$ and a total of  $N$ time steps (hence $T = Nh$). Since $p_T^i$ is close to the Gaussian distribution $\pi$, the SDE  (\ref{auxbackward})  is approximated by  $\hat{X}(0) \sim \pi$ and for all $t \in [lh, (l+1)h]$,
\begin{equation}\label{auxbackward:SDE}
	d\hat{X}_i(t) = \left( a\hat{X}_i(t) + \sigma^2  s^i_{T-lh,\theta^*}\left(\hat{X}_i(lh)\right) \right)dt + \sigma dW(t).
\end{equation}

As shown in \citet{generalizations} Lemma 1, when the target distribution is compactly supported, then with high probability, the trajectory is also compactly supported, thus the score term is uniformly bounded. Thus, given a weight $\boldsymbol{\lambda}$,  Theorem \ref{sol2} implies that the corresponding process-level KL barycenter follows the SDE: for all $t \in [lh, (l+1)h], \hat{Y}(0)\sim \pi$,
\begin{align}
    &d\hat{Y}(t) = \sigma^2 \sum_{i=1}^k  \lambda_i s^i_{T-lh,\theta^*}\left(\hat{Y}(lh)\right) dt \notag\\
    & + a\hat{Y}(t)dt +  \sigma dW(t). \label{pracbarycenter}
\end{align}
We denote the distribution of the terminal variable $\hat{Y}(T)$ as $\hat{p}_{\boldsymbol{\lambda}}$, which will later serve as the distribution of generated sample.

The key component in our diffusion method is to find an optimal $\boldsymbol{\lambda}^*$ such that the $\hat{p}_{\boldsymbol{\lambda}^*}$ is as close as the target measure $\nu$ as possible. To achieve this goal, we propose two fusion methods that relies on two different optimization problems.

The first method directly optimizes on the probability measure defined on the Euclidean space, which is based on Theorem \ref{sol1}. We notice that a similar idea of fusing component distributions via KL barycenter compared with vanilla fusion has been proposed in \citet{modelFusion}, which uses averaging KL divergence as a metric to recover the mean-field approximation of posterior distribution of the fused global model. 

Namely, we consider the following convex problem 
\begin{align}\label{convex}
    &\min_{\boldsymbol{\lambda} \in \Delta_k}D_{\text{KL}}(\nu \parallel \mu_{\boldsymbol{\lambda}})\\&= \min_{\boldsymbol{\lambda} \in \Delta_k}\left\{  \mathbb{E}_{\nu}\left[\log q(X) - \sum_{i=1}^{k}\lambda_i  \log p_i(X)\right] \right. \notag\\
    &\left. + \log\left(  \int \prod _{i=1}^k p_i(y)^{\lambda_i}dy \right)\right\},\notag
\end{align}

where  $p_1, \ldots, p_k$ denote the densities of the reference measures and $q(x)$ denote the density of target distribution $\nu$.  We  refer to this fusion method as \textit{vanilla fusion}. Suppose we have an accurate estimation of the densities $p_i$s. We then use Frank-Wolfe method to solve Problem (\ref{convex}) and get an optimal $\boldsymbol{\lambda}^*$. In the Frank-Wolfe method, the gradient term can be approximated by sample mean estimators from target data $\nu$ (See Remark \ref{gradient} in Supplementary Material Section \ref{lemmaSec}). To generate samples, we plug in the $\boldsymbol{\lambda}^*$ to (\ref{pracbarycenter}) and simulate the SDE. 
However, the diffusion generative models usually cannot directly estimate the densities   $p_1, \ldots, p_k$. Therefore, for  complicated high-dimension distributions, it is usually hard to directly apply vanilla fusion. Therefore,  we propose a practical alternative, process-level method called \textbf{ScoreFusion}. The numerical results in Section \ref{section:numerics} were generated by employing Algorithm \ref{Algo:2}.

In our second method, we first build a forward process starting from the target dataset, according to (\ref{eq: forwardOU}). We denote this forward process as $\tilde{X}^\nu(t)$ and the corresponding density as $p_t^\nu(x)$. Then, we modify the loss function (\ref{trainloss}) as a linear regression problem 
\begin{align}
&\tilde{\mathcal{L}}\left(\boldsymbol{\lambda};\theta^*, \gamma\right)= \mathbb{E}_{t \sim \mathcal{U}[0,\tilde{T}]}\left[\gamma(t)\left(\mathbb{E}_{\tilde{X}^\nu(t) \sim p^\nu_t}\left[ \right.\right.\right. \notag\\
&\left.\left.\left.\left\lVert \sum_{i=1}^{k} \left( \lambda_i s^i_{t, \theta^*}\left(\tilde{X}^\nu(t)\right)\right) - \nabla \log p^\nu_t(\tilde{X}^\nu(t))  \right\rVert_2^2 \right]\right) \right], \label{newtrainloss}
\end{align}
where we choose  
$\tilde{T} \ll T$. The intuition behind the choice of $\tilde{T}$ is that we want to learn an optimal $\boldsymbol{\lambda}^*$ such that $p_{\boldsymbol{\lambda}^*}$ is close to the target $\nu$. Therefore, when $\tilde{T} \ll T$ (the forward process has not inject much noise), the $\hat{\boldsymbol{\lambda}}$ obtained from the training is affected less by the noise. Theoretically, choosing $\tilde{T} = 0$ is optimal, but this is hard to implement. Algorithm \ref{Algo:2} with $\tilde{T} = 0$ can be viewed as a variant of vanilla fusion since the learning is only performed on the distribution level ($p_0$), and extremely small $\tilde{T}$ causes numerical instability in practice, which makes sense given the numerical integration and density estimations needed in the vanilla fusion.  
The optimization problem associated with our second method is 
$
	\min_{\boldsymbol{\lambda} \in \Delta_k} \tilde{\mathcal{L}}\left(\boldsymbol{\lambda};\theta^*, \gamma\right).
	$ 
The details are in Algorithm \ref{Algo:2}. 
\begin{algorithm}\caption{ScoreFusion}\label{Algo:2}
\begin{algorithmic}[1]
    \State \textbf{Input:} Calibration data $\mathcal{D}$, pre-trained score functions $s^1_{t,\theta^*},\ldots,s^k_{t,\theta^*}$. Hyperparameter $\tilde{T}$.
    \State \textbf{Output:} Samples from a distribution $\hat{\nu}_P$.
    \State \textbf{I. Calibration Phase}
    \State Randomly initialize non-negative $\lambda_1,\ldots,\lambda_k$ s.t. $\sum \lambda_i = 1$.
    \Repeat
        \State Run forward process $\tilde{X}^\nu(t)$ using a mini-batch from $\mathcal{D}$.
        \State Evaluate the loss function (\ref{newtrainloss}) and back-propagate onto $\lambda_1,\ldots,\lambda_k$. 
        \State \Comment{$\lambda_i$'s are softmaxed to enforce the probability simplex constraint}
    \Until{converged. Save the optimal $\boldsymbol{\lambda}^*=\{\lambda^*_1,\lambda^*_2,\ldots,\lambda_k^*\}$.}
    \State \textbf{II. Sampling Phase}
    \State $s_{t, \boldsymbol{\lambda}^*} (\hat{Y}(t)) := \sum_{i=1}^k \lambda^*_i s_{t, \theta^*}^i( \hat{Y}(t))$. 
    \State Simulate the backward SDE (\ref{pracbarycenter}) with $s_{t, \boldsymbol{\lambda}^*} (\cdot)$ starting from a Gaussian prior and generate samples. 
\end{algorithmic}
\end{algorithm}

\section{CONVERGENCE RESULTS}\label{Sec4}
\label{sec:theory}
This section details the convergence results for our proposed fusion
methods. We focus on sample complexities, quantified by the necessary number
of samples in the target dataset, in terms of total variation distance. We
show that the sample complexities of our methods are dimension-free, given
that the auxiliary processes are accurately fitted to their reference
distributions and together offer adequate information for the target
distribution. 
To begin with, we assume all distributions are compactly supported.

\begin{assumption}
\label{compact} The target and reference distributions are all compactly
supported in $\mathbb{K}\subset \mathbb{R}^{d}$ with absolutely continuous
densities. We assume that their second moments are bounded by $M\in
(0,\infty )$. 
\end{assumption}

\begin{proposition}
\label{lemma2} Under Assumption \ref{compact}, Problem (\ref{convex}) is
convex in $\boldsymbol{\lambda}$.
\end{proposition}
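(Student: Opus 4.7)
The strategy is to split the objective into an affine part and the log of a normalizing constant, and then recognize the latter as essentially a cumulant generating function of an exponential family, whose convexity follows from Hölder's inequality.

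More concretely, I would first observe that the expectation term
$\mathbb{E}_{\nu}[\log q(X) - \sum_{i=1}^{k}\lambda_i \log p_i(X)]$
is affine (hence convex) in $\boldsymbol{\lambda}$, provided the integrals $\mathbb{E}_\nu[\log p_i(X)]$ are finite — which is where Assumption~\ref{compact} enters, via compact support and absolute continuity (a standard integrability argument bounding $|\log p_i|$ in $L^1(\nu)$ should suffice, possibly after a short discussion of the near-zero behavior of the $p_i$). Thus, the whole problem reduces to showing
\begin{equation*}
F(\boldsymbol{\lambda}) := \log\!\left(\int_{\mathbb{K}} \prod_{i=1}^k p_i(y)^{\lambda_i}\,dy\right)
\end{equation*}
is convex on $\Delta_k$.

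For this, I would pick $\boldsymbol{\lambda}_1,\boldsymbol{\lambda}_2 \in \Delta_k$ and $\alpha \in [0,1]$, and write
\begin{equation*}
\prod_{i=1}^k p_i(y)^{\alpha \lambda_{1,i}+(1-\alpha)\lambda_{2,i}} \;=\; \Bigl(\prod_{i=1}^k p_i(y)^{\lambda_{1,i}}\Bigr)^{\!\alpha} \Bigl(\prod_{i=1}^k p_i(y)^{\lambda_{2,i}}\Bigr)^{\!1-\alpha}.
\end{equation*}
Applying Hölder's inequality with conjugate exponents $1/\alpha$ and $1/(1-\alpha)$ to the right-hand side and integrating gives
\begin{equation*}
\int_{\mathbb{K}} \prod_{i=1}^k p_i(y)^{\alpha \lambda_{1,i}+(1-\alpha)\lambda_{2,i}}\,dy \;\leq\; \Bigl(\!\int_{\mathbb{K}} \prod_i p_i^{\lambda_{1,i}}\,dy\Bigr)^{\!\alpha} \Bigl(\!\int_{\mathbb{K}} \prod_i p_i^{\lambda_{2,i}}\,dy\Bigr)^{\!1-\alpha},
\end{equation*}
and taking logarithms yields exactly $F(\alpha \boldsymbol{\lambda}_1 + (1-\alpha)\boldsymbol{\lambda}_2) \leq \alpha F(\boldsymbol{\lambda}_1) + (1-\alpha) F(\boldsymbol{\lambda}_2)$.

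Finally, I would verify that the argument of the logarithm is strictly positive and finite on $\Delta_k$ so that $F$ (and the full objective) is well-defined and real-valued: positivity uses that the $p_i$ are densities on $\mathbb{K}$, and finiteness follows again by Hölder (using $\sum_i \lambda_i = 1$): $\int_{\mathbb{K}} \prod_i p_i^{\lambda_i}\,dy \leq \prod_i (\int p_i\,dy)^{\lambda_i} = 1$. The main obstacle, and what deserves a short careful treatment, is the integrability of $\log p_i$ under $\nu$ in the affine term and the justification of the Hölder step when some $p_i$ vanishes on a set of positive Lebesgue measure; compact support plus absolute continuity of all measures involved should make these technicalities routine.
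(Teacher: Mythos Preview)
Your proposal is correct and follows essentially the same route as the paper: both split the objective into an affine term plus the log of the normalizing constant, and both reduce convexity to that second term. The paper rewrites the normalizer as $\log \mathbb{E}[\exp(\boldsymbol{\lambda}^T Z)]$ for $Z=(\log p_1(X),\ldots,\log p_k(X))$ with $X$ uniform on $\mathbb{K}$ and then invokes ``log of the moment generating function is convex'' as a known fact, whereas you spell out the underlying H\"older-inequality argument directly; these are the same proof at different levels of unpacking, and your additional well-definedness checks are a nice bonus.
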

Proposition \ref{lemma2} implies that Problem (\ref{convex}) is easy to solve given that the reference densities can be estimated.  
We further require Assumption \ref{Lipschitzanderror} below, which guarantees that each auxilary process is accurately trained in the sense that the score function at each time step is well-fitted.
\begin{assumption}
	\label{Lipschitzanderror} For each $1,2, \ldots, k$ and for all $t \in [0,T]$%
	, $\nabla \log p^i_t$ is $L$-Lipschitz with $L\geq1$, the step size $h = T/N$ satisfies $h
	\lesssim 1/L$, and the inverse of $\nabla \log p^i_t$ is also Lipschitz; for each $1, \ldots, k$ and $l = 0,
	1, \ldots, N$, $\mathbb{E}_{p^i_{lh}}\left[ \left\| s^i_{lh, \theta^*} -
	\nabla \log p^i_{lh}\right\|_2^2\right] \leq \epsilon_{\text{score}}^2$
	with small $\epsilon_{\text{score}}$.
\end{assumption}
Assumption \ref{Lipschitzanderror} is widely used in the diffuion model literature (see, for example, \citet{discretizationerror}).

To proceed, we denote $\boldsymbol{\lambda }^{\ast }$ and $\boldsymbol{\Lambda }%
^{\ast }$ to be the solutions of Problems \ref{convex} and \ref%
{newtrainloss}, respectively. The corresponding barycenters are denoted as $\mu_{\boldsymbol{\lambda }^{\ast }}$ and  $\mu_{\boldsymbol{\Lambda }^{\ast }}$.
Assumption \ref{optimalaux} below states that the theoretical optimal barycenters are close to the target measure, which ensures all reference distributions together are able to provide sufficient information for the target distribution.
\begin{assumption}
\label{optimalaux} $D_{\text{KL}}\left( \nu \parallel \mu_{\boldsymbol{\lambda}%
^*}\right) \leq \epsilon_0^2$ and $D_{\text{KL}}\left( \nu \parallel \mu_{%
\boldsymbol{\Lambda}^*}\right) \leq \epsilon_1^2$, with small $\epsilon_0$
and $\epsilon_1$.
\end{assumption}

Based on Assumptions \ref{compact}, \ref{Lipschitzanderror} and \ref{optimalaux}, we provide convergence results for the vanilla fusion and ScoreFusion (Algorithm \ref{Algo:2}) in Theorems \ref{err1} and \ref{err2}, respectively.

\begin{theorem}
\label{err1} Suppose that   Assumptions \ref{compact}%
, \ref{Lipschitzanderror}, and \ref{optimalaux}  are satisfied. We further assume for each fixed $\boldsymbol{\lambda} \in \Delta_k$, $\text{TV}\left(\mu_{%
	\boldsymbol{\lambda}}, \hat{\mu}_{\boldsymbol{\lambda}}\right) \leq \epsilon_2$, where $\hat{\mu}_{\boldsymbol{\lambda}}$ is the barycenter of the output distributions of 
 $k$ auxiliary processes.
 Then, for $\delta > 0$ and $\delta \ll 1$, the output distribution of the vanilla fusion method, $\hat{\nu}_D$, we have with probability at least $1-\delta$, 
 \begin{align*}
  \text{TV}\left(\nu, \hat{\nu}_D \right)
&\lesssim \underbrace{\epsilon_0}_{\text{quality of combined auxiliaries}}\\  &+\underbrace{\epsilon_2}_{\text{auxiliary density estimation}}\\
& + \underbrace{{\mathcal{O}\left(\left(\log\left(\frac{1}{\delta}\right)\right)^{1/4}n^{-1/4}%
\right)}}_{\text{mean estimation error}} +\text{ SE},
 \end{align*}
 where SE is the error of auxiliary score estimation, defined as
\begin{align*}
    &SE := \left[\exp(-T)\max_{i = 1,2,\ldots, k}%
\sqrt{D_{\text{KL}}\left(p^i_T \parallel \pi\right)} \right. \notag\\   &\left.+ \sigma\sqrt{kT}%
\left(\epsilon_{\text{score}} + L\sqrt{dh} + Lh\sqrt{M}\right)\right].
\end{align*}
\end{theorem}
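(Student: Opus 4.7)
The plan is to bound $\text{TV}(\nu, \hat{\nu}_D)$ by inserting three intermediate measures and applying the triangle inequality. Let $\hat{\boldsymbol{\lambda}}$ denote the weights returned by Frank--Wolfe applied to Problem \eqref{convex} when its gradient is estimated from $n$ target samples, let $\mu_{\boldsymbol{\lambda}}$ be the exact KL barycenter (Theorem \ref{sol1}), and let $\hat{\mu}_{\boldsymbol{\lambda}}$ be the barycenter of the actual output distributions of the $k$ auxiliary processes with weights $\boldsymbol{\lambda}$. Then
\begin{align*}
\text{TV}(\nu,\hat{\nu}_D)\;\leq\;\text{TV}(\nu,\mu_{\boldsymbol{\lambda}^*})\;+\;\text{TV}(\mu_{\boldsymbol{\lambda}^*},\mu_{\hat{\boldsymbol{\lambda}}})\;+\;\text{TV}(\mu_{\hat{\boldsymbol{\lambda}}},\hat{\mu}_{\hat{\boldsymbol{\lambda}}})\;+\;\text{TV}(\hat{\mu}_{\hat{\boldsymbol{\lambda}}},\hat{\nu}_D),
\end{align*}
and I would handle the four pieces so that they match the four terms in the stated bound in the same order.

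The first and third pieces are immediate: Pinsker's inequality combined with Assumption \ref{optimalaux} gives $\text{TV}(\nu,\mu_{\boldsymbol{\lambda}^*}) \lesssim \epsilon_0$, and the density-estimation hypothesis of the theorem yields $\text{TV}(\mu_{\hat{\boldsymbol{\lambda}}},\hat{\mu}_{\hat{\boldsymbol{\lambda}}}) \leq \epsilon_2$ directly.

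For the mean-estimation piece I would exploit the explicit form of \eqref{convex}: writing $Z(\boldsymbol{\lambda}) = \int \prod_i p_i(x)^{\lambda_i}\,dx$ for the normalizing constant of Theorem \ref{sol1}, the gradient in $\lambda_i$ equals $-\mathbb{E}_\nu[\log p_i(X)] + \partial_{\lambda_i}\log Z(\boldsymbol{\lambda})$, so only the first summand depends on $\nu$ and must be estimated from the $n$ target samples. Under Assumption \ref{compact}, $\log p_i$ is bounded on the common compact support, so a Hoeffding-type union bound over the $k$ coordinates delivers uniform gradient error of order $\sqrt{\log(k/\delta)/n}$ with probability at least $1-\delta$. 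Combining this with the convexity of \eqref{convex} (Proposition \ref{lemma2}) and the standard inexact Frank--Wolfe analysis yields the objective gap $D_{\text{KL}}(\nu\|\mu_{\hat{\boldsymbol{\lambda}}}) - D_{\text{KL}}(\nu\|\mu_{\boldsymbol{\lambda}^*}) \lesssim \sqrt{\log(1/\delta)/n}$, and a second application of Pinsker turns this KL gap into the $(\log(1/\delta)/n)^{1/4}$ TV rate.

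For the fourth piece, Theorem \ref{sol2} tells us the true process-level barycenter is driven by the convex combination $\sum \hat{\lambda}_i \nabla \log p^i_t$, whereas $\hat{\nu}_D$ simulates the discretization \eqref{pracbarycenter} with learned scores $s^i_{t,\theta^*}$ and an initial Gaussian prior $\pi$ in place of the barycenter marginal at time $T$. I would decompose the KL on path space via a Girsanov-type argument into (i) the initial-law mismatch, which contracts through the OU semigroup to produce the $\exp(-T)\max_i\sqrt{D_{\text{KL}}(p^i_T\|\pi)}$ factor, (ii) the Euler--Maruyama plus Lipschitz discretization errors of Assumption \ref{Lipschitzanderror}, and (iii) the combined score error, bounded by Cauchy--Schwarz on the simplex at cost $\sqrt{k}\,\epsilon_{\text{score}}$. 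A final Pinsker step produces the SE term. The hard part will be the inexact Frank--Wolfe step: showing the noisy-gradient-to-objective-gap bound holds with the stated probability and with dimension-free constants is the most delicate ingredient, while the remaining pieces follow from repeated triangle inequality, Pinsker, and now-standard diffusion discretization estimates.
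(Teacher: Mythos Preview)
Your approach is essentially the paper's, with one fixable slip in the decomposition. The paper uses a three-term split
\[
\text{TV}(\nu,\mu_{\hat{\boldsymbol{\lambda}}})+\text{TV}(\mu_{\hat{\boldsymbol{\lambda}}},\hat{\mu}_{\hat{\boldsymbol{\lambda}}})+\text{TV}(\hat{\mu}_{\hat{\boldsymbol{\lambda}}},\hat{\nu}_D)
\]
and handles the first piece by Pinsker applied directly to $D_{\text{KL}}(\nu\parallel\mu_{\hat{\boldsymbol{\lambda}}})$, which the inexact Frank--Wolfe induction (carried out explicitly in the paper over the iteration count $\tau$, with Hoeffding at each step) bounds by $\epsilon_0^2+\mathcal{O}\bigl(\sqrt{\log(1/\delta)/n}\bigr)$; the square root then produces the $\epsilon_0$ and the $n^{-1/4}$ contributions simultaneously.

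Your four-term split inserts $\text{TV}(\mu_{\boldsymbol{\lambda}^*},\mu_{\hat{\boldsymbol{\lambda}}})$ as a stand-alone piece and claims Pinsker converts the Frank--Wolfe objective gap into a bound on it. That step does not go through: the objective gap is $D_{\text{KL}}(\nu\parallel\mu_{\hat{\boldsymbol{\lambda}}})-D_{\text{KL}}(\nu\parallel\mu_{\boldsymbol{\lambda}^*})$, a difference of KL divergences sharing the \emph{first} argument $\nu$, and Pinsker gives no control on $\text{TV}(\mu_{\boldsymbol{\lambda}^*},\mu_{\hat{\boldsymbol{\lambda}}})$ from such a quantity (nor does Proposition~\ref{lemma2} supply strong convexity that would let you pass through $\|\hat{\boldsymbol{\lambda}}-\boldsymbol{\lambda}^*\|$). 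The repair is simply to collapse your first two pieces into the single term $\text{TV}(\nu,\mu_{\hat{\boldsymbol{\lambda}}})$ and apply Pinsker once there, exactly as the paper does. The remaining pieces---the $\epsilon_2$ density-estimation term and the Girsanov/discretization SE bound, which the paper isolates as a separate lemma (Lemma~\ref{backwardsampling})---match the paper's treatment.
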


\begin{theorem}
\label{err2} Suppose that Assumptions \ref{compact}%
, \ref{Lipschitzanderror}, and \ref{optimalaux}  are satisfied. 
Then, for $\delta > 0$ and $\delta \ll 1$, for the output distribution of Algorithm \ref{Algo:2}, $\hat{\nu}_P$, with probability at least $1-\delta$, 
\begin{align*}
  \text{TV}\left(\nu, \hat{\nu}_P \right)
&\lesssim \underbrace{(\sigma+1)\epsilon_1}_{\text{quality of combined auxiliaries}}\\  &+\underbrace{\sigma\sqrt{k}\mathcal{%
O}\left(\tilde{T}^{1/4}\right)}_{\text{approximation of time 0}}\\
& + \underbrace{{\mathcal{O}\left(\sigma\left(\log\left(\frac{1}{\delta}\right)\right)^{1/4}n^{-1/4}%
\right)}}_{\text{sampling error}} +\text{ SE}.
 \end{align*}
\end{theorem}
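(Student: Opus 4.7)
The plan is to decompose $\text{TV}(\nu,\hat{\nu}_P)$ by the triangle inequality and handle each piece with a different tool. Writing
\[
\text{TV}(\nu,\hat{\nu}_P) \;\le\; \underbrace{\text{TV}(\nu,\mu_{\boldsymbol{\Lambda}^*})}_{\text{(I)}} \;+\; \underbrace{\text{TV}(\mu_{\boldsymbol{\Lambda}^*},\mu_{\hat{\boldsymbol{\Lambda}}})}_{\text{(II)}} \;+\; \underbrace{\text{TV}(\mu_{\hat{\boldsymbol{\Lambda}}},\hat{\nu}_P)}_{\text{(III)}},
\]
where $\mu_{\hat{\boldsymbol{\Lambda}}}$ denotes the idealized output of the process-level barycenter SDE with exact auxiliary scores, continuous-time simulation, and empirically-learned weights $\hat{\boldsymbol{\Lambda}}$ (by Theorem \ref{sol2} this agrees with the distribution-level KL barycenter at time $0$). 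Term (III) is handled by the standard Girsanov-plus-discretization analysis for score-based diffusions, which absorbs the Euler--Maruyama discretization, the score-approximation error $\epsilon_{\text{score}}$, and the mismatch between $\pi$ and $p^i_T$; under Assumption \ref{Lipschitzanderror} it produces precisely the SE expression. Term (I) follows from Pinsker applied to Assumption \ref{optimalaux}, giving $\lesssim \epsilon_1$; the $\sigma$ factor is absorbed when this term is collected with the $\sigma$-scaled terms from (II).

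The bulk of the work is term (II). By Theorem \ref{sol2}, the two idealized barycenter SDEs with weights $\boldsymbol{\Lambda}^*$ and $\hat{\boldsymbol{\Lambda}}$ share the drift structure $c(t,x)+\sigma^2\sum_i\lambda_i\nabla\log p_t^i$ with a common initial distribution (the barycenter of $\{p^i_T\}$), so a Girsanov computation on $[0,T]$ yields
\[
D_{\text{KL}}\!\left(\mu_{\boldsymbol{\Lambda}^*}\,\|\,\mu_{\hat{\boldsymbol{\Lambda}}}\right) \;\lesssim\; \sigma^2\int_0^{T} \mathbb{E}\Big\|\textstyle\sum_i(\Lambda_i^*-\hat{\Lambda}_i)\,\nabla\log p_t^i\Big\|^2\,dt,
\]
a quadratic form in $\hat{\boldsymbol{\Lambda}}-\boldsymbol{\Lambda}^*$. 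On $[0,\tilde T]$ the integrand is controlled by the population excess loss $\tilde{\mathcal{L}}(\hat{\boldsymbol{\Lambda}})-\tilde{\mathcal{L}}(\boldsymbol{\Lambda}^*)$, using that $\boldsymbol{\Lambda}^*$ is the population minimizer of the quadratic $\tilde{\mathcal{L}}$ on $\Delta_k$; on $[\tilde T,T]$ the forward OU has mixed enough that the auxiliary scores approach the common Gaussian score, so any convex-combination mismatch is uniformly small and contributes the $\tilde T^{1/4}$ correction (with a $\sqrt{k}$ factor coming from combining $k$ score channels). Because $\hat{\boldsymbol{\Lambda}}$ minimizes the empirical counterpart of $\tilde{\mathcal{L}}$ on the compact, low-dimensional simplex $\Delta_k$, a uniform Hoeffding/Rademacher bound — whose integrand is bounded under Assumptions \ref{compact}--\ref{Lipschitzanderror} — delivers $\tilde{\mathcal{L}}(\hat{\boldsymbol{\Lambda}})-\tilde{\mathcal{L}}(\boldsymbol{\Lambda}^*) \lesssim \sqrt{\log(1/\delta)/n}$ with probability $1-\delta$. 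Applying Pinsker then converts the KL bound into the sampling-error term $\sigma\mathcal{O}\!\left((\log(1/\delta))^{1/4}\,n^{-1/4}\right)$.

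The principal obstacle is bridging the two ``levels'' of the argument: Assumption \ref{optimalaux} and the target quantity are time-$0$ distribution-level statements, whereas $\tilde{\mathcal{L}}$ is a score-matching integral along the forward OU on $[0,\tilde T]$. The bridge requires pairing a de~Bruijn-type identity $\tfrac{d}{dt}D_{\text{KL}}(p_t^\nu\|p_t^{\boldsymbol{\lambda}}) = -\sigma^2\,\mathbb{E}_{p_t^\nu}\|\nabla\log p_t^\nu-\nabla\log p_t^{\boldsymbol{\lambda}}\|^2$ — which converts the integrated Fisher information appearing in $\tilde{\mathcal{L}}$ at $\boldsymbol{\Lambda}^*$ back to the initial KL $\le \epsilon_1^2$ from Assumption \ref{optimalaux} — with the Girsanov-based path comparison above, while carefully tracking the non-commutation of the barycenter map with the forward OU. Once these pieces are assembled and combined with the uniform concentration, the four terms $\sigma\epsilon_1$, $\sigma\mathcal{O}((\log(1/\delta))^{1/4}n^{-1/4})$, $\sigma\sqrt{k}\,\tilde T^{1/4}$, and SE combine with probability at least $1-\delta$ to give the stated bound.
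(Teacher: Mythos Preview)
Your decomposition routes the argument through $\text{TV}(\mu_{\boldsymbol{\Lambda}^*},\mu_{\hat{\boldsymbol{\Lambda}}})$, and this is where the plan breaks. The Girsanov comparison between the two idealized barycenter SDEs gives a KL bounded by $\sigma^2\int_0^T \mathbb{E}\big\|\sum_i(\Lambda^*_i-\hat\Lambda_i)\nabla\log p^i_t\big\|^2\,dt$ over the \emph{entire} horizon $[0,T]$, and neither half of your split controls it. On $[\tilde T,T]$ (forward time) you assert the OU has ``mixed enough'' so the auxiliary scores are near the common Gaussian score; but $\tilde T$ is small, so at $t=\tilde T$ essentially no mixing has occurred and the $\nabla\log p^i_t$ are still near their most distinct time-$0$ values. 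The integral over $[\tilde T,T]$ is therefore $O(1)$, not $O(\tilde T^{1/2})$. On $[0,\tilde T]$ you identify the integrand with the population excess loss $\tilde{\mathcal L}(\hat{\boldsymbol\Lambda})-\tilde{\mathcal L}(\boldsymbol\Lambda^*)$, but (i) the Girsanov expectation is under the $\boldsymbol\Lambda^*$-barycenter path law while $\tilde{\mathcal L}$ averages under $p^\nu_t$, and (ii) equating the quadratic form in $\hat{\boldsymbol\Lambda}-\boldsymbol\Lambda^*$ with the excess loss needs a vanishing first-order term, i.e.\ an interior minimizer on $\Delta_k$, which is not assumed. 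A further wrinkle is that the two barycenter SDEs do \emph{not} share a common initial law: the process-level barycenter starts from the $\boldsymbol\lambda$-weighted distribution-level barycenter of $\{p^i_T\}$, which depends on $\boldsymbol\lambda$.

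The paper avoids all of this by never comparing $\mu_{\boldsymbol{\Lambda}^*}$ to $\mu_{\hat{\boldsymbol{\Lambda}}}$. Instead it applies Girsanov only on the \emph{short} window $[0,\tilde T]$, comparing the true reverse of the forward OU from $\nu$ against the discretized process driven by $\sum_i\hat\Lambda_i s^i_{t,\theta^*}$; this pathwise KL is, up to constants, exactly the loss value $\tilde{\mathcal L}(\hat{\boldsymbol\Lambda};\theta^*,\sigma^2)$. One then writes $\tilde{\mathcal L}(\hat{\boldsymbol\Lambda})=\tilde{\mathcal L}(\boldsymbol\Lambda^*)+\big[\tilde{\mathcal L}(\hat{\boldsymbol\Lambda})-\tilde{\mathcal L}(\boldsymbol\Lambda^*)\big]$. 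The excess risk is handled by bounded-difference concentration (McDiarmid), giving $\sigma^2\sqrt{\log(1/\delta)/n}$. The value at the optimum is bounded directly: replace each $s^i_{t,\theta^*}$ by $\nabla\log p^i_t$ (cost $k\epsilon_{\text{score}}^2$), then compare time-$t$ scores to time-$0$ scores for $t\le\tilde T$ (cost $k\,O(\tilde T^{1/2})$), and finally recognize the remaining time-$0$ term as controlled by $D_{\text{KL}}(\nu\,\|\,\mu_{\boldsymbol\Lambda^*})\le\epsilon_1^2$. This is the ``bridge'' you were looking for, but it runs through the \emph{loss value at the optimum} rather than through a full-horizon SDE comparison; your de~Bruijn idea is morally the right instinct, but it should be applied to bound $\tilde{\mathcal L}(\boldsymbol\Lambda^*)$, not to control a Girsanov integral on $[0,T]$.
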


Theorems \ref{err1} and \ref{err2} demonstrate dimension-free sample complexities given that auxiliaries are well approximated and  auxiliaries all combined capture the features of target well. More specifically, each bound in  Theorems \ref{err1} and \ref{err2} has 4 terms, which represents different sources of error.

The quality of combined auxiliaries is the essential assumption in both Theorems \ref{err1} and \ref{err2}. The sampling error in Theorem \ref{err2} reflects the fact that with the help of diffusion models, the optimization in fact becomes linear in terms of scores, making the problem easier and escape the curse of dimensionality. The the approximation to time $t=0$ term replaces the vanilla fusion with a small controllable noise but makes the implementation much easier. It worth noticing that there is a tradeoff between choosing $\tilde{T}$: the smaller $\tilde{T}$, the more accurate the optimal weights are, but the more probably that the algorithm will encounter numerical instability. Finally, the score estimation term of the auxiliaries can be small with a careful choice of discretization time steps and accurate auxiliary score approximation (see Remark  \ref{remark3} in Supplementary Material Section \ref{E1}).

\section{EXPERIMENTS}
\label{section:numerics}
In this section, we use images to illustrate some of the key features highlighted in our contributions (third bullet point) in the Introduction. Additional conceptual experiments are given in the Supplementary Material. We release our code at \url{https://github.com/junzeye/ScoreFusion}.

\subsection{Calibrating MNIST Digits Distribution}\label{sub:MNIST}
EMNIST \citep{emnist} is an enriched version of the well-known MNIST dataset in 1x28x28 grayscale. We selected five non-overlapping subsets ($D_i$, $i=1,\ldots,5$), composed entirely of the digits \texttt{7} and \texttt{9} but with varying mixtures: $(10\%, 90\%)$, $(30\%, 70\%)$, $(70\%, 30\%)$, $(90\%, 10\%)$, and $(60\%, 40\%)$, respectively. Four auxiliary score networks are trained from scratch on $D_1,\ldots,D_4$ with enough data to ensure adequate validation loss convergence. $D_5$ is reserved as the target dataset for calibrating (finetuning) both ScoreFusion and the baseline models. In our experiment, we vary the quantity of class-balanced samples from $D_5$ made available for the model to learn.

Our first baseline method trains a score network from scratch using only limited target data, illustrating the difficulty of density estimation in the low-data regime. Our second baseline directly fine-tunes (with early-stopping) one of the pre-trained auxiliary models on the target data by unfreezing its checkpoint; the chosen auxiliary model was pre-trained on $D_3$, whose digits mix (70-30) is the closest to $D_5$ in a Wasserstein sense. Lastly, the KL barycenter weights of ScoreFusion are obtained by optimizing exclusively a linear projection layer on top of the four auxiliary score networks.

\begin{table}[htbp]
\centering
\caption{Mean NLL (bits/dim) under different sample sizes. A smaller value is better.}
\label{nll}

\resizebox{3.2in}{!}{%
\begin{tabular}{@{}ccccc@{}}
\Xhline{1.2pt} 
Sample size & $2^6$ & $2^8$ & $2^{10}$ & $2^{12}$ \\ 
\midrule
B1 & $7.186 \pm 0.019$ & $6.235 \pm 0.016$ & $5.725 \pm 0.024$ & $4.979 \pm 0.028$ \\
B2 & $4.779 \pm 0.042$ & $4.786 \pm 0.034$ & $4.769 \pm 0.032$ & $4.763 \pm 0.034$ \\
Frozen B2 &  \multicolumn{4}{c}{$4.768 \pm 0.024$} \\
Ours & $\mathbf{4.733 \pm 0.029}$ & $\mathbf{4.733 \pm 0.018}$ & $\mathbf{4.718 \pm 0.022}$ & $\mathbf{4.715 \pm 0.021}$ \\
\Xhline{1.2pt} 
\end{tabular}%
}
\end{table}

We evaluate both fidelity and diversity of samples generated by each method at inference time. Fidelity is measured by the negative log likelihood (NLL) of a holdout test dataset under the trained model \citep{ddpm}. To quantify digits distribution, we use a high-accuracy digits classifier \citep{spinalnet} to classify $1024$ samples generated by each method.

\begin{table*}[h]
  \centering
  
  \caption{Class proportions of $1024$ handwritten digits generated by each differently trained model, estimated by a high-accuracy MNIST classifier. ``B1'' and ``B2'' refer to Baselines 1 and 2 respectively. ``Others'' are samples assigned to a class that is neither \texttt{7} nor \texttt{9}. $2^6, \cdots, 2^{12}$ refer to number of training data from the target distribution.}
  
  \resizebox{0.8\textwidth}{!}{%
    \begin{tabular}{cc ccc ccc ccc ccc} 
    \Xhline{1.2pt}  
    \multirow{2}{*}{Digit} & \multirow{2}{*}{Target} & \multicolumn{3}{c}{$2^{6}$} & \multicolumn{3}{c}{$2^8$} & \multicolumn{3}{c}{$2^{10}$} & \multicolumn{3}{c}{$2^{12}$} \\
    & & B1 & B2 & Ours & B1 & B2 & Ours & B1 & B2 & Ours & B1 & B2 & Ours \\
    \hline
    \texttt{7}     & 60\% & 47.9\% & 72.4\% & \textbf{55.6\%} & 66.8\% & 65.5\% & \textbf{57.5\%} & 65.5\% & 65.1\% & \textbf{56.6\%} & 66.7\% & 65.5\% & \textbf{59.8\%} \\
    \texttt{9}     & 40\% & 10.3\% & 23.2\% & \textbf{39.4\%} & 23.8\% & 29.9\% & \textbf{38.0\%} & 26.7\% & 30.6\% & \textbf{39.8\%} & 27.9\% & 30.4\% & \textbf{36.7\%} \\
    Others & 0     & 41.8\% & 4.4\% & \textbf{5.0\%} & 9.4\% & 4.6\% & \textbf{4.5\%} & 7.8\% & 4.3\% & \textbf{3.6\%} & 5.4\% & 4.1\% & \textbf{3.5\%} \\
   \Xhline{1.2pt} 
    \end{tabular}%
  }
  \label{tab:digits-count}
\end{table*}

Tables \ref{nll} and \ref{tab:digits-count} show summary statistics of our evaluations. We relegate training details and sample images to the Appendix. Across all four tested sample sizes, ScoreFusion achieves a lower NLL than the other two baselines. Moreover, despite the alignment of digits proportions not being hard-coded in the score matching loss minimization, Table \ref{tab:digits-count} shows that with as few as $64$ samples, ScoreFusion already learns a generative model whose digits proportions closely align with the ground-truth 60-40 split. At the same time, Baseline 2 is slow to calibrate its digits proportions even as we increase the fine-tuning data size.

\subsection{Sampling a New Facial Distribution}\label{sub:SDXL}

SDXL 1.0 is the newest model in the Stable Diffusion family \citep{sdxl1_release}, capable of generating realistic images in 1024x1024 resolution. We downloaded two fine-tuned SDXL checkpoints to use them as our auxiliary models \citep{af_lora, wm_lora}. These two models were each finetuned by their creators to generate human portraits who look like White males and East Asian females, respectively, in Figure \ref{fig:biased}. For consistency, we use the same text prompt in all images generations: \textit{``a photo of a mathematics scientist, looking at the camera, ultra quality, sharp focus''}. Importantly, this prompt only specifies a person's profession, leaving their other traits unspecified. 

\begin{figure}[h]
  \centering
  \includegraphics[width=0.85\columnwidth]{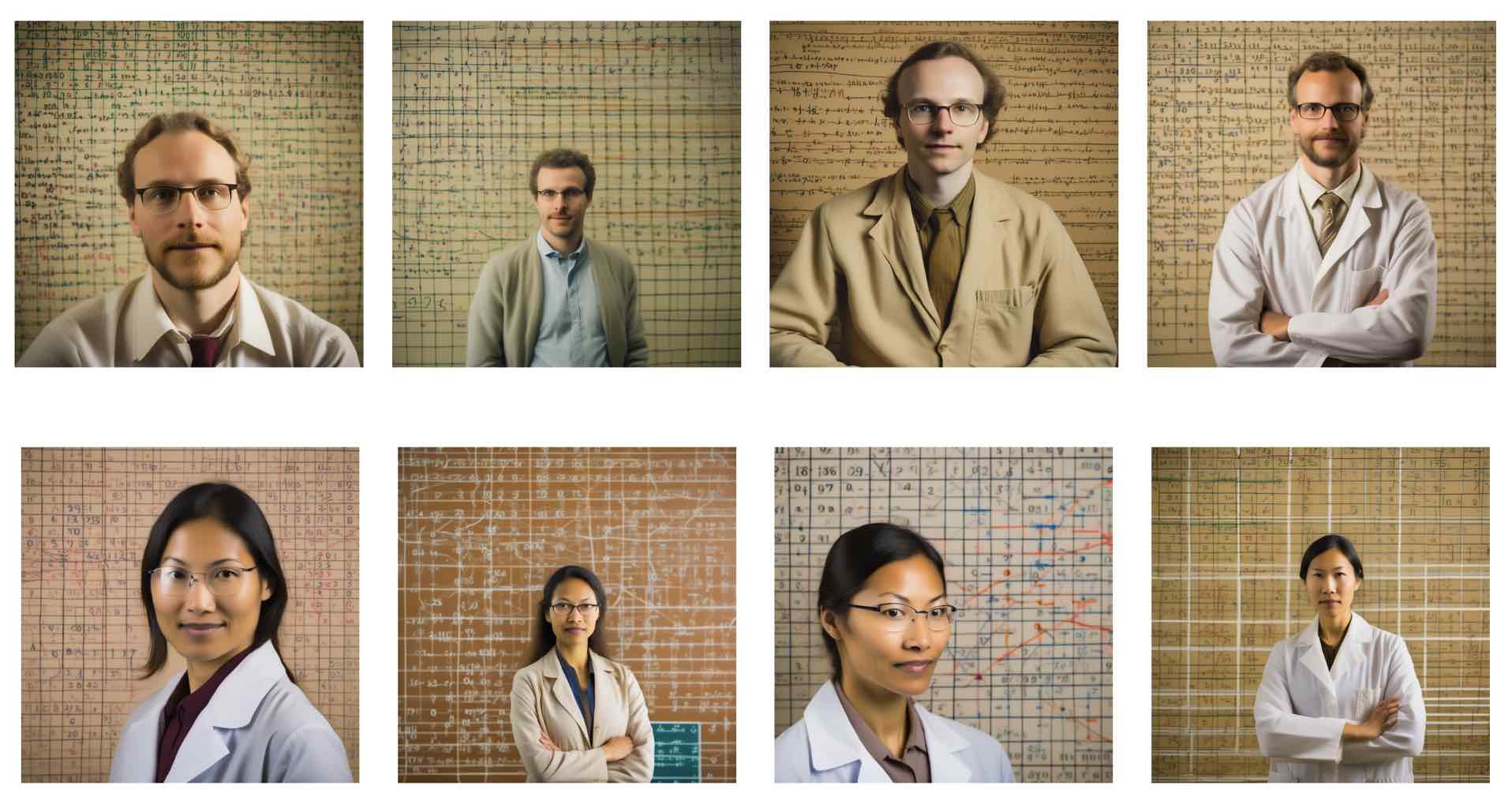}
  \caption{\textbf{Top}: Generations by the 1st auxiliary model alone, resembling the White male phenotype. \textbf{Bottom}: Generations by the 2nd auxiliary model, resembling the Asian female phenotype. These are model generations \textbf{without} using the KL barycenter sampler.}
  \label{fig:biased}
\end{figure}

\begin{figure*}[h]
    \centering
    \begin{subfigure}
        \centering     
        \includegraphics[width=0.99\columnwidth]{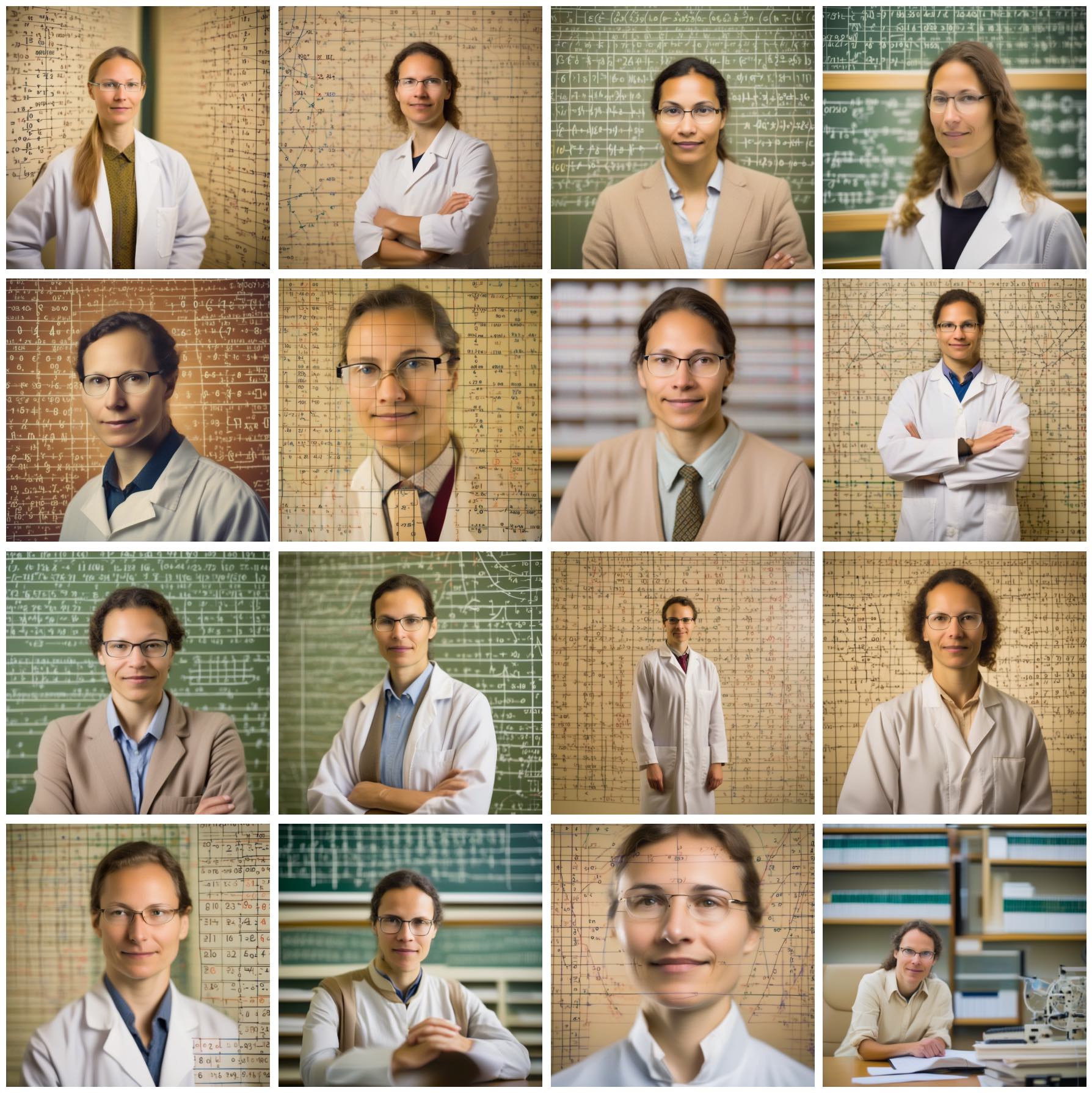}
    \end{subfigure}
    \hfill
    \begin{subfigure}
        \centering
        \includegraphics[width=0.99\columnwidth]{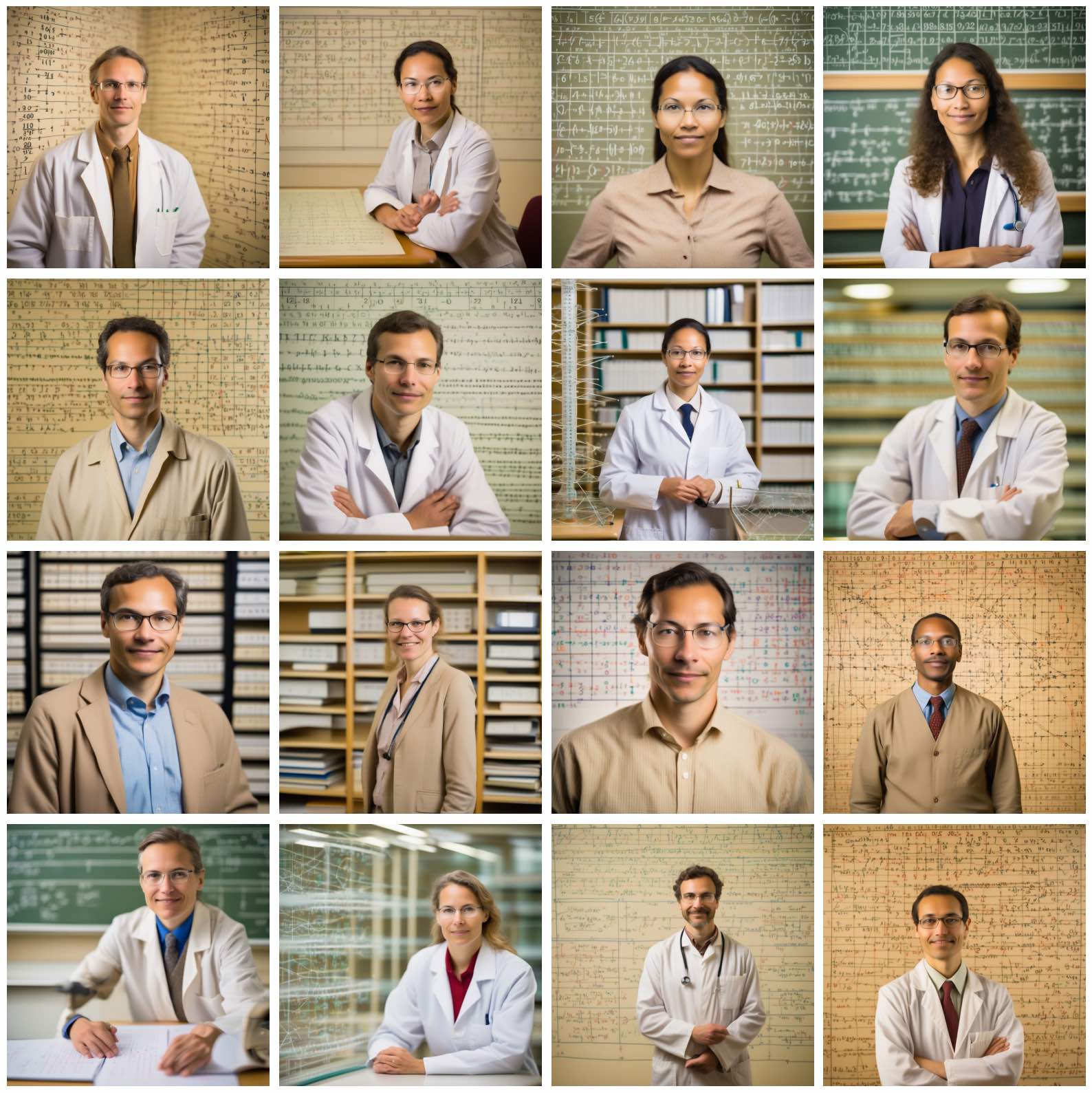}
    \end{subfigure}
    
    \caption{\textbf{Left:} i.i.d. samples from KL barycenter. \textbf{Right:} i.i.d. samples from checkpoint merging. \newline Interpolation weights are $\boldsymbol{\lambda} = (0.5, 0.5)$ for both. The same text prompt as Figure \ref{fig:biased} was used: \textit{``a photo of a mathematics scientist, looking at the camera, ultra quality, sharp focus''}. Both approaches enhance ethnic diversity relative to the monolithic representations in Figure \ref{fig:biased}, but the KL barycenter approach also produces samples that embody a more ambiguous and rarer representation of gender and ethnicity, mitigating stereotypes.}
    \label{fig:compare-4x4}
\end{figure*}

Since the MNIST experiment already tests the sample efficiency of training with ScoreFusion, we focus on investigating its inference-time behavior in the SDXL experiment. Specifically, we hardcode the barycenter weights $\boldsymbol{\lambda} \in \Delta^1$, use the same prompt as before, and sample images from the KL barycenter distribution of the two auxiliary models. As a baseline, we also sampled images from a model created from linearly merging the checkpoints of the two auxiliary models, as is the empirical practice in \cite{AUTOMATIC1111, biggs2024}.

The side-by-side in Figure \ref{fig:compare-4x4} suggests a caveat of \citet{biggs2024}'s Taylor expansion argument to approximate interpolation in the parameter space of the score network by the function space. Samples on the left are visibly different from those on the right, indicating that the error term introduced by Taylor approximation amplifies non-trivially after denoising diffusion. The observation lends further evidence that our approach fuses a distribution meaningfully distinct from that of weight-averaging two auxiliary models.

We also conduct an ablation study on the samples' sensitivity to $\boldsymbol{\lambda}$. Figure \ref{fig:compare-interpolation} qualitatively compares how KL barycenter and checkpoint merging respectively responds to variations in $\boldsymbol{\lambda}$. The KL barycenter shows a smooth spectrum of gender expressions and physical appearances that move beyond stereotypical portrayals, embodying qualities that may be read as gender-neutral. This distinction highlights our approach's ability to encourage the sampling from the \textit{tails} of the auxiliary distributions, which one could not efficiently obtain by resampling from the auxiliary models.

\begin{figure*}[htbp]
  \centering
  \includegraphics[width=0.95\textwidth]{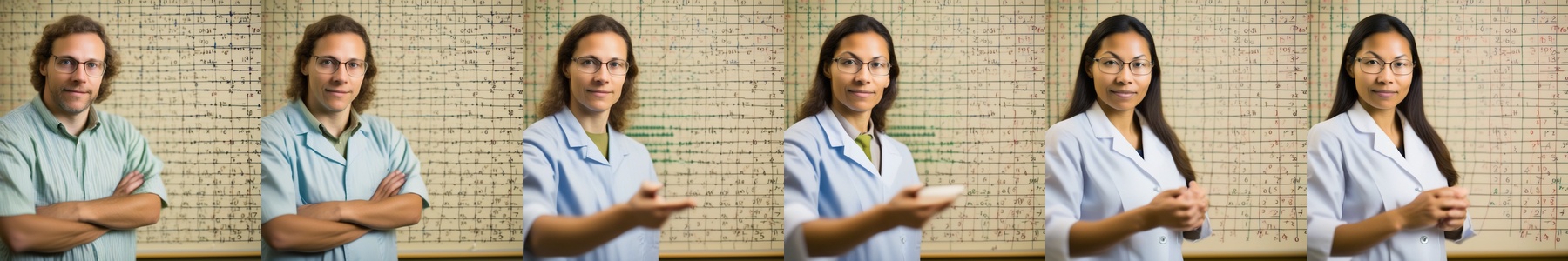}
  \includegraphics[width=0.95\textwidth]{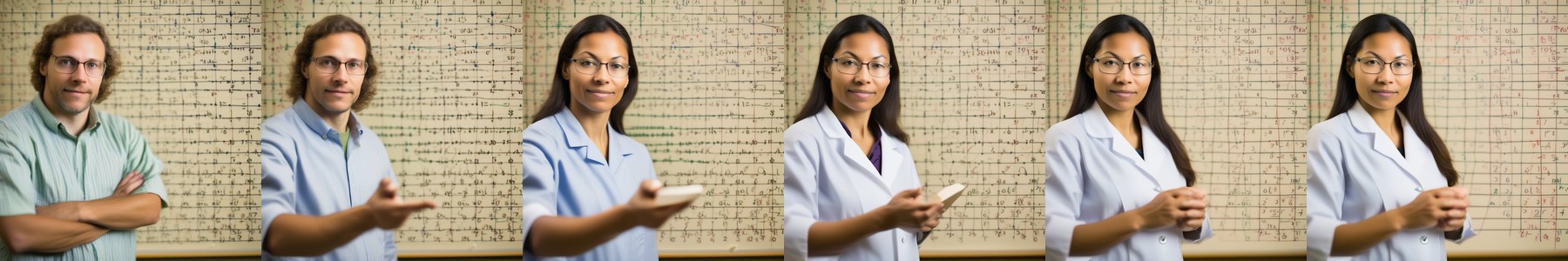}

  \caption{
    \textbf{Top row}: KL barycenter. 
    \textbf{Bottom row}: Checkpoint merging.\newline The same Gaussian noise was used to seed all twelve images, the only difference being the interpolation approach (top vs bottom) and interpolation weight; from left to right, $\lambda_2 \in \{0, 0.2, 0.4, 0.6, 0.8, 1.0\}$ and $\lambda_1 = 1 - \lambda_2$. $\lambda_2=0$ and $\lambda_2=1$ each reduce to an original auxiliary (biased) SDXL model. Observe that the bottom row samples show an abrupt identity shift between $\lambda_2 = 0.2$ and $0.4$, whereas the top row shows a smoother transition from one demographic visual concept to another.
    }
  \label{fig:compare-interpolation}
\end{figure*}

\begin{figure}[h]
    \centering
    \includegraphics[width=0.66\linewidth]{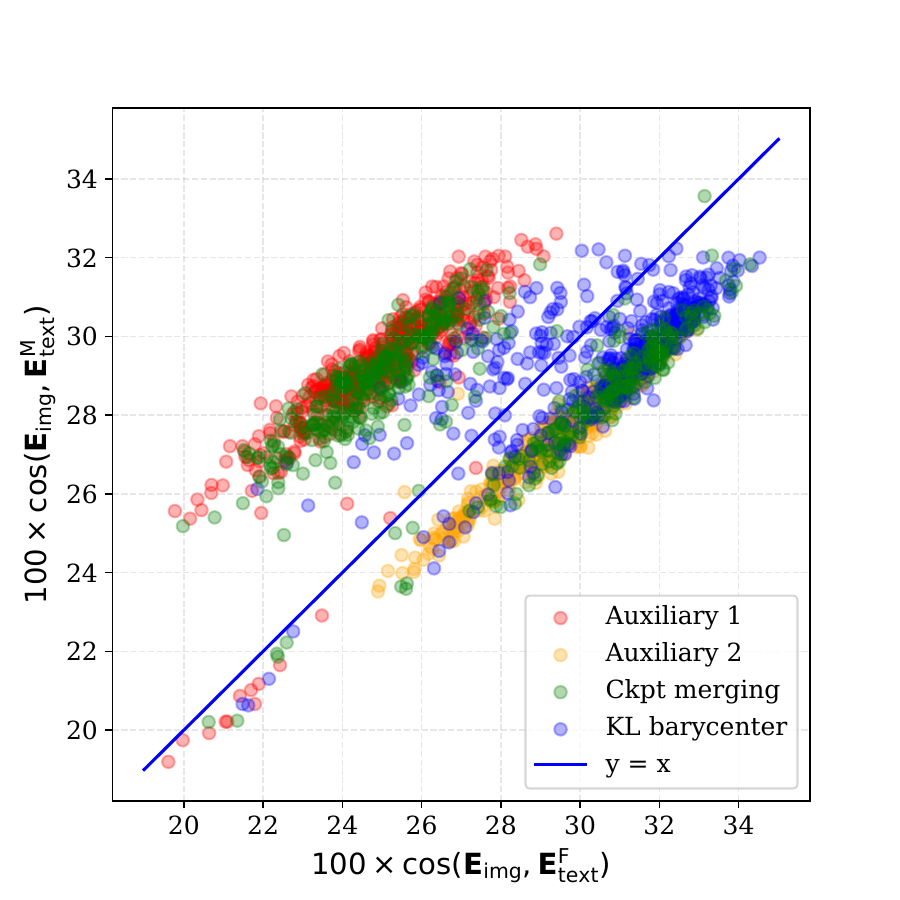}
    \caption{Empirical distribution of each model, projected onto a 2D gender semantic space. $x$ and $y$ coordinates are their CLIP scores. $\mathbf{E}_{img}$ are CLIP embeddings of each sample. $\mathbf{E}_{text}^{F}$ and $\mathbf{E}_{text}^M$ are embeddings of \textit{``a photo of a female scientist''} and \textit{``a photo of a male scientist''}. Gender neutrality and diversity can be interpreted as the middle region between the two auxiliary models' unimodal clusters; KL barycenter samples from this unexplored region, whereas checkpoint merging induces a bimodal mixtures distribution.}
    \label{fig:gender-scatter}
\end{figure}

Besides visual evaluations, we quantify gender and ethnic representations by using the CLIP encoder as a human surrogate \citep{clip}, running it on $512$ image samples generated by each method. Figure \ref{fig:gender-scatter} shows a joint 2D scatter plot of the CLIP distances of samples of different generators to two opposite gender concepts, confirming the qualitative comment for Figure \ref{fig:compare-4x4}. Figure \ref{fig:main-kdes} shows a KDE-smoothed CLIP scatter plot for the ablation study, complementing Figure \ref{fig:compare-interpolation}.
Implementation details, more image samples and CLIP plots for semantic probing are given in Appendix \ref{face}.

\begin{figure}[h]
    \centering
    \begin{subfigure}
        \centering
        \includegraphics[width=0.48\columnwidth]{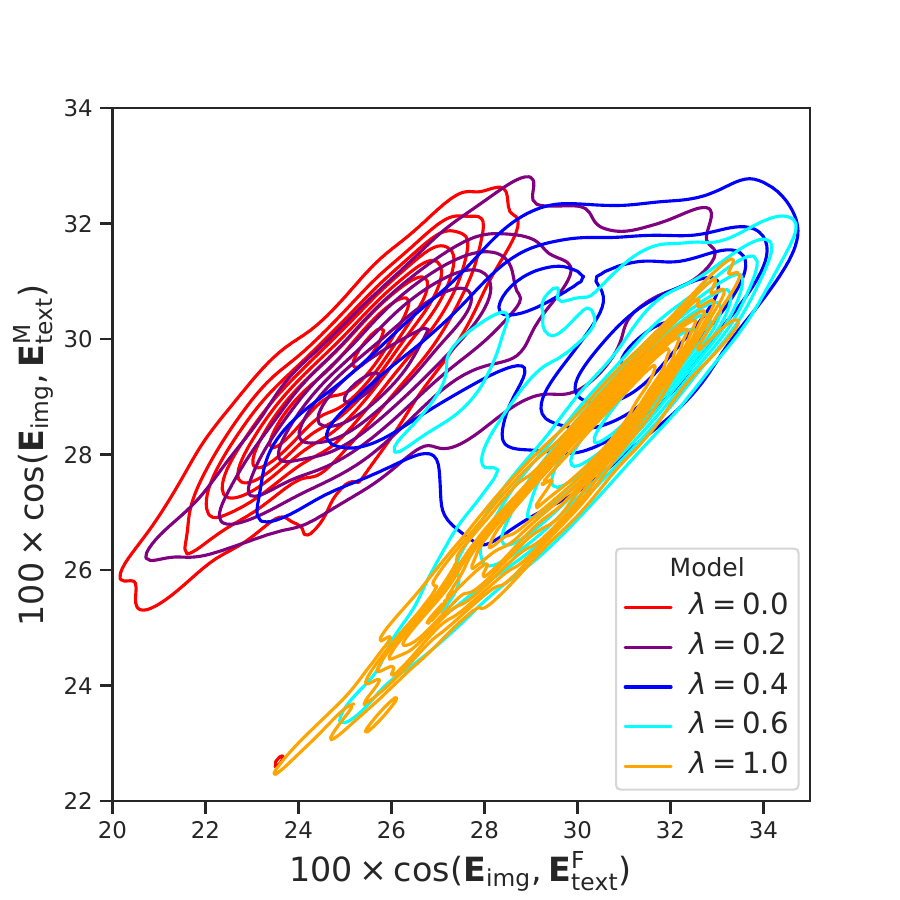}
    \end{subfigure}
    \hfill
    \begin{subfigure}
        \centering     
        \includegraphics[width=0.48\columnwidth]{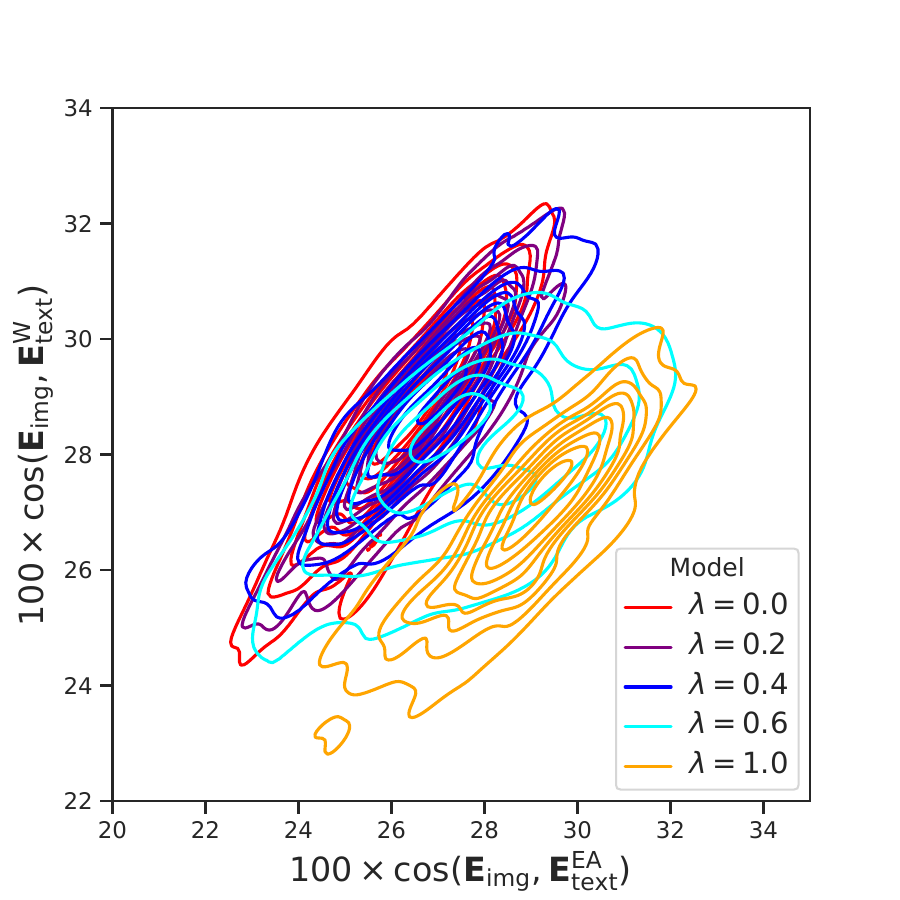}
    \end{subfigure}
    \caption{KDE contours of the KL barycenter distribution under various $\lambda_2$ (denoted as $\lambda$ in legends) values, estimated using a bandwidth of $0.8$. \textbf{Left}: $\mathbf{E}_{text}^{F}, \mathbf{E}_{text}^M$ are text embeddings of \textit{``a photo of a female scientist''} and \textit{``a photo of a male scientist''}.  \textbf{Right}: $\mathbf{E}_{text}^{EA}, \mathbf{E}_{text}^{W}$ are text embeddings of \textit{``a photo of an East Asian scientist''} and \textit{``a photo of a White scientist''}.}
    \label{fig:main-kdes}
\end{figure}

\section{CONCLUSION}
\label{sec:conclusion}
In this paper, we propose an ensemble method based on KL barycenter that can be easily implemented if the auxiliary score estimations are obtained from diffusion. Our method not only simplifies a parametric training in the low data regime, but also contributes a mathematically grounded algorithm for visual generative models. We provide a theoretical analysis of the sample complexity, showing that it is dimension-free given accurate auxiliary score estimation and closeness between optimal KL barycenter and the target distribution. The numerical experiments further demonstrate that our method performs well in the low data regime and show the difference between our method and checkpoint merging. This method can also extend to other gradient flow variants, which we leave for future work.
\section*{Acknowledgements}
The material in this paper is partly supported by the Air Force Office of Scientific Research under award number FA9550-20-1-0397 and ONR N000142412655. Support from NSF 2229012, 2312204, 2403007 is also gratefully acknowledged. J.Y. would like to thank Michael Y. Li, Yuhui Zhang, and Zeyu Wang for helpful feedback on the experiments. This work was partly conducted using the Stanford Yen Cluster, provided by the Data, Analytics, and Research Computing (DARC) group at the Stanford Graduate School of Business. We are also grateful
to the anonymous reviewers for suggesting numerous
improvements.

\bibliographystyle{plainnat}
\bibliography{ref}

\section*{Checklist}
\begin{enumerate}
 \item For all models and algorithms presented, check if you include:
 \begin{enumerate}
   \item A clear description of the mathematical setting, assumptions, algorithm, and/or model. [Yes]
   \item An analysis of the properties and complexity (time, space, sample size) of any algorithm. [Yes]
   \item (Optional) Anonymized source code, with specification of all dependencies, including external libraries. [Yes]
 \end{enumerate}

 \item For any theoretical claim, check if you include:
 \begin{enumerate}
   \item Statements of the full set of assumptions of all theoretical results. [Yes]
   \item Complete proofs of all theoretical results. [Yes]
   \item Clear explanations of any assumptions. [Yes]     
 \end{enumerate}

 \item For all figures and tables that present empirical results, check if you include:
 \begin{enumerate}
   \item The code, data, and instructions needed to reproduce the main experimental results (either in the supplemental material or as a URL). [Yes]
   \item All the training details (e.g., data splits, hyperparameters, how they were chosen). [Yes]
         \item A clear definition of the specific measure or statistics and error bars (e.g., with respect to the random seed after running experiments multiple times). [Yes]
         \item A description of the computing infrastructure used. (e.g., type of GPUs, internal cluster, or cloud provider). [Yes]
 \end{enumerate}

 \item If you are using existing assets (e.g., code, data, models) or curating/releasing new assets, check if you include:
 \begin{enumerate}
   \item Citations of the creator If your work uses existing assets. [Yes]
   \item The license information of the assets, if applicable. [Yes]
   \item New assets either in the supplemental material or as a URL, if applicable. [Not Applicable]
   \item Information about consent from data providers/curators. [Not Applicable]
   \item Discussion of sensible content if applicable, e.g., personally identifiable information or offensive content. [Not Applicable]
 \end{enumerate}

 \item If you used crowdsourcing or conducted research with human subjects, check if you include:
 \begin{enumerate}
   \item The full text of instructions given to participants and screenshots. [Not Applicable]
   \item Descriptions of potential participant risks, with links to Institutional Review Board (IRB) approvals if applicable. [Not Applicable]
   \item The estimated hourly wage paid to participants and the total amount spent on participant compensation. [Not Applicable]
 \end{enumerate}

 \end{enumerate}

\appendix

\onecolumn
\aistatstitle{Supplementary Materials}

Section \ref{More} gives additional information about the basic diffusion models, including the time reversal formulas, time discretization steps, and the current generalization error results. \ref{B} and \ref{C} provide missing proofs in Section \ref{Sec:KL_sol} and \ref{Sec4} of the main text, respectively. \ref{finance} discusses a related work in quantitative finance. \ref{sec:clarify} clarifies some experiment notations. \ref{sec:mnist-appendix} provides implementation details and additional results of the MNIST experiment. \ref{face} provides implementation details of the Stable Diffusion experiment, including the CLIP distance computation (\ref{subsub:CLIP}), the ablation study setup (\ref{sub:ablation}), more image samples (\ref{SDXL:more-samples}), and a heuristic mathematical explanation of the observation (\ref{sec:theory-inspired}). Lastly, \ref{sub:1D} provides results from a stylized Gaussian mixtures experiment. 

\section{MORE ABOUT BASIC DIFFUSION MODELS}\label{More}
\subsection{About the Time Reversal Formula}
Note that Equations (\ref{eq:genearl1}) and (\ref{eq:OU1}) are still represented as a \enquote{forward} processes. If we replace $W(t)$ by $\tilde{W}(t)$, where $\tilde{W}(t)$ is a standard $d$-dimensional Brownian motion which flows backward from time $T$ to 0, then Equation (\ref{eq:genearl1}) becomes 
\begin{equation*}
    d\hat{X}(t) = \left(f(T-t, \hat{X}(t)) - g^2(T-t) \nabla \log p_{T-t}\left(\hat{X}(t)\right)\right)dt + g(T-t)d\tilde{W}(t), \hat{X}(T) \sim p_T,
\end{equation*}
which is the reverse SDE presented in \citet{scoreSDE}.
Hence for the forward OU process, the reverse process has another representation by
\begin{equation}\label{continuous_rep}
    d\hat{X}(t) = \left( -a\hat{X}(t) - \sigma^2 \nabla \log p_{T-t}\left(\hat{X}(t)\right)\right)dt + \sigma d\tilde{W}(t), \hat{X}(T) \sim p_T.
\end{equation}

\subsection{Discretization and Backward Sampling}\label{Euler}
In this section, we follow the scheme in \citet{discretizationerror}.

Given $n$ samples $X_0^{(1)}, \ldots, X_0^{(n)}$ from $p_0$ (data distribution), we train a neural network with the loss function (\ref{trainloss}). Let $h > 0$ be the step size of the time discretization, and there are $N$ steps, hence $T = Nh$. We assume that for each time $l = 0, 1, \ldots, N$, the score estimation $s_{lh, \theta^*}$ of $\nabla \log p_t$ is obtained. In order to simulate the reverse SDE (\ref{eq:genearl1}), we first replace the score function $\nabla \log p_{T-t}$ with the estimate $s_{T-t, \theta^*}$. Next, for each $t \in [lh, (l+1)h]$, the value of this coefficient in the SDE at time $lh$, which yields the new time-discretized SDE with each $t \in [lh, (l+1)h]$,
\begin{equation}\label{timedis}
    d\hat{X}(t) = \left(-f(T-t, \hat{X}(t)) + g^2(T-t) s_{T-t, \theta^*}\left(\hat{X}_{kh}\right)\right)dt + g(T-t)dW(t)
\end{equation} and $\hat{X}(0) \sim \Pi$, where $\Pi$ is the (theoretical) stationary distribution of the forward process (\ref{eq:forwardSDE}).

There are several details in this implementation. In practice, when we use OU process as the forward, then Equation (\ref{timedis}) becomes 
    \begin{equation*}
        d\hat{X}(t) = \left( a\hat{X}(t) + \sigma^2 s_{T-t, \theta^*}\left(\hat{X}_{kh}\right)\right)dt + \sigma dW(t), t \in [lh, (l+1)h],
    \end{equation*}with $\Pi = \pi$, which is a linear SDE. In particular, $X_{(l+1)h}$ conditioned on $X_{lh}$ is Gaussian, so the sampling is easier.

In theory, we should use $\Pi \sim p_T$, which we have no access to. The above implementation takes advantage of $p_T \approx \Pi$ as $T$ is large enough. This introduces a small initialization error. 

\subsection{About the Generalization Error of Basic Diffusion Model}\label{randomfeature}
In \citet{generalizations}, a random feature model is considered as the score estimator. The basic intuition is that the generalization error with respect to the KL divergence, $D_{\text{KL}}\left(\mu\parallel \hat{\mu}\right)$ is decomposed into three terms: the training error, approximation error of underlying random feature model, and the convergence error of stationary measures. Among these three, the third one is ignorable since the fast rate of convergence of an OU process (or, from log Sobolev inequality for Gaussian random variables in \citet{logSobolev}). The first one is also small since random feature model in this setting is essentially linear regression with least squares.

Moreover, as stated in \citet{Universal}, random feature model can approximate Lipschitz functions with compact supports. However, the approximation error can be large and cause curse of dimensionality if we choose $m \sim n$. To illustrate this, we make a more general statement including smoothness considerations.

To be more precise, we introduce the following setting. We use the basic diffusion model with a forward OU process. The score function $s_{t, \theta}(x)$ is parameterized by the random feature model with $m$ random features:
\begin{equation*}
    s_{t, \theta}(x) = \frac{1}{m}A\sigma\left(Wx + Ue(t)\right) = \frac{1}{m}\sum_{j=1}^m a_j\sigma\left(w_j^Tx + u_j^Te(t)\right),
\end{equation*}
where $\sigma$ is the ReLU activation function, $A = (a_1, \ldots, a_m) \in \mathbb{R}^{d \times m}$ is the trainable parameters, $W = (w_1, \ldots, w_m)^T \in \mathbb{R}^{m \times d}$, $U = (u_1, \ldots, u_m)^T \in \mathbb{R}^{m \times d_e}$ are initially sampled from some pre-chosen distributions (related to random features) and remain frozen during the training, and $e: \mathbb{R}_+ \to \mathbb{R}^{d_e}$ is the time embedding function. The precise description is given below.

Assume that $a_j, w_j,$ and $u_j$ are drawn i.i.d. from a distribution $\rho$, then as $m \to \infty$, from strong law of large numbers, with probability 1, 
\begin{equation}\label{temp}
    s_{t, \theta}(x) \to \bar{s}_{t, \bar{\theta}}(x) = \mathbb{E}_{(w,u) \sim \rho_0}\left[  a(w,u)\sigma\left( 
w^Tx + u^Te(t) \right) \right],
\end{equation}
where $a(w,u) = \frac{1}{\rho_0(w,u)}\int a \rho(a,w,u)da$ and $\rho_0(w,u) = \int \rho(a,w,u)da$. From the positive homogeneity of ReLU function, we may assume $\left\lVert u\right\rVert + \left\lVert w\right\rVert \leq 1$. The optimal solution is denoted by $\bar{\theta^*}$ when replacing $s_{t, \theta}(x)$ in loss objective with $\bar{s}_{t, \bar{\theta}}(x)$.

Define a kernel $K_{\rho_0}(x,y) = \mathbb{E}_{(w,u) \sim \rho_0} \left[ \sigma\left( 
w^Tx + u^Te(t) \right)\sigma\left( 
w^Ty + u^Te(t) \right)  \right]$ and denote the induced reproducing kernel Hilbert space (RKHS) as $\mathcal{H}_{K_{\rho_0}}$; if there is no misunderstanding, we denote $\mathcal{H}:= \mathcal{H}_{K_{\rho_0}}$. It follows that $\bar{s}_{t, \bar{\theta}} \in \mathcal{H}$ if and only if $\left\lVert \bar{s}_{t, \bar{\theta}} \right\rVert_{\mathcal{H}} = \mathbb{E}_{(w,u) \sim \rho_0}  \left[ \left\lVert a(w,u)\right\rVert_2^2  \right] < \infty$. 

In \citet{Universal}, a notion of approximation quality called minimum width of the neural network is defined to measure the minimum number of random features needed to guarantee an accurate enough approximation with high probability. The exact definition is given below.
\begin{definition}
Given $\epsilon , \delta > 0$ and a function $f: \mathbb{R}^d \to \mathbb{R}$ with bounded norm $\left\lVert f\right\rVert_{\alpha} < \infty$, where $\alpha$ is the measure in $\mathbb{R}^d$ associated with the corresponding function space. We also denote $g^{(i)}(x) = \sigma\left( 
w^Tx + u^Te(t)\right) $. The minimum width $m_{f, \epsilon, \delta, \alpha, \rho_0}$ is defined to be the smallest $r \in \mathbb{Z}^+$ such that with probability at least $1-\delta$ over $g^{(1)}, \ldots, g^{(r)}$, 
\begin{equation*}
   \inf_{g \in \text{span}\left(  g^{(1)}, \ldots, g^{(r)} \right)}\left\lVert f - g\right\rVert_{\alpha} < \epsilon. 
\end{equation*}
\end{definition}

Moreover, for $s \geq 0$, $p \in [1,\infty]$, and $U \subset \mathbb{R}^d$ be an open and bounded set, $W^{s,p}(U)$ is the Sobolev space with order $s,p$ consists of all locally integrable function $f$ such that for each multiindex $\alpha$ with $|\alpha| \leq s$, weak derivative of $f$ exists and has finite $L^p$ norm (see \citet{Evans}). If $p = 2$, we denote $W^{s,2}(U) = H^s(U)$ to reflect the fact that it is a Hilbert space now. Finally, recall that the space of all Lipshitz functions on $U$ is the same as $W^{1, \infty}(U)$.

With these settings and definitions, we can state and prove the following generalization error for the basic diffusion model using random feature model.
\begin{theorem}
    Suppose that the target distribution $\mu$ is continuously differentiable and has a compact support, we choose an appropriate random feature $\rho_0$, and there exists a RKHS $\mathcal{H}$ such that $\bar{s}_{0, \bar{\theta^*}} \in \mathcal{H}$. Assume that the initial loss, trainable parameters, the embedding function $e(t)$ and the weighting function $\gamma(t)$ are all bounded. We further suppose that for all $t \in [0,T]$, the score function $\nabla \log p_t \in H^{s}(K) \cap W^{1, \infty}(K)$ and there exists $\gamma > 0$ such that $\left\lVert \nabla \log p_t\right\rVert_{H^s(K)} \leq \gamma$, where $K \subset \mathbb{R}^d$ is compact. Then for fixed $0 < \epsilon, \delta \ll 1$, with probability at least $1 - \delta$, we have 
    \begin{align*}
        D_{\text{KL}}\left( \mu || \hat{\mu} \right) &\lesssim \left( \frac{\tau^4}{m^3n} + \frac{\tau^2}{mn} + \frac{\tau^3}{m^2} + \frac{1}{\tau} + \frac{1}{m}  \right)\\ &+ \min \left( \left( 
\frac{s}{\log m} \right)^{s/2}, \left(  \frac{d\left(m^{1/d}-2\right)}{s\gamma^{2/s}}  \right)^{-s/2}\right) + D_{\text{KL}}\left( p_T || \pi \right), 
    \end{align*}
where $\tau$ is the training time (steps) in the gradient flow dynamics (see \citet{generalizations}), $m$ is the number of random features, $n$ is the sample size of the target distribution, $\pi$ is the stationary Gaussian distribution, $p_T$ is the distribution of the forward OU process at time $T$, $\mu$ is the target distribution, and $\hat{\mu}$ is the distribution of the generated samples.
\end{theorem}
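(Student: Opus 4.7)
The plan is to decompose the total KL error into three pieces matching the three clusters of terms in the bound: (i) the initialization gap $D_{\text{KL}}(p_T\parallel\pi)$ coming from starting the backward SDE at the Gaussian $\pi$ instead of $p_T$; (ii) the score-matching loss, which I will split further into a training error of the random feature regression and an approximation error of the RKHS target by finitely many random features; and (iii) the nonparametric approximation error of $\nabla\log p_t$ by any function in the chosen RKHS $\mathcal{H}$. The starting identity is the classical Girsanov-type bound for diffusion generative models,
\begin{equation*}
D_{\text{KL}}(\mu\parallel\hat\mu)\;\lesssim\;D_{\text{KL}}(p_T\parallel\pi)\;+\;\int_0^T\!\gamma(t)\,\mathbb{E}_{p_t}\!\left[\bigl\lVert s_{t,\theta^*}(X(t))-\nabla\log p_t(X(t))\bigr\rVert_2^2\right]dt,
\end{equation*}
which under the assumed forward OU process and bounded $\gamma$ reduces the problem to controlling the empirical-risk-minimizing score estimator $s_{t,\theta^*}$ against the truth $\nabla\log p_t$ in $L^2(p_t)$.

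Next I would insert and subtract two intermediate objects. Let $\bar s_{t,\bar\theta^*}$ be the infinite-width limit obtained in Equation~(\ref{temp}), which lies in $\mathcal{H}$ by hypothesis, and let $\Pi_{\mathcal{H}}\nabla\log p_t$ denote its best approximation inside $\mathcal{H}$. Writing $s_{t,\theta^*}-\nabla\log p_t = (s_{t,\theta^*}-\bar s_{t,\bar\theta^*})+(\bar s_{t,\bar\theta^*}-\Pi_{\mathcal H}\nabla\log p_t)+(\Pi_{\mathcal H}\nabla\log p_t-\nabla\log p_t)$ and using $\lVert a+b+c\rVert^2\le 3(\lVert a\rVert^2+\lVert b\rVert^2+\lVert c\rVert^2)$ splits the score error into a \emph{training/generalization} piece, an RKHS \emph{representation} piece, and a pure \emph{approximation} piece. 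For the first two I would invoke the random-feature regression analysis in \citet{generalizations}: score matching with an OU forward becomes a quadratic loss, so gradient flow on the outer weights $A$ is a linear regression in feature space, and the usual decomposition under bounded initial loss, bounded embedding $e(t)$, and bounded trainable parameters yields exactly the $\tau^4/(m^3 n)+\tau^2/(mn)+\tau^3/m^2+1/\tau+1/m$ structure. I would cite their SDE-level propagation lemma as a black box rather than rederive it.

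For the nonparametric approximation term $\inf_{g\in\mathcal H}\lVert g-\nabla\log p_t\rVert_{L^2(p_t)}$, I would use the minimum-width definition together with two complementary bounds on the random-feature approximation of a $H^s$-regular function on a bounded domain. A Sobolev-embedding / entropy argument gives the logarithmic-regime rate $(s/\log m)^{s/2}$ (good when $d$ is small relative to $m$), while the volumetric covering of the unit ball in $H^s$ together with the smoothness–bandwidth tradeoff produces the algebraic rate $(d(m^{1/d}-2)/(s\gamma^{2/s}))^{-s/2}$ (relevant in the genuinely high-dimensional regime). Taking the minimum of these and using that this bound is uniform in $t\in[0,T]$ (since $\lVert\nabla\log p_t\rVert_{H^s}\le\gamma$) handles the time-integrated contribution. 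Finally, $D_{\text{KL}}(p_T\parallel\pi)$ appears directly from the initialization step and is small exponentially in $T$ by the log-Sobolev inequality for the OU semigroup, but I keep it explicit as stated. A union bound over the high-probability events in the random-feature training analysis and in the minimum-width statement controls the overall probability at $1-\delta$.

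The main obstacle is coupling the two probabilistic statements cleanly: the random-feature regression bound of \citet{generalizations} requires the truth to lie in the RKHS induced by $\rho_0$, whereas here $\nabla\log p_t$ only lies in a Sobolev space, so I must justify that replacing $\nabla\log p_t$ by its RKHS projection $\Pi_{\mathcal H}\nabla\log p_t$ as the ``target'' in the regression step does not blow up any constants and that the RKHS norm of the projection is bounded uniformly in $t$ by the $H^s$-bound $\gamma$. Once that measurability/uniformity bookkeeping is handled, assembling the three pieces with the triangle inequality and the initial Girsanov bound yields the stated inequality.
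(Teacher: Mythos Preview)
Your overall architecture is correct and matches the paper: the Girsanov-type bound gives $D_{\text{KL}}(p_T\parallel\pi)$ plus the time-integrated score-matching loss, and the latter splits into a training/generalization piece and an approximation piece. The paper is simply much terser about this --- it says the entire decomposition, together with the first bracketed group of terms $\tau^4/(m^3n)+\tau^2/(mn)+\tau^3/m^2+1/\tau+1/m$ and the $D_{\text{KL}}(p_T\parallel\pi)$ term, is literally Theorem~1 of \citet{generalizations}, so no rederivation is needed.

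Where you diverge is in the approximation term. The paper does not build it from a Sobolev-embedding/entropy argument or a volumetric covering as you propose; it cites Theorem~35 of \citet{Universal}, which directly gives the minimum width $m_{f,\epsilon,\delta,\alpha,\rho_0}$ needed for a random-feature model to $\epsilon$-approximate any $f\in H^s(K)$ with $\lVert f\rVert_{H^s}\le\gamma$. Inverting that minimum-width bound (the exponent there is a $\min$ of two expressions, which becomes the $\min$ of the two rates in the theorem) immediately yields the second line of the inequality. Your proposed route would in principle reproduce such a bound, but it is substantially more work and the sketch you give (``entropy argument gives $(s/\log m)^{s/2}$'', ``volumetric covering gives the algebraic rate'') does not yet pin down those exact exponents.

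Relatedly, your ``main obstacle'' --- controlling the RKHS norm of the projection $\Pi_{\mathcal H}\nabla\log p_t$ so that the regression analysis of \citet{generalizations} applies --- is a self-inflicted difficulty. The minimum-width result in \citet{Universal} is stated for arbitrary $H^s$ targets, not RKHS elements, so there is no need to pass through an intermediate RKHS projection at all: the finite random-feature model approximates $\nabla\log p_t$ directly, and the compact-support assumption (Lemma~1 in \citet{generalizations}) confines $(X(t),t)$ to a compact rectangle in $\mathbb{R}^{d+1}$ so the Sobolev bound applies uniformly in $t$. Dropping the three-way split in favor of a two-way split (trained estimator vs.\ best $m$-feature approximant, then best $m$-feature approximant vs.\ $\nabla\log p_t$) removes the bookkeeping you were worried about.
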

\begin{proof}
The proof follows exactly the same as in the proof of Theorem 1 in \citet{generalizations}. The only extra work is to compute the universal approximation error of the random feature model for Sobolev functions on a compact domain. From compacted supported assumption (Lemma 1 in \citet{generalizations}), the forward process defines a random path $\left(X(t),t\right)_{t \in [0,T]}$ contained in a compact rectangular domain in $\mathbb{R}^{d+1}$. 

Theorem 35 in \citet{Universal} states the existence of a random feature $\rho_0$ such that for any $f \in H^s(K)$ with $\left\lVert f\right\rVert_{H^s(K)} \leq \gamma$, $m_{f, \epsilon, \delta, \alpha, \rho_0} \lesssim \frac{s^2 \gamma^{2+4/s}d^2}{\epsilon^{2+4/s}}\log\left(\frac{1}{\delta}\right)\exp\left( 
 \min \left( d\log\left(\frac{\gamma^2}{\epsilon^2d} + 2\right), \frac{\gamma^2}{\epsilon^2} \log\left( \frac{d\epsilon^2}{\gamma^2} + 2\right) \right)  \right)$, which implies the approximation error term.
\end{proof}

\begin{remark}The random feature model has two difficulties in implementation.

If $m$, $T$, and $\tau$ are large enough, then the generalization error is small regardless to the sample size $n$. However, the choice of random feature $\rho_0$ is hard in practice, especially in neither \citet{Universal} nor \citet{generalizations} the method to choose $\rho_0$ is specified. Therefore, the assumption that $\rho_0$ is appropriately chosen is very strong.
        
Even if $\rho_0$ is appropriately chosen, if we let $m \sim n$ and try to find an optimal early stopping time as in \citet{generalizations}, the term $\min \left( \left( 
\frac{s}{\log n} \right)^{s/2}, \left(  \frac{d\left(n^{1/d}-2\right)}{s\gamma^{2/s}}  \right)^{-s/2}\right)$ still dominates and shows the curse of dimensionality.
\end{remark}

\section{PROOF OF RESULTS IN SECTION \ref{Sec:KL_sol}}\label{B}
Before the proofs, we note the strict convexity of the KL barycenter problems via a simple lemma.
\begin{lemma}\label{unique}
    For any Polish space $S$, the KL barycenter problem $ \min_{\mu \in \mathcal{P}(S)} \sum_{i=1}^k \lambda_iD_{\text{KL}}\left(\mu \parallel P_i\right) \text{s.t.} \sum_{i=1}^{k}\lambda_i = 1$ is strictly convex.
\end{lemma}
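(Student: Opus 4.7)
The goal is to show strict convexity in $\mu$ of the objective
$$F(\mu) := \sum_{i=1}^{k}\lambda_i D_{\text{KL}}(\mu \parallel P_i),$$
over $\mu \in \mathcal{P}(S)$. The plan is to prove the single-reference statement first, then sum.

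\emph{Step 1: strict convexity of $\mu \mapsto D_{\text{KL}}(\mu \parallel P)$ for a fixed $P \in \mathcal{P}(S)$.} Fix $t \in (0,1)$ and distinct $\mu_1, \mu_2 \in \mathcal{P}(S)$, and write $\mu_t := t\mu_1 + (1-t)\mu_2$. I would split into cases. If either $\mu_1 \not\ll P$ or $\mu_2 \not\ll P$, then the corresponding $D_{\text{KL}}(\mu_i \parallel P) = +\infty$, so $tD_{\text{KL}}(\mu_1 \parallel P) + (1-t)D_{\text{KL}}(\mu_2 \parallel P) = +\infty$ and strict convexity holds trivially (with the convention $r + \infty = \infty$ for any finite $r$). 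Otherwise both $\mu_1, \mu_2 \ll P$; let $p_i := d\mu_i/dP$. Then $\mu_t \ll P$ with density $p_t = tp_1 + (1-t)p_2$, and
$$D_{\text{KL}}(\mu_t \parallel P) = \int \varphi(p_t)\, dP, \qquad \varphi(x) := x\log x,$$
with the convention $\varphi(0) = 0$. Since $\varphi$ is strictly convex on $[0,\infty)$, the pointwise inequality
$$\varphi(tp_1(\omega) + (1-t)p_2(\omega)) \le t\varphi(p_1(\omega)) + (1-t)\varphi(p_2(\omega))$$
holds everywhere, with strict inequality on $\{p_1 \neq p_2\}$. Because $\mu_1 \neq \mu_2$ and both have density with respect to $P$, the set $\{p_1 \neq p_2\}$ has strictly positive $P$-measure, so integration preserves strictness and gives $D_{\text{KL}}(\mu_t \parallel P) < t D_{\text{KL}}(\mu_1 \parallel P) + (1-t) D_{\text{KL}}(\mu_2 \parallel P)$.

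\emph{Step 2: passing to the weighted sum.} Since $\boldsymbol{\lambda} \in \Delta_k$, there exists at least one index $i_0$ with $\lambda_{i_0} > 0$. Each term $\lambda_i D_{\text{KL}}(\cdot \parallel P_i)$ is convex by Step 1 (strictly so when $\lambda_i > 0$), and the sum of a strictly convex function with convex functions is strictly convex. Hence $F$ is strictly convex on $\mathcal{P}(S)$, which is the claim. Uniqueness of the minimizer (when the infimum is attained) then follows as an immediate corollary, justifying the uses of Lemma \ref{unique} in Theorems \ref{sol1} and \ref{sol2}.

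\emph{Anticipated difficulty.} The only delicate point is the bookkeeping around the value $+\infty$: I must confirm that $F$ is well-defined (never $-\infty$, so $F$ takes values in $[0,\infty]$) and that the strict inequality is preserved under the case split. This is a formality rather than a technical obstacle, since $D_{\text{KL}} \ge 0$ always. No further machinery (no variational formula, no Donsker--Varadhan representation) is needed; the proof rests entirely on strict convexity of $\varphi(x) = x \log x$ and the Radon--Nikodym theorem on the Polish space $S$.
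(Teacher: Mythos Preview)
Your proposal is correct and follows essentially the same approach as the paper: both arguments reduce to the strict convexity of $\mu \mapsto D_{\text{KL}}(\mu \parallel P_i)$ in its first argument and then pass to the weighted sum. Your version is somewhat more careful than the paper's, in that you explicitly handle the $+\infty$ case when absolute continuity fails and you note that at least one $\lambda_{i_0}>0$ is needed for strictness of the sum, whereas the paper simply assumes $\mu_1,\mu_2 \ll P_i$ for all $i$ and invokes strict convexity of KL as a known fact.
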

\begin{proof}
 Let $t \in (0,1)$ and $\mu_1, \mu_2 \in S$ such that $\mu_1 \ll P_i$ and $\mu_2 \ll P_i$, for each $i = 1,2,\ldots, k$, then
 \begin{align*}
      \sum_{i=1}^k \lambda_i D_{\text{KL}}\left(  t\mu_1 + (1-t)\mu_2 \parallel P_i \right) &< \sum_{i=1}^k \lambda_i \left[ t D_{\text{KL}}\left(\mu_1 \parallel P_i\right) + (1-t)D_{\text{KL}}\left(\mu_2 \parallel P_i\right)\right]\\
      &= t\sum_{i=1}^k \lambda_i D_{\text{KL}}\left(\mu_1 \parallel P_i\right) + (1-t)\sum_{i=1}^k \lambda_iD_{\text{KL}}\left(\mu_2 \parallel P_i\right),
 \end{align*}where the inequality follows from the strictly convexity of KL divergence in terms of $\mu$ with fixed $P_i$. Therefore, the KL barycenter problem is strictly convex.
\end{proof}

\subsection{Proof of Theorem \ref{sol1}}\label{B1}
\begin{proof}
It suffices to consider a probability measure $\mu \in \mathcal{P}(\mathbb{R}^d)$ with absolutely continuous density $q(x)$ (otherwise the KL divergence is $\infty$) and show the existence. If there is no confusion, we use the density and measure interchangeably. We denote $\mathcal{P}_{\text{ac}}(\mathbb{R}^d)$ as the space of all absolutely continuous distributions and define a functional $F: \mathcal{P}_{\text{ac}}(\mathbb{R}^d) \to \mathbb{R}$ that for $x \in \mathbb{R}^d,$
\begin{equation*}
    F(q,x) = \sum_{i=1}^k \lambda_i q(x)\log\left( \frac{q(x)}{p_i(x)} \right).
\end{equation*}
Therefore, the barycenter problem becomes
\begin{equation*}
    \min_{\mu \in \mathcal{P}_{\text{ac}}(\mathbb{R}^d)} 
\int_{x \in \mathbb{R}^d} F(q,x)dx      \quad \text{s.t.} \sum_{i=1}^{k}\lambda_i = 1 \text{ and } \int_{x \in \mathbb{R}^d} q(x)dx = 1,
\end{equation*}which is a variational problem with a subsidiary condition (\citep{CalVariation}). 
Therefore, from calculus of variations, a necessary condition for $q$ to be an extremal of the variational problem is for some constant $m$
\begin{equation*}
    \frac{\partial}{\partial q}F(q) + m = 0.
\end{equation*}
Hence, the optimal solution is 
\begin{equation*}
    q^*(x) = \frac{\prod _{i=1}^k p_i(x)^{\lambda_i}}{\int \prod _{i=1}^k p_i(x)^{\lambda_i} dx}. 
\end{equation*}
\end{proof}

\subsection{Proof of Theorem \ref{sol2}}\label{B2}
Before the proof of Theorem \ref{sol2}, we review a consequence of Girsanov's theorem (Theorem 8 in \citet{discretizationerror}). We will use a similar technique as in \citet{discretizationerror}) to prove Theorem \ref{sol2}.

\begin{theorem}\label{Girsanov}
Suppose $Q \in \mathcal{P}(C([0, T ] : \mathbb{R}^d))$. For $t \in [0,T]$, let $\mathcal{L}(t) = \int_0^t b(s)dB(s)$ and the stochastic exponential $\mathcal{E}\left(\mathcal{L}\right)(t) = \exp\left( \int_0^t b(s)dB(s) -  \frac{1}{2}\int_0^t \left \Vert b(s)\right \Vert_2^2ds\right) $, where $B$ is a $Q$-Brownian motion. Assume $\mathbb{E}_{Q}\left[\int_0^T \left \Vert b(s)\right \Vert_2^2 ds\right] < \infty$. Then $\mathcal{L}$ is a square integrable $Q$-martingale. Moreover, if $\mathbb{E}_{Q}\left[ \mathcal{E}\left(\mathcal{L}\right)(T) \right] = 1,$ then $\mathcal{E}\left(\mathcal{L}\right)$ is a true $Q$-martingale and the process $B(t) - \int_0^t b(s)ds$ is a $P$-Brownian motion, where $P$ is a probabilty measure such that $P = \mathcal{E}\left(\mathcal{L}\right)(T) Q$.
\end{theorem}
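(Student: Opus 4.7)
The plan is to establish the four assertions in sequence using Itô calculus and Lévy's characterization of Brownian motion. For the square-integrability and martingale property of $\mathcal{L}$, the Itô isometry gives $\mathbb{E}_Q[\mathcal{L}(t)^2] = \mathbb{E}_Q\left[\int_0^t \|b(s)\|_2^2\, ds\right]$, which is finite for every $t \in [0,T]$ under the hypothesis. The martingale property then follows from the standard construction of the Itô integral as an $L^2$-limit of integrals of simple adapted processes, each of which is trivially a martingale, and the $L^2$-continuity of the construction preserves that property in the limit.

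Next I would establish the identity $d\mathcal{E}(\mathcal{L})(t) = \mathcal{E}(\mathcal{L})(t)\, b(t)^\top dB(t)$ by applying Itô's formula to $f(x,y) = \exp(x - y/2)$ at the semimartingale $\bigl(\mathcal{L}(t),\ \int_0^t \|b(s)\|_2^2\, ds\bigr)$: the second-order term from the quadratic variation $d\langle \mathcal{L}\rangle_t = \|b(t)\|_2^2\, dt$ exactly cancels the bounded-variation contribution from the second coordinate, leaving only a stochastic integral. Hence $\mathcal{E}(\mathcal{L})$ is a nonnegative local $Q$-martingale, and as such a $Q$-supermartingale by Fatou's lemma. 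The assumption $\mathbb{E}_Q[\mathcal{E}(\mathcal{L})(T)] = 1 = \mathcal{E}(\mathcal{L})(0)$ forces equality in the supermartingale inequality at every intermediate time (by a standard tower-property argument), upgrading $\mathcal{E}(\mathcal{L})$ to a true $Q$-martingale.

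Finally, set $\tilde{B}(t) := B(t) - \int_0^t b(s)\, ds$ and verify the hypotheses of Lévy's characterization under $P = \mathcal{E}(\mathcal{L})(T)\cdot Q$. The process $\tilde B$ is continuous and, since the drift has finite variation, inherits the quadratic covariation $\langle \tilde B^i, \tilde B^j\rangle_t = \delta_{ij} t$ from $B$. The nontrivial point is the $P$-martingale property; by the abstract Bayes' rule a process $M$ is a $P$-martingale iff $M\,\mathcal{E}(\mathcal{L})$ is a $Q$-martingale, so I would apply the Itô product rule to $\tilde B(t)\,\mathcal{E}(\mathcal{L})(t)$ and check that the drift vanishes---the $-b(s)\,ds$ contribution from $\tilde B$ is precisely cancelled by the cross-variation between $dB$ in $\tilde B$ and the $b(t)^\top dB(t)$ appearing in $d\mathcal{E}(\mathcal{L})$. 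Since continuous $P$-martingales with covariation $\delta_{ij} t$ are $P$-Brownian motions, the conclusion follows.

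The main obstacle is the careful handling of localization: the preceding calculations initially only produce local martingale statements, and promoting them to true martingales requires ruling out loss of mass along a localizing sequence $\tau_n \uparrow T$. The hypothesis $\mathbb{E}_Q[\mathcal{E}(\mathcal{L})(T)] = 1$ is precisely what achieves this---it plays the role that Novikov's condition plays in the classical statement---and an analogous uniform-integrability check for the localized products $\tilde B(t\wedge\tau_n)\,\mathcal{E}(\mathcal{L})(t\wedge\tau_n)$ is needed to exchange limits with conditional expectations in the Bayes' rule step. This integrability is what ultimately rules out pathologies in which the change-of-measure procedure fails despite $\mathcal{E}(\mathcal{L})$ being a local martingale.
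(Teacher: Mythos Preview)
The paper does not actually prove this theorem. It is introduced with the sentence ``we review a consequence of Girsanov's theorem (Theorem 8 in \citet{discretizationerror})'' and is then used as a black-box tool in the proofs of Theorem~\ref{sol2} and Lemma~\ref{backwardsampling}. So there is no in-paper argument to compare against; your proposal supplies what the paper simply cites.

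That said, your sketch is the standard textbook route and is essentially correct. One small remark on the final paragraph: you are slightly overstating the integrability burden. L\'evy's characterization only requires that $\tilde B$ be a continuous \emph{local} $P$-martingale with $\langle \tilde B^i,\tilde B^j\rangle_t=\delta_{ij}t$, not a true martingale. Via the abstract Bayes rule (which localizes cleanly once $\mathcal{E}(\mathcal{L})$ is a true $Q$-martingale), it suffices to show that $\tilde B\,\mathcal{E}(\mathcal{L})$ is a $Q$-local martingale, and your It\^o product computation already delivers exactly that. So no additional uniform-integrability argument for the products $\tilde B(t\wedge\tau_n)\,\mathcal{E}(\mathcal{L})(t\wedge\tau_n)$ is needed; the only place where the hypothesis $\mathbb{E}_Q[\mathcal{E}(\mathcal{L})(T)]=1$ does real work is in upgrading $\mathcal{E}(\mathcal{L})$ itself from supermartingale to martingale, which you handled correctly.
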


In most applications of Girsanov's theorem, we need to check a sufficient condition to hold, known as Novikov's condition. In the context of Theorem \ref{Girsanov}, Novikov's condition is
\begin{equation}\label{checkGirsanov}
    \mathbb{E}_{Q}\left[\exp\left( \frac{1}{2}\int_0^T\left \Vert b(s)\right \Vert_2^2 ds \right)\right] < \infty.
\end{equation}

Now we begin the proof of Theorem \ref{sol2}.
\begin{proof}
From Lemma \ref{unique}, it suffices to show the existence. Let $\alpha \in \mathcal{P}(C([0,T]:\mathbb{R}^d))$ with initial distribution $\alpha_0$. We denote $\alpha(0)$ as the initial distribution of the process whose law is measure $\alpha$ as notation. From the chain rule of KL divergence, we have 
\begin{align*}
 \sum_{i=1}^k \lambda_iD_{\text{KL}}\left(\alpha \parallel P_i\right) &= \sum_{i=1}^k \lambda_i D_\text{KL}\left(\alpha_0 \parallel \mu_i\right)\\ &+ \mathbb{E}_{z \sim \alpha_0}\left[ \sum_{i=1}^k \lambda_i D_\text{KL} \left(        \alpha\left(.|\alpha(0) = z\right) \parallel  P_i\left(.|P_i(0) = z\right)    \right)  \right],
\end{align*}
where the first term solves the KL barycenter problem with respect to the initial distributions, and the second term solves the KL barycenter problem with all reference processes have the same initial distribution. 
Therefore, to finish the proof, we can assume for each $i = 1, \ldots, k$, $\mu_i \sim \mu$, the same initial distribution.

Since we are finding the minimizer of the weight sum of KL divergences, it is sufficient to assume that $\alpha$ is the law of a diffusion process which is a strong solution of an SDE with the same diffusion (volatility) coefficient as all reference processes:
\begin{equation*}
    dX(t) =  a\left(t, X(t)\right)dt + \sigma(t) dB(t), X(0) \sim \mu,
\end{equation*} where $B$ is a standard Brownian motion, and otherwise the KL divergence would be $\infty$. For now, we assume that $a(t,x)$ is uniformly bounded.

When applying Girsanov's theorem, it is more convenient to view different path measures on $\mathcal{P}(C([0,T]:\mathbb{R}^d)$ as the different laws of the same single stochastic process. For notational convenience, we denote the single process as $\{Z(t)\}_{t \in [0,T]}$.

For each $i = 1, \ldots, k$, we can apply the Girsanov's theorem to $Q = \alpha$ and 
\begin{equation*}
    b(t) = \frac{1}{\sigma(t)}\left(a_i(t, Z(t)) - a(t, Z(t))\right)
\end{equation*} in the setting of Theorem \ref{Girsanov}. Therefore, under the measure $P=\mathcal{E}\left(\mathcal{L}\right)(T) \alpha$, there exists a Brownian motion $\{\beta(t)\}_{t \in [0,T]}$ such that \begin{equation*}
    dB(t) = \frac{1}{\sigma(t)}\left(a_i(t, Z(t)) - a(t, Z(t))\right)dt + d\beta(t).
\end{equation*}
Since under the measure $\alpha$, with probability 1, 
\begin{equation*}
    dZ(t) =  a\left(t, Z(t)\right)dt + \sigma(t) dB(t), Z(0) \sim \mu,
\end{equation*}
then this also holds $P$-almost surely, which implies that $P$-almost surely, $ Z(0) \sim \mu$, and 
\begin{align*}
    dZ(t) &= a\left(t, Z(t)\right)dt  + \sigma(t)\left(\frac{1}{\sigma(t)}\left[  a_i(t, Z(t)) - a(t, Z(t))\right]dt + d\beta(t)\right)\\
    &=  a_i\left(t, Z(t)\right) dt + \sigma(t) d\beta(t).
\end{align*} In other words, $P \sim P_i$ in law.

Therefore, \begin{align*}
    D_{\text{KL}}\left(\alpha \parallel P_i\right) &= \mathbb{E}_{\alpha}\left[ \log\left( \frac{d\alpha}{dP_i} \right) \right]\\
    &= \mathbb{E}_{\alpha}\left[ \log\left(\frac{1}{\mathcal{E}\left( \mathcal{L} \right)(T)}\right) \right]\\
    &= \frac{1}{2} \mathbb{E}_{\alpha}\left[ \int_0^T \frac{1}{\sigma(t)^2} \left \Vert a_i(t, Z(t)) - a(t, Z(t)\right \Vert_2^2 dt \right] \\ &+ \mathbb{E}_{\alpha}\left[\int_0^T \frac{1}{\sigma(t)^2}\left(  a(t, Z(t)) - a_i(t, Z(t)\right) dt  \right]\\
    &= \frac{1}{2} \mathbb{E}_{\alpha}\left[ \int_0^T \frac{1}{\sigma(t)^2} \left \Vert a_i(t, Z(t)) - a(t, Z(t)\right \Vert_2^2 dt \right]
\end{align*}since Ito integral with regular integrand is a true martingale. 

Therefore, the objective function of process level KL barycenter problem becomes
\begin{equation*}
 \frac{1}{2} \sum _{i=1}^k \lambda_i \mathbb{E}_{\alpha}\left[ \int_0^T \frac{1}{\sigma(t)^2}\left \Vert a(t, Z(t)) - a_i(t, Z(t)\right \Vert_2^2 dt \right],    
\end{equation*}given we assume that all of reference laws have the same initial distribution.
Therefore, as a functional optimization problem, the minimizer $a^*(t,x) =  \sum_{i=1}^{k}\lambda_i a_i(t,x)$, which is indeed uniformly bounded and optimal, thus this finishes the proof.
\end{proof}

\section{PROOF OF RESULTS IN SECTION \ref{Sec4}}\label{C}
\subsection{Preliminaries and Basic Tools}
\subsubsection{Preliminaries}
We include this subsection to present basic definitions and notations used in our proofs.

\begin{definition}
$S$ is a Polish space equipped with Borel $\sigma$-algebra $\mathcal{B}(S)$, $\{P_n\}_{n \in \mathbb{N}} \subset \mathcal{P}(S)$ is a set of probability measures, we say $P_n$ converges to $P \in \mathcal{P}(S)$ weakly if and only if for each bounded and continuous function $f: S \to \mathbb{R}$, as $n \to \infty$,
\begin{equation*}
  \int_S f(x)dP_n(x) \to \int_S f(x)dP(x).  
\end{equation*}   
\end{definition}

\begin{definition}
Given two measurable spaces $\left(X, \mathcal{F}\right)$ and $\left(Y, \mathcal{G}\right)$, $f: X \to Y$ is a measurable function, and $\left(X, \mathcal{F}, \mu\right)$ is a (positive) measure space. The pushforward of $\mu$ is defined to be a measure $f_{\#}\mu$ such that for any $B \in \mathcal{G}$,
\begin{equation*}
    f_{\#}\mu(B) = \mu\left(f^{-1}(B)\right).
\end{equation*}
\end{definition}

\begin{definition}
A differentiable function $F: \mathbb{R}^d \to \mathbb{R}$ is called $L$-smooth if for any $x,y \in \mathbb{R}^d$, 
\begin{equation*}
    \lvert F(x) - F(y) - F'(y)(x-y) \rvert \ \leq \frac{L}{2}\left\Vert y-x\right\Vert_2^2.
\end{equation*}
\end{definition}

\begin{definition}
A stochastic process $\{X_t\}_{t \in [0,T]}$ is called a local martingale if there exists a sequence of nondecreasing stopping times $\{T_n\}_{n \in \mathbb{N}}$ such that $T_n \to T$ and $\{X_{t \wedge T_n}\}_{t \in [0,T]}$ is a true martingale.    
\end{definition}

Next we define some notations and stochastic processes that will be used in the following proofs. 

Recall the process (\ref{auxbackward}) is a backward SDE with score terms replaced by the estimations. We say for each $i = 1,2, \ldots, k$, process $\bar{X}_i$ is the theoretical backward process with exact score terms: 
\begin{equation}\label{eq: auxtheobackward}
    d\bar{X}_i(t) = \left( a\bar{X}_i(t) + \sigma^2\nabla \log p^i_{T-t}\left(\bar{X}_i(t)\right)\right)dt + \sigma dW_i(t), \bar{X}_i(0) \sim p^i_T.
\end{equation}
The corresponding forward process is denoted as $X_i$:
\begin{equation}\label{auxtheoforward}
    dX_i(t) = -a X_i(t)dt + \sigma dW(t), X_i(0) \sim p_i \sim \mu_i.
\end{equation}
We denote the marginal density of $X_i(t)$ as $p^i_t$; when $t = 0$, we use the notation $p_i \sim \mu_i$.
Process (\ref{pracbarycenter}) is a time-discretized SDE to be implemented in practice. It can be viewed as an approximation of the theoretical barycenter process (denoted as $\tilde{Y}$) of the backward SDEs of the form (\ref{eq: auxtheobackward}):
\begin{equation}\label{theobarycenterprocess}
d\tilde{Y}(t) = \left( a\tilde{Y}(t) + \sigma^2  \sum_{i=1}^k \lambda_i \nabla \log p^i_{T-t}\left(\tilde{Y}(t)\right)\right)dt + \sigma dW(t), \tilde{Y}(0) \sim \gamma^d_T,
\end{equation}where $\gamma^d_T$ is the distribution level KL barycenter at time $T$ with respect to the reference measures $\{p^1_T, \ldots, p^k_T\}$. When $T$ is large, $\gamma^d_T$ is approximated by $\pi$ in Equation (\ref{pracbarycenter}).
In theory, there is corresponding forward process with respect to process (\ref{theobarycenterprocess}): 
\begin{equation}\label{theoforwardbarycenter}
    dY(t) = -a X(t)dt + \sigma dW(t), Y(0) \sim \tilde{Y}(T).
\end{equation}

For a fixed $\boldsymbol{\lambda}$, we denote $p_{\boldsymbol{\lambda},t}$ as the marginal distribution of process (\ref{theoforwardbarycenter}) at time $t$; when $t=0$, we ignore the time subscript. 

\subsubsection{Basic Algorithms}
In this section, we recall the Frank-Wolfe method \citep{Frank-Wolfe}, which is used to solve an optimization problem with $L$-smooth convex function $f: \mathcal{X} \to \mathbb{R}$ on a compact domain $\mathcal{X}$: 
\begin{equation}\label{example}
    \min_{x \in \mathcal{X}}f(x)
\end{equation} 
\begin{algorithm}
\caption{(vanilla) Frank-Wolfe with function-agnostic step size rule \citep{Frank-Wolfe}}
\label{alg:fw}
\begin{algorithmic}[1]
\State \textbf{Input:} Start atom $x_0 \in \mathcal{X}$, objective function $f$, smoothness $L$
\State \textbf{Output:} Iterates $x_1, \ldots, x_{\tau} \in \mathcal{X}$
\For{$\tau = 0 \text{ to } \ldots$}
    \State $v_{\tau} \gets \arg\min_{v \in \mathcal{X}} \langle \nabla f(x_{\tau}), v \rangle$
    \State $\gamma_{\tau} \gets \begin{cases} 
1 & \text{if }  \tau = 1 \\
    \frac{2}{\tau + 3} & \text{if } \tau > 1 
\end{cases}$
    \State $x_{\tau+1} \gets x_{\tau} + \gamma_{\tau} (v_{\tau} - x_{\tau})$
\EndFor
\end{algorithmic}
\end{algorithm}
To measure the error of the algorithm, we define for each $\tau \geq 1$, the primary gap is
\begin{equation*}
    h_{\tau} = h(x_{\tau}) = f(x_{\tau}) - f(x^*),
\end{equation*}where $x^*$ is the minimizer of problem (\ref{example}).

\subsubsection{Basic Lemmas}\label{lemmaSec}
In this subsection, we first list some basic lemmas (Lemma \ref{10} to \ref{13}) that serve as essential tools in our proofs. All proofs can be found in \citep{discretizationerror}.

\begin{lemma}\label{10}
Suppose that Assumption \ref{compact} and \ref{Lipschitzanderror} hold. For each $i = 1,2,\ldots, k$, let $Z_i(t)$ denote the forward auxiliary process (\ref{auxtheoforward}), then for all $t \geq 0$,
\begin{equation*}
    \mathbb{E}\left[ \left \Vert Z_i(t)\right\Vert_2^2 \right] \leq d \vee M \text{ and }\mathbb{E}\left[ \left \Vert \nabla \log p^i_t\left(Z_i(t)\right)\right\Vert_2^2 \right] \leq Ld.
\end{equation*}
\end{lemma}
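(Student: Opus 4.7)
The plan is to establish the two moment bounds separately, using standard OU-process calculus for the first and the classical score identity for the second.

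For the first bound, I would write out the closed-form strong solution of the OU SDE (\ref{auxtheoforward}), namely $Z_i(t) = e^{-at} Z_i(0) + \sigma \int_0^t e^{-a(t-s)} dW(s)$, and apply Ito isometry, using the independence of $Z_i(0)$ and the driving Brownian motion, to obtain the exact identity
\[
\mathbb{E}\left[\|Z_i(t)\|_2^2\right] = e^{-2at}\, \mathbb{E}\left[\|Z_i(0)\|_2^2\right] + \frac{\sigma^2 d}{2a}\bigl(1 - e^{-2at}\bigr).
\]
Under the normalization $\sigma^2/(2a) = 1$ (consistent with the description of the stationary distribution $\pi \sim \mathcal{N}(0, \tfrac{\sigma^2}{2a} I)$ in Section \ref{DDPM}), the right-hand side is a convex combination of $\mathbb{E}[\|Z_i(0)\|_2^2] \leq M$ (Assumption \ref{compact}) and $d$, and is therefore bounded by $M \vee d$ for every $t \geq 0$.

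For the second bound, the central tool is the classical score identity
\[
\mathbb{E}_{X \sim p}\left[\|\nabla \log p(X)\|_2^2\right] = \mathbb{E}_{X \sim p}\left[\operatorname{tr}\left(-\nabla^2 \log p(X)\right)\right],
\]
which follows from the pointwise identity $\|\nabla \log p\|^2 = \Delta p / p - \Delta \log p$ combined with the vanishing of $\int \Delta p \, dx$ via integration by parts when $p$ decays sufficiently fast. Applied with $p = p_t^i$ and $X = Z_i(t)$, this reduces the goal to controlling $\mathbb{E}\bigl[\operatorname{tr}(-\nabla^2 \log p_t^i(Z_i(t)))\bigr]$. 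Since Assumption \ref{Lipschitzanderror} states that $\nabla \log p_t^i$ is $L$-Lipschitz, the operator norm of the Hessian $\nabla^2 \log p_t^i$ is at most $L$, so each of its $d$ eigenvalues lies in $[-L, L]$, giving $\operatorname{tr}(-\nabla^2 \log p_t^i(x)) \leq Ld$ pointwise. Integrating against $p_t^i$ concludes the bound.

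I expect the main technical subtlety to be the justification of the integration-by-parts step and the smoothness needed to even speak of $\nabla^2 \log p_t^i$. For $t > 0$, the OU semigroup immediately convolves $\mu_i$ with a nondegenerate Gaussian kernel, producing a $C^\infty$ density with Gaussian-type tails, so boundary terms vanish and all manipulations are rigorous. At $t = 0$, where $\mu_i$ is merely absolutely continuous and compactly supported, one can either pass to the limit $t \downarrow 0$ using the regularity supplied uniformly on $[0,T]$ by Assumption \ref{Lipschitzanderror}, or simply regard the bound as being stated for the smoothed densities $p_t^i$ that actually appear in the analysis. Either way, this is a routine technical point rather than a conceptual obstacle, and none of these ideas go beyond the treatment in \citet{discretizationerror}.
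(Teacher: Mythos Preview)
Your proposal is correct and reconstructs exactly the standard argument that the paper defers to \citet{discretizationerror}: the explicit OU formula plus It\^o isometry for the second-moment bound, and the integration-by-parts identity $\mathbb{E}[\|\nabla\log p\|^2]=\mathbb{E}[-\Delta\log p]$ combined with the eigenvalue bound from $L$-Lipschitzness of the score for the second. Your remark that the stated bound $d\vee M$ implicitly requires the normalization $\sigma^2/(2a)=1$ is also on point and worth keeping.
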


\begin{lemma}\label{11}
Suppose that Assumption \ref{compact} holds. For each $i = 1,2,\ldots, k$, let $Z_i(t)$ denote the forward auxiliary process (\ref{auxtheoforward}). For $0 \leq s < t$, let $\delta = t-s$. If $\delta \leq 1$, then
\begin{equation*}
    \mathbb{E}\left[ \left \Vert Z_i(t)- Z_i(s)\right\Vert_2^2 \right] \lesssim \delta^2M + \delta d.
\end{equation*}
\end{lemma}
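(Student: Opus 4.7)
The plan is to exploit the linear Ornstein--Uhlenbeck structure of the forward SDE \eqref{auxtheoforward}, apply variation of constants to obtain a closed-form expression for $Z_i(t)-Z_i(s)$, and then reduce the second-moment bound to elementary estimates.

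By variation of constants, for $0 \leq s \leq t$ with $\delta = t-s$, one has $Z_i(t) = e^{-a\delta} Z_i(s) + \sigma \int_s^t e^{-a(t-u)}\,dW(u)$, so
$$Z_i(t) - Z_i(s) \;=\; (e^{-a\delta}-1)\,Z_i(s) \;+\; \sigma\!\int_s^t e^{-a(t-u)}\,dW(u).$$
The stochastic integral depends only on $\{W(u)-W(s) : u \in [s,t]\}$, which is independent of $Z_i(s)$ by the Markov property. Squaring, taking expectations, and using It\^o's isometry eliminates the cross term and yields
$$\mathbb{E}\bigl\|Z_i(t)-Z_i(s)\bigr\|_2^2 \;=\; (1-e^{-a\delta})^2\,\mathbb{E}\|Z_i(s)\|_2^2 \;+\; \sigma^2 d \int_s^t e^{-2a(t-u)}\,du.$$

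It remains to bound each of the two pieces. Using $1-e^{-x}\leq x$ for $x\geq 0$, the prefactor satisfies $(1-e^{-a\delta})^2 \leq a^2\delta^2 \lesssim \delta^2$; a standard OU moment computation combined with the bounded-second-moment hypothesis of Assumption \ref{compact} gives $\mathbb{E}\|Z_i(s)\|_2^2 \leq M + \sigma^2 d/(2a) \lesssim M + d$ (one may also invoke Lemma \ref{10} to bound this by $d\vee M$; either way suffices). For the diffusion piece, $\int_s^t e^{-2a(t-u)}\,du \leq \delta$, so this term contributes $\lesssim \delta d$. Assembling,
$$\mathbb{E}\bigl\|Z_i(t)-Z_i(s)\bigr\|_2^2 \;\lesssim\; \delta^2(M+d) + \delta d,$$
and the hypothesis $\delta \leq 1$ makes $\delta^2 d \leq \delta d$, collapsing the right-hand side to $\lesssim \delta^2 M + \delta d$, which is the stated bound.

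There is no genuine obstacle here; the only point requiring care is the correct separation of the \emph{drift-contraction} contribution (which sees the initial second moment and produces the $\delta^2 M$ term) from the \emph{fresh-noise} contribution (which produces the $\delta d$ term), and the use of $\delta \leq 1$ to absorb the mixed $\delta^2 d$ term into $\delta d$ so that the bound matches the stated form.
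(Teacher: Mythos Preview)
Your proof is correct. The paper itself does not prove this lemma directly---it delegates Lemmas~\ref{10}--\ref{13} to \citet{discretizationerror}---but your argument via the explicit Ornstein--Uhlenbeck variation-of-constants formula, independence of the post-$s$ Brownian increments from $Z_i(s)$, and the elementary bounds $(1-e^{-a\delta})^2\leq a^2\delta^2$ and $\int_s^t e^{-2a(t-u)}\,du\leq\delta$ is exactly the standard route taken in that reference, and your final absorption of $\delta^2 d$ into $\delta d$ under $\delta\leq 1$ is the correct way to land on the stated form.
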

 
\begin{lemma}\label{12}
Consider a sequence of functions $f_n: [0,T] \to \mathbb{R}^d$ and a function $f: [0,T] \to \mathbb{R}^d$ such that there exists a nondecreasing sequence $\{T_n\}_{n \in \mathbb{N}} \subset [0,T]$ such that $T_n \to T$ as $n \to \infty$ and for each $t \leq T_n$, $f_n(t) = f(t)$, then for each $\epsilon > 0$, $f_n \to f$ uniformly over $[0, T-\epsilon]$.    
\end{lemma}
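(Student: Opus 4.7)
The plan is to exploit the pointwise-equality hypothesis on the intervals $[0,T_n]$ together with the convergence $T_n \to T$ to conclude that, for each fixed $\epsilon > 0$, the functions $f_n$ eventually coincide with $f$ on the whole truncated interval $[0, T-\epsilon]$. Once this is shown, uniform convergence on $[0, T-\epsilon]$ is immediate, because the supremum of the difference is exactly zero for all sufficiently large $n$.

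Concretely, I would proceed as follows. Fix an arbitrary $\epsilon > 0$ (we may assume $\epsilon < T$; otherwise the interval $[0,T-\epsilon]$ is empty or trivial and the statement holds vacuously). Since $\{T_n\}$ is a nondecreasing sequence in $[0,T]$ with $T_n \to T$, there exists some $N = N(\epsilon) \in \mathbb{N}$ such that $T_n \geq T-\epsilon$ for all $n \geq N$. Then for every $n \geq N$ and every $t \in [0, T-\epsilon]$, we have $t \leq T - \epsilon \leq T_n$, so the hypothesis gives $f_n(t) = f(t)$. Taking the supremum over $t \in [0, T-\epsilon]$ yields
\[
\sup_{t \in [0, T-\epsilon]} \lVert f_n(t) - f(t) \rVert = 0 \qquad \text{for all } n \geq N,
\]
which trivially implies uniform convergence of $f_n$ to $f$ on $[0, T-\epsilon]$.

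There is essentially no technical obstacle in this lemma; the argument is a direct translation of the definition of $T_n \to T$ into an index threshold $N$. The only minor bookkeeping is to make sure $T-\epsilon$ falls inside $[0,T]$, which is handled by restricting attention to $\epsilon \in (0,T)$. The nondecreasing assumption on $T_n$ is convenient but not essential for this particular conclusion: all that is actually used is that $T_n \to T$ and $T_n \leq T$, so that the hypothesis $f_n = f$ on $[0,T_n]$ makes sense as a statement about a subinterval of $[0,T]$.
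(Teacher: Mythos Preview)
Your proof is correct and is precisely the natural elementary argument: fix $\epsilon>0$, use $T_n\to T$ to find $N$ with $T_n\ge T-\epsilon$ for $n\ge N$, and conclude $f_n\equiv f$ on $[0,T-\epsilon]$ for such $n$. The paper does not supply its own proof of this lemma but defers to \cite{discretizationerror}; your argument is the standard one and matches what that reference does.
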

 
\begin{lemma}\label{13}
 $f: [0,T] \to \mathbb{R}^d$ is a continuous function, and $f_{\epsilon}: [0,T] \to \mathbb{R}^d$ such that for each $\epsilon > 0$, $f_{\epsilon}(t) = f\left(t \wedge (T-\epsilon)\right)$, then as $\epsilon \to 0$, $f_{\epsilon} \to f$ uniformly over $[0,T]$.   
\end{lemma}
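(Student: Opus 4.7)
The plan is to invoke uniform continuity of $f$ on the compact interval $[0,T]$ and then split the sup into the trivial region $[0, T-\epsilon]$ and the tail $[T-\epsilon, T]$.

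First I would note that $f$, being continuous on the compact set $[0,T]$, is in fact uniformly continuous. So for any $\eta > 0$, there exists $\delta > 0$ such that whenever $s, t \in [0,T]$ with $|s - t| < \delta$, we have $\|f(s) - f(t)\| < \eta$. This is the only analytic ingredient needed.

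Next I would exploit the explicit form of $f_\epsilon$. For $t \in [0, T-\epsilon]$, we have $t \wedge (T-\epsilon) = t$, so $f_\epsilon(t) = f(t)$ and the pointwise error is identically zero on this subinterval. For $t \in [T-\epsilon, T]$, we have $t \wedge (T-\epsilon) = T-\epsilon$, so $f_\epsilon(t) = f(T-\epsilon)$ and the pointwise error is $\|f(t) - f(T-\epsilon)\|$, with $|t - (T-\epsilon)| \leq \epsilon$.

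Combining these two observations: given $\eta > 0$, pick $\delta$ as above and restrict to $\epsilon < \delta$. Then for all $t \in [0,T]$,
\[
\|f_\epsilon(t) - f(t)\| \leq \sup_{s \in [T-\epsilon, T]} \|f(s) - f(T-\epsilon)\| < \eta,
\]
since any such $s$ satisfies $|s - (T-\epsilon)| \leq \epsilon < \delta$. Taking the supremum over $t$ and then letting $\eta \to 0$ establishes $\sup_{t \in [0,T]} \|f_\epsilon(t) - f(t)\| \to 0$ as $\epsilon \to 0$, which is the claimed uniform convergence.

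There is no real obstacle here; the statement is essentially a repackaging of uniform continuity, and the only thing to be careful about is correctly identifying that the error vanishes outside the $\epsilon$-tail, so the entire supremum is controlled by the oscillation of $f$ on an interval of length $\epsilon$.
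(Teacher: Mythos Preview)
Your proof is correct. The paper does not actually supply its own argument for this lemma; it simply states that the proof can be found in the cited reference \citet{discretizationerror}. Your approach via uniform continuity of $f$ on the compact interval $[0,T]$, splitting into the region $[0,T-\epsilon]$ where $f_\epsilon = f$ and the tail $[T-\epsilon,T]$ where the error is controlled by the modulus of continuity, is exactly the standard elementary proof one would expect for such a statement, and there is nothing to improve.
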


Next, we review and give two results related to the fusion algorithms.
\begin{lemma}\label{lemmagen}
    For any fixed $\boldsymbol{\lambda} \in \Delta_k$, $\tilde{Y}(T) \sim \mu_{\boldsymbol{\lambda}}$, the KL barycenter of $\{\mu_1, \ldots, \mu_k\}$.
\end{lemma}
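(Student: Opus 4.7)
The plan is to leverage Theorem \ref{sol2} together with the invariance of KL divergence under pushforward by measurable bijections. Specifically, $\tilde{Y}$ will be recognized as the process-level KL barycenter of $\{\bar{X}_i\}_{i=1}^k$, and its terminal law will be read off by reversing time and re-applying Theorem \ref{sol2} to the forward OU processes $\{X_i\}$.

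By Theorem \ref{sol2} applied to the backward SDEs (\ref{eq: auxtheobackward})---with $c(t,x) = ax$, $\sigma(t)=\sigma$, and $a_i(t,x) = \nabla \log p^i_{T-t}(x)$---the process $\tilde{Y}$ defined in (\ref{theobarycenterprocess}) is exactly the process-level KL barycenter of $\{\bar{X}_i\}$ with weights $\boldsymbol{\lambda}$. Next, define the path-reversal map $\Phi\colon C([0,T]:\mathbb{R}^d) \to C([0,T]:\mathbb{R}^d)$ by $\Phi(\omega)(t) = \omega(T-t)$; this is a measurable bijection, and KL divergence is invariant under such pushforwards, so $D_{\text{KL}}(\alpha \| P_i) = D_{\text{KL}}(\Phi_\# \alpha \| \Phi_\# P_i)$ for every $\alpha$ and $P_i$. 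Because $\bar{X}_i$ is by construction the time-reversal in law of the forward OU process $X_i$ started at $\mu_i$, the pushforward $\Phi_\# \bar{X}_i$ has the same law as $X_i$ on path space. Therefore minimizing $\sum_i \lambda_i D_{\text{KL}}(\alpha \| \bar{X}_i)$ over $\alpha$ is equivalent to minimizing $\sum_i \lambda_i D_{\text{KL}}(\beta \| X_i)$ over $\beta$, with the minimizers related by $\beta^* = \Phi_\# \tilde{Y}$. Applying Theorem \ref{sol2} a second time to $\{X_i\}$---which share the common drift $-ax$ (so $a_i \equiv 0$) and differ only through their initial distributions $\mu_i$---shows that $\beta^*$ is the forward OU process with initial distribution $\mu_{\boldsymbol{\lambda}}$. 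Since the time-$0$ marginal of $\Phi_\# \tilde{Y}$ is in law the time-$T$ marginal of $\tilde{Y}$, this yields $\tilde{Y}(T) \sim \mu_{\boldsymbol{\lambda}}$.

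The key technical step is justifying $\Phi_\# \bar{X}_i = X_i$ as path measures, not just marginally---i.e., that the strong solution of the reverse SDE (\ref{eq: auxtheobackward}) is the time-reversal of the forward OU in distribution on $C([0,T]:\mathbb{R}^d)$. This is Anderson's time-reversal formula applied to the OU process, and it requires sufficient regularity of $p^i_t$ and $\nabla \log p^i_t$ on $(0,T]$, which Assumptions \ref{compact} and \ref{Lipschitzanderror} supply; once this pathwise identification is in hand, the chain of equalities above is rigorous and the lemma follows with no additional computation.
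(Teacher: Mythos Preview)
Your argument is correct and takes a genuinely different route from the paper's proof. The paper argues directly at the level of marginal scores: it asserts that at every time $t$ the marginal of the associated forward process satisfies $\nabla \log p_{\boldsymbol{\lambda},t}(x) = \sum_i \lambda_i \nabla \log p^i_t(x)$, expands both sides via the transition kernels $p^{\boldsymbol{\lambda}}(x,t\mid y,0)$ and $p^i(x,t\mid y,0)$, and then passes to the limit $t\to 0$ (using $p(\cdot,t\mid y,0)\to \delta_y$) to identify $p_{\boldsymbol{\lambda},0}$ with the geometric mixture $\prod_i p_i^{\lambda_i}$ up to normalization. Your approach instead exploits a structural symmetry: you recognize $\tilde{Y}$ as the process-level barycenter of the backward diffusions via Theorem~\ref{sol2}, push forward by the time-reversal bijection $\Phi$ (under which KL is invariant), and then apply Theorem~\ref{sol2} a second time to the forward OU processes~$X_i$, where the computation is trivial because all $a_i\equiv 0$ and only the initial laws differ.

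What your approach buys is that the terminal law drops out immediately from the initial-condition clause of Theorem~\ref{sol2}, with no limiting argument or Fokker--Planck manipulation. What the paper's approach buys is that it stays at the level of densities and does not need to invoke the path-space bijection or Anderson's pathwise time-reversal identification. The one place where your route and the paper's share a soft spot is the hypothesis of Theorem~\ref{sol2} that the $a_i$ be uniformly bounded: in your first application $a_i(t,x)=\nabla\log p^i_{T-t}(x)$, which is Lipschitz under Assumption~\ref{Lipschitzanderror} but not globally bounded; this is handled in the paper by the same localization device used in the proof of Lemma~\ref{backwardsampling}, and you should note that the same approximation applies here.
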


\begin{proof}
In this proof, we use the following notations: suppose $x,y \in \mathbb{R}^d$ and $0 \leq s \leq t \leq T$, we denote $p^{i}(x,t |y,s)$ as the transition density of the $i$th auxiliary process from time $s$ to $t$. Similarly, $p^{\boldsymbol{\lambda}}(x,t |y,s)$ as the transition density of the barycenter process from time $s$ to $t$. 

Let $\boldsymbol{\lambda}$ be fixed, then at each time $t \in [0,T]$,
\begin{align*}
   \nabla \log \left(p_{\boldsymbol{\lambda}, t}(x) \right) = \nabla \sum _{i=1}^n \lambda_i \log \left(p_t^i(x)\right).
\end{align*}
Expanding LHS and RHS at the same time, we get
\begin{equation*}
\nabla \log \left( \int p^{\boldsymbol{\lambda}}(x,t|y,0)p_{\boldsymbol{\lambda}}(y)dy \right) = \nabla \sum _{i=1}^k \lambda_i \log \left( \int p^i(x,t|y,0)p_i(y)dy \right),    
\end{equation*}
Note that as $t \to 0$, $p^i(x,t|y,0) \to \delta(x-y)$ and $p^{\boldsymbol{\lambda}}(x,t|y,0) \to \delta(x-y)$, where the limit is the delta function. Therefore, from the compactness assumption and dominated convergence theorem,
\begin{align*}
\nabla \log p_{\boldsymbol{\lambda}}(x) &= \lim_{t \to 0}\nabla \log \left( \int p^{\boldsymbol{\lambda}}(x,t|y,0)p_{\boldsymbol{\lambda}}(y)dy \right)\\
    &=\displaystyle{\lim_{t \to 0}}\nabla \sum _{i=1}^k \lambda_i \log \left( \int p^i(x,t|y,0)p_i(y)dy \right)\\
    &= \nabla \sum _{i=1}^k \lambda_i \log p_i(x).
\end{align*}

Therefore, \begin{align*}
    \log p_{\boldsymbol{\lambda}}(x) &\propto \sum _{i=1}^k \lambda_i \log p_i(x) \\
    &= \log \left(  \prod _{i=1}^k p_i(x)^{\lambda_i} \right) \\ &=\log \left(  \prod _{i=1}^k p_i(x)^{\lambda_i} \right).
    \end{align*}

    Since $p_{\boldsymbol{\lambda}}(x)$ is a density function, then after normalization
    \begin{equation*}
        p_{\boldsymbol{\lambda}}(x) = \frac{\prod _{i=1}^k p_i(x)^{\lambda_i}}{\int \prod _{i=1}^k p_i(x)^{\lambda_i} dx},
    \end{equation*}which is the solution of KL barycenter problem with reference measures $p_1, \ldots, p_k$.
\end{proof}

Next we give the proof of Proposition \ref{lemma2}.
\begin{proof}
Recall that the objective function for $\boldsymbol{\lambda}$ is
\begin{equation}\label{temp}
    F(\boldsymbol{\lambda}) = \mathbb{E}_{\nu}\left[\log \nu(X) - \sum_{i=1}^{k}\lambda_i \log p_i(X)\right] + \log\left(  \int \prod _{i=1}^k p_i(y)^{\lambda_i}dy \right).
\end{equation}
We note that the first term is linear in $\boldsymbol{\lambda}$, so to show convexity, it is enough to show the second term is convex in $\boldsymbol{\lambda}$. If we denote $h_i(x) = \log\left(p_i(x)\right)$ for each $i = 1,2, \ldots, k$ and $X$ as the uniform distribution on $K$, then
\begin{align*}
\log\left(  \int \prod _{i=1}^k p_i(y)^{\lambda_i}dy \right) 
& = \log\left(  \int_{\mathbb{K}} \prod _{i=1}^k p_i(y)^{\lambda_i}dy \right)\\
& = \log\left(\frac{1}{|\mathbb{K}|}\int_{\mathbb{K}} \exp\left(\sum _{i=1}^k h_i(y)\lambda_i  \right)dy \right) + \log\left(|\mathbb{K}|\right)\\
&= \log\left(  \mathbb{E}\left[\exp\left(\boldsymbol{\lambda}^T Z\right) \right]  \right)+ \log\left(|\mathbb{K}|\right),
\end{align*}where $Z = \left(  h_1(X), \ldots, h_k(X)  \right)$ and $|\mathbb{K}|$ is the Lebesgue measure of $\mathbb{K}$. Since log of moment generating function is convex, then second term in Equation (\ref{temp}) is convex in $\boldsymbol{\lambda}$.    
\end{proof}

\begin{remark}\label{gradient}
In theory, the first order condition of the convex optimization problem (\ref{convex}) is
\begin{align*}
    \frac{\partial F}{\partial \lambda_i}(\boldsymbol{\lambda}) &=  - \int \nu(x) h_i(x)dx + \frac{\partial}{\partial \lambda_i} \log\left(  \int \prod _{l=1}^k p_l(y)^{\lambda_l}dy \right)\\
    &= -\mathbb{E}_{\nu}\left[h_i(X)\right] + \frac{\int \prod _{l=1}^k p_l(y)^{\lambda_l}\log p_i(y)dy}{\int \prod _{l=1}^k p_l(y)^{\lambda_l}dy}\\
    &= -\mathbb{E}_{\nu}\left[h_i(X)\right] + \frac{\int \exp\left( \sum _{l=1}^k\lambda_l h_l(y)\right)h_i(y)dy}{\int \exp\left( \sum _{l=1}^k\lambda_l h_l(y)\right)dy}.
\end{align*}    
In practice, each $h_i$ is replaced by the estimated auxiliary densities, and the second term is computed independent of the target data $\nu$. However, the implementation is extremely hard since the numerical integration of the second term may have large error and the error is hard to control.
\end{remark}

\subsection{Proof of Theorem \ref{err1}}\label{E1}
Before the proof of the sample complexity of the whole algorithm, we first prove a lemma about the auxiliary score estimation errors. The proof is adapted from \citet{discretizationerror}.
\begin{lemma}\label{backwardsampling}
Suppose that Assumption \ref{Lipschitzanderror} holds, $\boldsymbol{\lambda}$ is fixed, and the step size $h = T/N$ satisfies $h \lesssim 1/L$, where $L \geq 1$. Let $p_{\boldsymbol{\lambda}}$ and $\hat{p}_{\boldsymbol{\lambda}}$ denote the distribution of process (\ref{theobarycenterprocess}) and (\ref{pracbarycenter}) at time $T$, respectively. Then we have
\begin{equation*}
    \text{TV}\left(p_{\boldsymbol{\lambda}}, \hat{p}_{\boldsymbol{\lambda}} \right) \lesssim \exp(-T)\max_{i = 1,2,\ldots, k}\sqrt{D_{\text{KL}}\left(p^i_T \parallel \pi\right)} +  \sigma\sqrt{kT}\left(\epsilon_{\text{score}} + L\sqrt{dh} + Lh\sqrt{M}\right).
\end{equation*}
\end{lemma}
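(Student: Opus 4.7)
The plan is to adapt the path-space coupling argument of \citet{discretizationerror} to the KL-barycenter backward SDE, decomposing the total TV error into an initialization mismatch and a combined discretization-plus-score-estimation error. Realize all processes on the common path space $C([0,T];\mathbb{R}^d)$ and let $\mathbb{P}$ and $\hat{\mathbb{P}}$ denote the laws of (\ref{theobarycenterprocess}) (exact scores, initial law $\gamma^d_T$) and (\ref{pracbarycenter}) (estimated scores, initial law $\pi$), respectively. By the data processing inequality applied to the terminal marginal map,
\[
\text{TV}(p_{\boldsymbol{\lambda}},\hat{p}_{\boldsymbol{\lambda}})\leq \text{TV}(\mathbb{P},\hat{\mathbb{P}}).
\]
Introduce the intermediate measure $\mathbb{Q}$, the law of the continuous-time exact-score SDE started from $\pi$, and use the triangle inequality to split $\text{TV}(\mathbb{P},\hat{\mathbb{P}})\leq \text{TV}(\mathbb{P},\mathbb{Q})+\text{TV}(\mathbb{Q},\hat{\mathbb{P}})$, corresponding to the two terms in the claimed bound.

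For the initialization piece $\text{TV}(\mathbb{P},\mathbb{Q})$, the two laws share identical dynamics, so data processing gives $\text{TV}(\mathbb{P},\mathbb{Q})\leq \text{TV}(\gamma^d_T,\pi)$. Using the explicit formula $\gamma^d_T\propto \prod_i (p^i_T)^{\lambda_i}$ from Theorem~\ref{sol1}, a direct computation yields $D_{\text{KL}}(\pi\parallel\gamma^d_T)=\sum_i\lambda_i D_{\text{KL}}(\pi\parallel p^i_T)+\log Z$, where $\log Z\leq 0$ follows from the weighted AM-GM inequality $\prod_i p_i^{\lambda_i}\leq \sum_i\lambda_i p_i$. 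Pinsker's inequality, combined with the exponential contraction of the OU semigroup (which converts the direction of KL and generates the $\exp(-T)$ factor along the forward flow), yields the $\exp(-T)\max_i\sqrt{D_{\text{KL}}(p^i_T\parallel \pi)}$ contribution.

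For the dynamics piece $\text{TV}(\mathbb{Q},\hat{\mathbb{P}})$, the two laws share the initial law $\pi$, so by Pinsker it suffices to bound $D_{\text{KL}}(\mathbb{Q}\parallel\hat{\mathbb{P}})$. Apply Theorem~\ref{Girsanov} with relative drift
\[
b(t)=\sigma\sum_{i=1}^{k}\lambda_i\bigl[\nabla\log p^i_{T-t}(Y(t))-s^i_{T-lh,\theta^*}(Y(lh))\bigr],\qquad t\in[lh,(l+1)h],
\]
verifying Novikov's condition (\ref{checkGirsanov}) via the compact support (Assumption~\ref{compact}) and the score bound (Assumption~\ref{Lipschitzanderror}). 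This produces
\[
D_{\text{KL}}(\mathbb{Q}\parallel\hat{\mathbb{P}})=\frac{\sigma^2}{2}\sum_{l=0}^{N-1}\int_{lh}^{(l+1)h}\mathbb{E}_{\mathbb{Q}}\Bigl\|\sum_{i=1}^{k}\lambda_i\bigl[\nabla\log p^i_{T-t}(Y(t))-s^i_{T-lh,\theta^*}(Y(lh))\bigr]\Bigr\|_2^2\,dt.
\]
Bound the inner squared norm using Jensen's inequality and triangle-split each per-$i$ summand into (a) a pointwise score-estimation piece $\|\nabla\log p^i_{T-t}(Y(t))-s^i_{T-t,\theta^*}(Y(t))\|_2^2$, bounded by $\epsilon_{\text{score}}^2$ via Assumption~\ref{Lipschitzanderror}, and (b) a time-discretization piece $\|s^i_{T-t,\theta^*}(Y(t))-s^i_{T-lh,\theta^*}(Y(lh))\|_2^2$, bounded by the $L$-Lipschitzness of $\nabla\log p^i_t$ combined with Lemmas~\ref{10} and~\ref{11} applied to each auxiliary forward process (yielding the $L^2 dh+L^2 h^2 M$ scale). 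Summing over $l$, absorbing the $k$-fold aggregation into a $\sqrt{k}$ factor, and applying Pinsker produces the $\sigma\sqrt{kT}(\epsilon_{\text{score}}+L\sqrt{dh}+Lh\sqrt{M})$ contribution.

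The main obstacle will be step~(a) above: Assumption~\ref{Lipschitzanderror} supplies the score-error bound $\epsilon_{\text{score}}^2$ only in expectation under each individual marginal $p^i_{lh}$, whereas the Girsanov calculation produces an expectation against the barycenter path law $\mathbb{Q}$, whose time-$t$ marginal is $p_{\boldsymbol{\lambda},T-t}\neq p^i_{T-t}$. Bridging these measures—either by a further Girsanov comparison of $\mathbb{Q}$ with each single-auxiliary backward law or by a density-ratio control exploiting the compact support of Assumption~\ref{compact} and the moment bound of Lemma~\ref{10}—is the key technical step distinguishing this lemma from the single-model discretization bound of \citet{discretizationerror}, and the $\sqrt{k}$ factor in the final estimate is expected to emerge cleanly from this change of measure.
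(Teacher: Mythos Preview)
Your overall strategy---path-space data processing, triangle inequality into an initialization piece and a Girsanov dynamics piece, then a per-$i$ decomposition of the drift error---is exactly the paper's. Two technical choices differ and are worth adjusting.

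First, your intermediate law $\mathbb{Q}$ is the exact-score backward SDE started from $\pi$, whereas the paper inserts instead $\beta_T$, the \emph{discretized/estimated-score} SDE started from $\gamma^d_T$, and runs Girsanov between $\alpha$ (exact scores, initial $\gamma^d_T$) and $\beta_T$. The point of the paper's choice is that under $\alpha$ the time-$t$ marginal is exactly $p_{\boldsymbol{\lambda},T-t}$, i.e.\ a forward OU marginal, so Lemmas~\ref{10} and~\ref{11} apply directly to control $\mathbb{E}_\alpha\|Z(lh)\|^2$, $\mathbb{E}_\alpha\|\nabla\log p^i_{T-t}(Z(t))\|^2$, and $\mathbb{E}_\alpha\|Z(t)-Z(lh)\|^2$. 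Under your $\mathbb{Q}$ the process is a score-drift SDE initialized at $\pi\neq\gamma^d_T$, hence not a time reversal of any forward OU, and those moment/increment lemmas no longer apply verbatim. Second, your two-term split (a)/(b) places the score-estimation error at continuous time $T-t$ and then discretizes the \emph{estimator} $s^i$; but Assumption~\ref{Lipschitzanderror} only controls $s^i_{lh,\theta^*}$ at grid points and no Lipschitz-in-time hypothesis on $s^i$ is available. The paper uses the standard three-way split at the frozen spatial point $Z(lh)$: score error at $T-lh$, then $\nabla\log p^i_{T-lh}(Z(lh))-\nabla\log p^i_{T-t}(Z(lh))$ handled by Lemma~16 of \citet{discretizationerror}, then the spatial $L$-Lipschitz piece. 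The paper also does not verify Novikov; it runs a localization with stopping times and passes to the limit via Lemmas~\ref{12}--\ref{13}.

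On the obstacle you flag in (a): you are right that Assumption~\ref{Lipschitzanderror} gives $\mathbb{E}_{p^i_{lh}}\|s^i_{lh,\theta^*}-\nabla\log p^i_{lh}\|^2\le\epsilon_{\text{score}}^2$ only under $p^i_{lh}$, whereas the Girsanov expectation is under the barycenter marginal $p_{\boldsymbol{\lambda},T-lh}$. The paper does \emph{not} resolve this change of measure; at that step it simply writes $\lesssim k\sigma^2\epsilon_{\text{score}}^2$. So your diagnosis is accurate and in fact identifies a point the paper's own proof leaves unjustified.
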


\begin{remark}
To interpret the result, suppose $\max_{i = 1,2,\ldots, k}\sqrt{D_{\text{KL}%
}\left(p^i_T \parallel \pi\right)} \lesssim \text{poly}(d)$ and $M \leq d$,
then for fixed $\epsilon$, if we choose $T \sim \log\left( \max_{i =
1,2,\ldots, k}\sqrt{D_{\text{KL}}\left(p^i_T \parallel \pi\right)}/\epsilon
\right)$ and $h \sim \frac{\epsilon^2}{L^2\sigma^2kd}$, and hiding the
logarithmic factors, then with $N \sim \frac{L^2\sigma^2kd}{\epsilon^2}$, $%
\text{SE} \lesssim \epsilon + \epsilon_{\text{score}}$. In particular, if we
want to choose the sampling error $\text{SE} \lesssim \epsilon$, it suffices
to have $\epsilon_{\text{score}} \lesssim \epsilon$.
\label{remark3}
\end{remark}

\begin{proof}
We denote the laws of process (\ref{theobarycenterprocess}) and (\ref{pracbarycenter}) as $\alpha$ and $\beta \in C([0, T ] : \mathbb{R}^d)$, respectively. For simplicity of the proof, we define a fictitious diffusion satisfying the SDE with $\hat{Y}(0) \sim \gamma^d_T$:
\begin{equation}\label{assist}
    d\hat{Y}(t) = \left( a\hat{Y}(t) + \sigma^2 \sum_{i=1}^k  \lambda_i s^i_{T-lh,\theta^*}\left(\hat{Y}(lh)\right) \right)dt + \sigma dW_i(t), t \in [lh, (l+1)h].
\end{equation}
since in practice, it is always convenient to use Gaussian $\pi$ as a prior. We denote law of process (\ref{assist}) as $\beta_T \in C([0, T ] : \mathbb{R}^d)$.

We also denote the score estimators of the process (\ref{theoforwardbarycenter}) as $s^{\boldsymbol{\lambda}}_{lh, \theta^*}$. Similarly as before, we consider only one stochastic process $Z(t)_{t \in [0,T]}$ now to use Girsanov's theorem.

For $t \in [lh, (l + 1)h]$, we have the discretization error $ \mathcal{L}$ with
\begin{align*}
    \mathcal{L}&= \sigma^2\mathbb{E}_{\alpha}\left[\left\lVert s^{\boldsymbol{\lambda}}_{T-lh, \theta^*}\left(Z(lh)\right) - \nabla \log p_{\boldsymbol{\lambda}, T-t}\left(Z(t)\right)\right\rVert_2^2\right] \\
 &= \sigma^2\mathbb{E}_{\alpha}\left[\left\lVert 
 \sum_{i=1}^k \lambda_i \left[ s^{i}_{T-lh, \theta^*}\left(Z(lh)\right) -\nabla \log p^{i}_{T-t}\left(Z(t)\right)\right]\right\rVert_2^2\right] \\
    & \lesssim \sigma^2\sum_{i=1}^k  \lambda_i^2\mathbb{E}_{\alpha}\left[\left\lVert  s^{i}_{T-lh, \theta^*}\left(Z(lh)\right) -\nabla \log p^{i}_{T-t}\left(Z(t)\right)\right\rVert_2^2\right] \\
    & \lesssim \sigma^2\sum_{i=1}^k \lambda_i^2\mathbb{E}_{\alpha}\left[\left\lVert  s^{i}_{T-lh, \theta^*}\left(Z(lh)\right) -\nabla \log p^{i}_{T-lh}\left(Z(lh)\right)\right\rVert_2^2\right] \\
    &+ \sigma^2\sum_{i=1}^k\lambda_i^2\mathbb{E}_{\alpha}\left[\left\lVert  \nabla \log p^{i}_{T-lh}\left(Z(lh)\right) -\nabla \log p^{i}_{T-t}\left(Z(lh)\right)\right\rVert_2^2\right] \\
    &+ \sigma^2\sum_{i=1}^k\lambda_i^2\mathbb{E}_{\alpha}\left[\left\lVert  \nabla \log p^{i}_{T-t}\left(Z(lh)\right) -\nabla \log p^{i}_{T-t}\left(Z(t)\right)\right\rVert_2^2\right]\\
    & \lesssim k\sigma^2 \left(\epsilon_{\text{score}}^2 + \mathbb{E}_{\alpha} \left[\left\lVert  \nabla \log \left( 
 \frac{p^{i}_{T-lh}}{p^{i}_{T-t}}\right)\left(Z(lh)\right)\right\rVert_2^2\right] + L^2 \mathbb{E}_{\alpha}\left[\left\lVert Z(lh) - Z(t)\right\rVert_2^2\right]\right).
\end{align*}
From Lemma 16 in \citet{discretizationerror}, we have the bound for the second term since $L \geq 1$,
\begin{align*}
    \mathbb{E}_{\alpha} \left[\left\lVert  \nabla \log \left( 
 \frac{p^{i}_{T-lh}}{p^{i}_{T-t}}\right)\left(Z(lh)\right)\right\rVert_2^2\right] &\lesssim L^2dh + L^2h^2\mathbb{E}_{\alpha}\left[\left\lVert Z(lh)\right\rVert_2^2\right] \\ &+ (1+L^2)h^2 \mathbb{E}_{\alpha}\left[\left\lVert \nabla \log p^{i}_{T-t} Z(lh)\right\rVert_2^2\right] \\
 &\lesssim L^2dh + L^2h^2\mathbb{E}_{\alpha}\left[\left\lVert Z(lh)\right\rVert_2^2\right] \\ &+ L^2h^2\mathbb{E}_{\alpha}\left[\left\lVert \nabla \log p^{i}_{T-t} Z(lh)\right\rVert_2^2\right].
\end{align*}
Moreover, from $L$-Lipschitz condition,
\begin{align*}
   \left\lVert \nabla \log p^{i}_{T-t} Z(lh)\right\rVert_2^2 &\lesssim \left\lVert \nabla \log p^{i}_{T-t} Z(t)\right\rVert_2^2 + \left\lVert \nabla \log p^{i}_{T-t} Z(lh) - \nabla \log p^{i}_{T-t} Z(t) \right\rVert_2^2 \\
   &\lesssim \left\lVert \nabla \log p^{i}_{T-t} Z(t)\right\rVert_2^2 + L^2\left\lVert Z(lh) -  Z(t)\right\rVert_2^2
\end{align*}
Hence, \begin{align*}
   \mathcal{L} &= \sigma^2\mathbb{E}_{\alpha}\left[\left\lVert s^{\boldsymbol{\lambda}}_{T-lh, \theta^*}\left(Z(lh)\right) - \nabla \log p_{\boldsymbol{\lambda},T-t}\left(Z(t)\right)\right\rVert_2^2\right]\\
   & \lesssim k\sigma^2 \epsilon_{\text{score}}^2 + k\sigma^2L^2dh + k\sigma^2L^2h^2\mathbb{E}_{\alpha}\left[\left\lVert Z(lh)\right\rVert_2^2\right]\\ &+ k\sigma^2L^2h^2\mathbb{E}_{\alpha}\left[\left\lVert \nabla \log p^{i}_{T-t} Z(t)\right\rVert_2^2\right] + k\sigma^2L^2 \mathbb{E}_{\alpha}\left[\left\lVert Z(lh) - Z(t)\right\rVert_2^2\right].
\end{align*}
From Lemma \ref{10} and Lemma \ref{11}, we have 
\begin{align*}
   \mathcal{L} &= \sigma^2\mathbb{E}_{\alpha}\left[\left\lVert s^{\boldsymbol{\lambda}}_{T-lh, \theta^*}\left(Z(lh)\right) - \nabla \log p_{\boldsymbol{\lambda},T-t}\left(Z(t)\right)\right\rVert_2^2\right]\\
   & \lesssim k\sigma^2\left( \epsilon_{\text{score}}^2 + L^2dh + L^2h^2\left(d + M\right) + L^3dh^2 + L^2 \left(dh + Mh^2\right)\right)\\
   & \lesssim k\sigma^2 \left(\epsilon_{\text{score}}^2 + L^2dh + L^2h^2M\right).
\end{align*}
Therefore,
\begin{align*}
    \mathcal{L} &= \sigma^2\sum_{l=0}^{N-1}\mathbb{E}_{\alpha}\left[\int_{lh}^{(l+1)h}\left\lVert s^{\boldsymbol{\lambda}}_{T-lh, \theta^*}\left(Z(lh)\right) - \nabla \log p_{\boldsymbol{\lambda},T-t}\left(Z(t)\right)\right\rVert_2^2dt\right]\\ &\lesssim \sigma^2 k T\left(\epsilon_{\text{score}}^2 + L^2dh + L^2h^2M\right).
\end{align*}
Next, we claim that 
\begin{equation}\label{claim2}
    D_{\text{KL}}\left(\alpha \parallel \beta_T\right) \lesssim k\sigma^2 T\left(\epsilon_{\text{score}}^2 + L^2dh + L^2h^2M\right).
\end{equation}
Then from triangle inequality, Pinsker's inequality, and data processing inequality, \begin{align*}
    \text{TV}\left(p_{\boldsymbol{\lambda}}, \hat{p}_{\boldsymbol{\lambda}} \right) 
    &\leq \text{TV}\left(\alpha ,\beta\right) \\
    &\leq \text{TV}\left(\beta,\beta_T\right) + \text{TV}\left(\alpha,\beta_T\right) \\
    &\leq \text{TV}\left(\pi, \gamma^d_T \right) + \text{TV}\left(\alpha,\beta_T\right)\\
    & \lesssim \exp(-T)\max_{i = 1,2,\ldots, k}\sqrt{D_{\text{KL}}\left(p^i_T \parallel \pi\right)} +  \sigma\sqrt{kT}\left(\epsilon_{\text{score}} + L\sqrt{dh} + Lh\sqrt{M}\right).
\end{align*}

Hence it suffices to prove Equation (\ref{claim2}). We will use a localization argument and apply Girsanov's theorem. The notations are the same as in Theorem \ref{Girsanov}.

Let $t \in [0,T]$, $\mathcal{L}(t) = \int_0^t b(s)dB(s)$, where $B$ is an $\alpha$-Brownian motion and for $t \in [lh, (l+1)h]$, 
\begin{equation*}
    b(t) = \sigma\left( s^{\boldsymbol{\lambda}}_{T-lh, \theta^*}\left(Z(lh)\right) - \nabla \log p_{\boldsymbol{\lambda},T-t}\left(Z(t)\right) \right).
\end{equation*}
Recall that 
\begin{equation*}
    \mathbb{E}_{\alpha}\left[ \int_0^T \left \Vert b(s) \right \Vert_2^2ds\right] \lesssim kT\sigma^2 \left(\epsilon_{\text{score}}^2 + L^2dh + L^2h^2M\right).
\end{equation*}
Since $\{\mathcal{E}\left(\mathcal{L}\right)(t)\}_{t \in [0,T]}$ is a local martingale, then there exists a non-decreasing sequence of stopping times $T_n \to T$ such that $\{\mathcal{E}\left(\mathcal{L}\right)(t \wedge T_n )\}_{t \in [0,T]}$ is a true martingale. Note that $\mathcal{E}\left(\mathcal{L}\right)(t \wedge T_n ) = \mathcal{E}\left(\mathcal{L}^n \right)(t)$, where $\mathcal{L}^n(t) = \mathcal{L}(t \wedge T_n)$, therefore
\begin{equation*}
    \mathbb{E}_{\alpha}\left[\mathcal{E}\left(\mathcal{L}^n \right)(T)\right] = \mathbb{E}_{\alpha}\left[\mathcal{E}\left(\mathcal{L}^n \right)(0)\right] = 1.
\end{equation*}
Applying Theorem \ref{Girsanov} to $\mathcal{L}^n(t) = \int_0^t b(s)\mathbf{1}_{[0, T_n]}(s)dB(s)$, we have that under the measure $P^n = \mathcal{E}\left(\mathcal{L}^n \right)(T)\alpha$, there exists a Brownian motion $\beta^n$ such that for all $t \in [0,T]$,
\begin{equation*}
    dB(t) =  \sigma\left( s^{\boldsymbol{\lambda}}_{T-lh, \theta^*}\left(Z(lh)\right) - \nabla \log p_{\boldsymbol{\lambda},T-t}\left(Z(t)\right) \right)\mathbf{1}_{[0, T_n]}(t)dt + d\beta^n(t). 
\end{equation*}
Since under $\alpha$ we have almost surely
\begin{equation*}
    dZ(t) = \left( aZ(t) + \sigma^2  \nabla \log p_{\boldsymbol{\lambda},T-t}\left(Z(t)\right)\right)dt + \sigma dB(t), Z(0) \sim \gamma^d,
\end{equation*}
which also holds $P^n$-almost surely since $P^n \ll \alpha$. Therefore, $P^n$-almost surely, $Z(0) \sim \gamma^d$ and
\begin{align*}
    dZ(t)&= \left[aZ(t) + \sigma^2 s^{\boldsymbol{\lambda}}_{T-lh, \theta^*}\left(Z(lh)\right)\right]\mathbf{1}_{[0, T_n]}dt \\ &+\left[aZ(t) \nabla \log p_{\boldsymbol{\lambda},T-t}\left(Z(t)\right) \right]\mathbf{1}_{[T_n, T]}dt +\sigma d\beta(t).
\end{align*}
In other words, $P^n$ is the law of the solution of the above SDE. Plugging in the Radon-Nikodym derivatives, we get 
\begin{align*}
    D_{\text{KL}}\left(\alpha \parallel P^n\right) &= \mathbb{E}_{\alpha}\left[ \log\left( \frac{d\alpha}{dP^n} \right) \right]\\
    &= \mathbb{E}_{\alpha}\left[ \log\left(\frac{1}{\mathcal{E}\left( \mathcal{L} \right)(T_n)}\right) \right]\\
    &= \mathbb{E}_{\alpha}\left[ -\mathcal{L}(T_n) + \frac{1}{2}\int_0^{T_n}\left \Vert b(s)\right \Vert_2^2ds \right]\\
    &= \mathbb{E}_{\alpha}\left[\frac{1}{2}\int_0^{T_n}\left \Vert b(s)\right \Vert_2^2ds \right]\\
    &\leq \mathbb{E}_{\alpha}\left[\frac{1}{2}\int_0^{T}\left \Vert b(s)\right \Vert_2^2ds \right]\\
    & \lesssim kT\sigma^2 \left(\epsilon_{\text{score}}^2 + L^2dh + L^2h^2M\right)
\end{align*}since $\mathcal{L}(T_n)$ is a martingale and $T_n$ is a bounded stopping time (apply optional sampling theorem).

Now consider a coupling of $\left(P^n\right)_{n \in \mathbb{N}}$, $\beta_T$: a sequence of stochastic processes $\left(Z^n\right)_{n \in \mathbb{N}}$ over the same probability space, a stochastic process $Z$ and a single Brownian motion $W$ over that space such that $Z(0) = Z^n(0)$ almost surely, $Z(0) \sim \gamma^d$,
\begin{align*}
  dZ^n(t) &= \left[aZ^n(t) + \sigma^2 s^{\boldsymbol{\lambda}}_{T-lh, \theta^*}\left(Z^n(lh)\right)\right]\mathbf{1}_{[0, T_n]}dt \\&+ \left[aZ^n(t) + \nabla \log p_{\boldsymbol{\lambda},T-t}\left(Z^n(t)\right) \right]\mathbf{1}_{[T_n, T]}dt +\sigma dW(t),  
\end{align*}
and 
\begin{equation*}
   dZ(t) = \left[aZ(t) + \sigma^2 s^{\boldsymbol{\lambda}}_{T-lh, \theta^*}\left(Z^n(lh)\right)\right]dt +\sigma dW(t).
\end{equation*}
Hence law of $Z^n$ is $P^n$ and law of $Z$ is $\beta_T$. The existence of such coupling is shown in \citet{discretizationerror}.

Fix $\epsilon > 0$, define the map $\pi_{\epsilon}: C([0, T ] : \mathbb{R}^d) \to C([0, T ] : \mathbb{R}^d)$ such that
\begin{equation*}
    \pi_{\epsilon}(\omega)(t) = \omega\left(t \wedge T - \epsilon \right).
\end{equation*}
Since for each $t \in [0, T_n]$, $Z^n(t) = Z(t)$, then from Lemma \ref{12}, we have $\pi_{\epsilon}\left(Z^n\right) \to \pi_{\epsilon}\left(Z\right)$ almost surely uniformly over $[0,T]$, which implies that $\pi_{\epsilon\text{} \#} P^n \to \pi_{\epsilon\text{} \#} \beta_T$ weakly.

Since KL divergence is lower semicontinuous, then from data processing inequality, we have
\begin{align*}
    D_{\text{KL}}\left(\pi_{\epsilon\text{} \#} \alpha \parallel \pi_{\epsilon\text{} \#} \beta_T \right) &\leq \liminf_{n \to \infty} D_{\text{KL}}\left(\pi_{\epsilon\text{} \#} \alpha \parallel \pi_{\epsilon\text{} \#} P^n \right)\\
    &\leq D_{\text{KL}}\left(\alpha \parallel P^n \right)\\
    & \lesssim kT \sigma^2\left(\epsilon_{\text{score}}^2 + L^2dh + L^2h^2M\right).
\end{align*}
From Lemma \ref{13}, as $\epsilon \to 0$, $\pi_{\epsilon}(\omega) \to \omega$ uniformly over $[0,T]$. Hence, from Corollary 9.4.6 in \citet{AGS05}, as $\epsilon \to 0$, $D_{\text{KL}}\left(\pi_{\epsilon\text{} \#} \alpha \parallel \pi_{\epsilon\text{} \#} \beta_T \right) \to D_{\text{KL}}\left(\alpha \parallel \beta_T\right)$. Therefore, from Pinsker's inequality,
\begin{equation*}
    D_{\text{KL}}\left(\alpha \parallel \beta_T\right) \lesssim kT \sigma^2\left(\epsilon_{\text{score}}^2 + L^2dh + L^2h^2M\right).
\end{equation*}
\end{proof}
Before the proof, we introduce some notations that will only be used for the proof of Theorem \ref{err1}. Recall that the vanilla fusion method requires two layers of approximation before running the Frank-Wolfe method: we use target samples to estimate an expectation and we also estimate the densities of auxiliaries. As a notation, we denote $\hat{\bar{p}}_{\hat{\boldsymbol{\lambda}}}$ as the distribution of the generated sample by vanilla fusion, which is $\hat{\nu}_D$ in Section \ref{Sec4}. $\boldsymbol{\hat{\lambda}}$ is the weight computed with $n$ target samples, $p_{\hat{\boldsymbol{\lambda}}}$ denotes the barycenter of $\{\mu_1, \ldots, \mu_k\}$ with the weight $\hat{\boldsymbol{\lambda}}$, and $\bar{p}_{\boldsymbol{\hat{\boldsymbol{\lambda}}}}$ denotes the barycenter of $\{\bar{p}_1, \ldots, \bar{p}_k\}$ with the weight $\hat{\boldsymbol{\lambda}}$, where $\{\bar{p}_1, \ldots, \bar{p}_k\}$ is the collection of estimates of auxiliary densities. Note that $\bar{p}_{\boldsymbol{\hat{\boldsymbol{\lambda}}}} \sim \hat{\mu}_{\boldsymbol{\lambda}}$ in Section \ref{Sec4}. 
\begin{proof}
From triangle inequality, we have \begin{align*}
    \text{TV}\left(\nu, \hat{\bar{p}}_{\hat{\boldsymbol{\lambda}}} \right) 
    &\leq \text{TV}\left(\nu, p_{\hat{\boldsymbol{\lambda}}} \right) + \text{TV}\left( p_{\boldsymbol{\hat{\boldsymbol{\lambda}}}}, \bar{p}_{\boldsymbol{\hat{\boldsymbol{\lambda}}}}\right) + \text{TV}\left(\bar{p}_{\boldsymbol{\hat{\boldsymbol{\lambda}}}}, \hat{\bar{p}}_{\hat{\boldsymbol{\lambda}}}\right)\\
    &:= I_1 + I_2 + I_3,
\end{align*}where $I_1$ represents the error when computing using the Frank-Wolfe method, $I_2 \leq \epsilon_2$ by assumption, and $I_3$ is the error from auxiliary score estimations, which is bounded by Lemma \ref{backwardsampling}.

Therefore it only remains to bound $I_1$. From Pinsker's inequality, 
\begin{equation*}
    I_1 = \text{TV}\left(\nu, p_{\hat{\boldsymbol{\lambda}}} \right) \lesssim \sqrt{D_{\text{KL}}\left(\nu \parallel p_{\hat{\boldsymbol{\lambda}}} \right)},
\end{equation*}hence it is enough to bound $D_{\text{KL}}\left(\nu \parallel p_{\hat{\boldsymbol{\lambda}}} \right)$. From the compactedness assumption, we note that the objective function $F$ of problem (\ref{convex}) is $\tilde{L}$-smooth for some constant $\tilde{L}$. Since the simplex in real space is convex, we denote the diameter of constrain set as $D$.

Recall that in Remark \ref{gradient} in Section \ref{lemmaSec}, the target gradient $\nabla F$ is given by
$$F_i(\boldsymbol{\lambda}) = -\mathbb{E}_{\nu}\left[h_i(X)\right] + \frac{\int \exp\left( \sum _{l=1}^k\lambda_l h_l(y)\right)h_i(y)dy}{\int \exp\left( \sum _{l=1}^k\lambda_l h_l(y)\right)dy}.$$
Thus we define $\hat{F}$ as the estimator of $F$ with the gradient estimated by
$$\nabla \hat{F}_i(\boldsymbol{\lambda}) = \frac{-1}{n}\sum _{j=1}^n h_i(x_j) + \frac{\int \exp\left( \sum _{l=1}^k\lambda_l h_l(y)\right)h_i(y)dy}{\int \exp\left( \sum _{l=1}^k\lambda_l h_l(y)\right)dy}, $$
where $x_j$ are i.i.d. samples from the target distribution $\nu$. Obviously $\hat{F}$ is also convex and $\tilde{L}$-smooth on a compact set with diameter $D$.

We denote $\hat{\boldsymbol{\lambda}}(\tau)$ as the weight computed after $\tau$ iterations with $n$ target samples, then from a standard Frank-Wolfe error analysis (e.g. Theorem 2.2 in \citet{Frank-Wolfe}), we have
$$\hat{F}(\hat{\boldsymbol{\lambda}}(\tau)) - \hat{F}(\boldsymbol{\lambda}^*) \leq \frac{2\tilde{L}D^2}{\tau + 3}.$$
From Fundamental Theorem of Calculus, there exists a curve $\gamma: [0,1] \to \Delta_k$ such that 
$$\gamma(0) = \hat{\boldsymbol{\lambda}}(\tau), \gamma(1) = \boldsymbol{\lambda}^*,$$
$$F(\hat{\boldsymbol{\lambda}}(\tau)) - F(\boldsymbol{\lambda}^*) = \int_{\gamma}\nabla F(z)dz,$$
and
$$\hat{F}(\hat{\boldsymbol{\lambda}}(\tau)) - \hat{F}(\boldsymbol{\lambda}^*) = \int_{\gamma}\nabla \hat{F}(z)dz,$$
where the right hand sides are line integrals.
From Hoeffding's inequality, for a fixed $z \in \Delta_k$, with probability at least $1-\delta$, 
$$\nabla F(z) \lesssim \nabla \hat{F}(z)+\mathcal{O}\left(\left(\log\left(\frac{1}{\delta}\right)\right)^{1/2}n^{-1/2}\right).$$
Therefore, 
\begin{align*}
  F(\hat{\boldsymbol{\lambda}}(\tau)) - F(\boldsymbol{\lambda}^*) &= \int_{\gamma}\nabla F(z)dz \\
  &\lesssim \int_{\gamma}\nabla \hat{F}(z)dz + \mathcal{O}\left(\left(\log\left(\frac{1}{\delta}\right)\right)^{1/2}n^{-1/2}\right)\\
  &= \hat{F}(\hat{\boldsymbol{\lambda}}(\tau)) - \hat{F}(\boldsymbol{\lambda}^*) + \mathcal{O}\left(\left(\log\left(\frac{1}{\delta}\right)\right)^{1/2}n^{-1/2}\right)\\
  &\leq \frac{2\tilde{L}D^2}{\tau + 3}  + \mathcal{O}\left(\left(\log\left(\frac{1}{\delta}\right)\right)^{1/2}n^{-1/2}\right).
\end{align*}
If we let $\tau \to \infty$, then
\begin{align*}
    D_{\text{KL}}\left(\nu \parallel p_{\hat{\boldsymbol{\lambda}}(\tau)} \right) &\leq  D_{\text{KL}}\left(\nu \parallel p_{\boldsymbol{\lambda}^*} \right) +  \mathcal{O}\left(\left(\log\left(\frac{1}{\delta}\right)\right)^{1/2}n^{-1/2}\right)\\
    & \lesssim \epsilon_0^2 + \mathcal{O}\left(\left(\log\left(\frac{1}{\delta}\right)\right)^{1/2}n^{-1/2}\right).
\end{align*}

Therefore, from Pinsker's inequality, with probability at least $1-\delta$,
\begin{align*}
    \text{TV}\left(\nu, \hat{\bar{p}}_{\hat{\boldsymbol{\lambda}}} \right) &\lesssim \epsilon_0 + \epsilon_2 + \exp(-T)\max_{i = 1,2,\ldots, k}\sqrt{D_{\text{KL}}\left(p^i_T \parallel \pi\right)} +  \sigma\sqrt{kT}\left(\epsilon_{\text{score}} + L\sqrt{dh} + Lh\sqrt{M}\right)\\ &+ \mathcal{O}\left(\left(\log\left(\frac{1}{\delta}\right)\right)^{1/4}n^{-1/4}\right). 
\end{align*}
\end{proof}

\subsection{Proof of Theorem \ref{err2}}\label{E2}
Before the proof, we define some notations that will be used in this proof. $\hat{p}_{\hat{\boldsymbol{\Lambda}}}$ denotes the output distribution of Algorithm \ref{Algo:2}, which is  $\hat{\nu}_P$ in Section \ref{Sec4}. For a fixed small $\tilde{T} \ll 1$, in the calibration phase of ScoreFusion, we denote the forward process as $Z$ (which is $\tilde{X}^{\nu}$ in Section \ref{Sec4}): for $t \in [0, \tilde{T}]$,
\begin{equation}\label{algo2forward}
    dZ(t) = -a Z(t)dt + \sigma dW(t), Z(0) \sim \nu. 
\end{equation}We learn an optimal weight by solving problem (\ref{newtrainloss}). We still denote the marginal distribution of process (\ref{algo2forward}) at time $t$ for fixed $\boldsymbol{\Lambda}$ as $p^{\nu}_t$. Even though in practice we do not use the backward process of process (\ref{algo2forward}), the following two versions of backward processes will help in the proof of Theorem \ref{err2}:  
for $t \in [0,\tilde{T}]$ with $\tilde{Z}(0) \sim \gamma^d_{\tilde{T}} \sim \hat{Z}(0)$, and fixed $\boldsymbol{\Lambda}$, 
\begin{equation}\label{fake1}
    d\tilde{Z}(t) = \left( a\tilde{Z}(t) + \sigma^2   \nabla \log p^{\nu}_{T-t}\left(\tilde{Z}(t)\right)\right)dt + \sigma dW(t), \tilde{Z}(\tilde{T}) \sim \nu,
\end{equation}
and for $l = 0, 1, \ldots, N_{\tilde{T}}$, 
\begin{equation}\label{fake2}
    d\hat{Z}(t) = \left( a\hat{Z}(t) + \sigma^2 \sum_{i=1}^k  \Lambda_i s^i_{T-lh,\theta^*}\left(\hat{Z}(lh)\right) \right)dt + \sigma dW(t), t \in [lh, (l+1)h],
\end{equation}where $hN_{\tilde{T}} = \tilde{T}$.
Process (\ref{fake2}) is the time-discretization version of process (\ref{fake1}) without the initialization error (since $\tilde{Z}(0) \sim \hat{Z}(0)$). We denote the law of process (\ref{fake1}) and (\ref{fake2}) as $\alpha_{\tilde{T}}$ and $\beta_{\tilde{T}} \in \mathcal{P}(C([0, T ] : \mathbb{R}^d))$, respectively. For fixed $\boldsymbol{\Lambda}$, we call $\tilde{Z}(\tilde{T}) \sim p^{\tilde{T}}_{\boldsymbol{\Lambda}}$ (which is in fact $\nu$) and $\hat{Z}(\tilde{T}) \sim \hat{p}^{\tilde{T}}_{\boldsymbol{\Lambda}}$.

\begin{proof}
From triangle inequality, we have
\begin{align*}
    \text{TV}\left(\nu, \hat{p}_{\hat{\boldsymbol{\Lambda}}} \right) &\leq
    \text{TV}\left(\nu, \hat{p}^{\tilde{T}}_{\hat{\boldsymbol{\Lambda}}} \right) +     \text{TV}\left(\hat{p}^{\tilde{T}}_{\hat{\boldsymbol{\Lambda}}}, p^{\tilde{T}}_{\hat{\boldsymbol{\Lambda}}} \right) + \text{TV}\left( p^{\tilde{T}}_{\hat{\boldsymbol{\Lambda}}} , p^{\tilde{T}}_{\boldsymbol{\Lambda}^*}\right) + \text{TV}\left( p^{\tilde{T}}_{\boldsymbol{\Lambda}^*}, p_{\boldsymbol{\Lambda}^*}\right) + \text{TV}\left( p_{\boldsymbol{\Lambda}^*}, p_{\hat{\boldsymbol{\Lambda}}}\right)+ \text{TV}\left( p_{\hat{\boldsymbol{\Lambda}}} , \hat{p}_{\hat{\boldsymbol{\Lambda}}}\right)\\
    & \lesssim \text{TV}\left(\nu, \hat{p}^{\tilde{T}}_{\hat{\boldsymbol{\Lambda}}} \right) + \text{TV}\left( p_{\hat{\boldsymbol{\Lambda}}} , \hat{p}_{\hat{\boldsymbol{\Lambda}}}\right) + \text{TV}\left(\nu, p_{\boldsymbol{\Lambda}^*}\right) + \text{TV}\left(p_{\hat{\boldsymbol{\Lambda}}}, p_{\boldsymbol{\Lambda}^*}\right)\\
    &\lesssim \text{TV}\left(\nu, \hat{p}^{\tilde{T}}_{\hat{\boldsymbol{\Lambda}}} \right) + \text{TV}\left( p_{\hat{\boldsymbol{\Lambda}}} , \hat{p}_{\hat{\boldsymbol{\Lambda}}}\right) + \epsilon_1 + \text{TV}\left(p_{\hat{\boldsymbol{\Lambda}}}, p_{\boldsymbol{\Lambda}^*}\right).
\end{align*}
From Lemma \ref{backwardsampling}, we bound the second term
\begin{equation*}
    \text{TV}\left( p_{\hat{\boldsymbol{\Lambda}}} , \hat{p}_{\hat{\boldsymbol{\Lambda}}}\right) \lesssim \exp(-T)\max_{i = 1,2,\ldots, k}\sqrt{D_{\text{KL}}\left(p^i_T \parallel \pi\right)} +  \sqrt{kT}\sigma\left(\epsilon_{\text{score}} + L\sqrt{dh} + Lh\sqrt{M}\right).
\end{equation*}
To bound the first term, we use a Girsanov's theorem and approximation argument similar as in Section \ref{E1} to get
\begin{align*}
D_{\text{KL}}\left(\nu \parallel \hat{p}^{\tilde{T}}_{\hat{\boldsymbol{\Lambda}}}\right) &\lesssim 
    D_{\text{KL}}\left(\alpha_{\tilde{T}} \parallel \beta_{\tilde{T}}\right)\\ &\lesssim  \frac{1}{\tilde{T}}\sum_{l=0}^{N_{\tilde{T}}-1}\mathbb{E}_{\alpha_{\tilde{T}}}\left[\int_{lh}^{(l+1)h}\sigma^2 \left\lVert s^{\boldsymbol{\Lambda}}_{T-lh, \theta^*}\left(Z(lh)\right) - \nabla \log p^{\nu}_{T- t}\left(Z(t)\right)\right\rVert_2^2dt\right]\\
 &\lesssim  \frac{1}{\tilde{T}} \int_0^{\tilde{T}}\left[\sigma^2\mathbb{E}_{Z(t) \sim p^{\nu}_t}\left[\left\lVert \sum_{i=1}^{k} \left( \Lambda_i s^i_{t, \theta^*}\left(Z(t)\right)\right) - \nabla \log p^{\nu}_{t}(Z(t))  \right\rVert_2^2 \right] \right]dt\\
  &\lesssim \tilde{\mathcal{L}}\left(\hat{\boldsymbol{\Lambda}};\theta^*,\sigma^2\right) = \tilde{\mathcal{L}}\left(\boldsymbol{\Lambda}^*;\theta^*, \sigma^2\right)  + \left[\tilde{\mathcal{L}}\left(\hat{\boldsymbol{\Lambda}};\theta^*, \sigma^2\right)- \tilde{\mathcal{L}}\left(\boldsymbol{\Lambda}^*;\theta^*, \sigma^2\right)\right]\\
  &:= I_1 + I_2,
\end{align*}where $I_1$ represents the approximation error and $I_2$ represents the excess risk. 

We note that from the bi-Lipschitz assumption and the compact support assumption,
\begin{align*}
D_{KL}\left(p_{\hat{\boldsymbol{\Lambda}}} \parallel p_{\boldsymbol{\Lambda}^*}\right) \sim \tilde{\mathcal{L}}\left(\hat{\boldsymbol{\Lambda}};\theta^*,\sigma^2\right),
\end{align*}hence we only need to bound $I_1$ and $I_2$ then.

From McDiarmid's inequality, for $\delta > 0$, with probability at least $1-\delta$, 
\begin{align*}
    I_2 \lesssim \mathcal{O}\left(\sigma^2\left(\log\left(\frac{1}{\delta}\right)\right)^{1/2}\left(N_{\tilde{T}}n\right)^{-1/2}\right) \lesssim \mathcal{O}\left(\sigma^2\left(\log\left(\frac{1}{\delta}\right)\right)^{1/2}n^{-1/2}\right) 
\end{align*} since $\tilde{T} \lesssim T$ and $N_{\tilde{T}}$ is small. 

Finally, we need to give a bound on $I_1$. The intuition is that from continuity of a diffusion process, when $h$ is small, then $p_h$ and $p_0$ are similar. Since the backward fused process is constructed as a process whose drift term is a linear combination of the auxiliary drifts, then the approximation error of the linear regression should be small, given Assumption \ref{optimalaux}. 

Fix $t \in [0, \tilde{T}]$, then from the Lipschitz and the compactedness assumption, the loss $\mathcal{L}$  is 
\begin{align*}
\mathcal{L} &= \tilde{\mathcal{L}}\left(\boldsymbol{\Lambda}^*;\theta^*, \sigma^2\right) \\
    &= \frac{\sigma^2}{\tilde{T}}\int_0^{\tilde{T}}\mathbb{E}_{Z(t) \sim p^{\nu}_t}\left[\left\lVert \sum_{i=1}^{k}\Lambda^*_i   s^i_{t, \theta^*}\left(Z(t)\right) - \nabla \log p^{\nu}_{t}(Z(t))  \right\rVert_2^2\right]dt\\ 
    &\lesssim \frac{\sigma^2}{\tilde{T}}\int_0^{\tilde{T}}\mathbb{E}_{Z(t) \sim p^{\nu}_t}\left[\left\lVert \sum_{i=1}^{k}\Lambda^*_i   s^i_{t, \theta^*}\left(Z(t)\right) - \sum_{i=1}^{k}\Lambda^*_i  \nabla \log p^i_t(Z(t))  \right\rVert_2^2\right]dt\\ &+\frac{\sigma^2}{\tilde{T}}\int_0^{\tilde{T}} \mathbb{E}_{Z(t) \sim p^{\nu}_t}\left[\left\lVert \sum_{i=1}^{k}\Lambda^*_i  \nabla \log p^i_t(Z(t))  - \nabla \log p^{\nu}_{t}(Z(t)) \right\rVert_2^2\right]dt\\
    &\lesssim \sigma^2k\epsilon_{\text{score}}^2 + \sigma^2 \mathbb{E}_{Z(0) \sim \nu}\left[\left\lVert \sum_{i=1}^{k}\Lambda^*_i  \nabla \log p^i_0(Z(0))  - \nabla \log p^{\nu}_{0}(Z(0)) \right\rVert_2^2\right]dt \\ &+ \frac{\sigma^2}{\tilde{T}}\int_0^{\tilde{T}}\mathbb{E}_{Z(t) \sim p^{\nu}_t}\left[\left\lVert \sum_{i=1}^{k}\Lambda^*_i  \nabla \log p^i_t(Z(t))  - \sum_{i=1}^{k}\Lambda^*_i  \nabla \log p^i_0(Z(t)) \right\rVert_2^2\right]dt\\
    &+\frac{\sigma^2}{\tilde{T}}\int_0^{\tilde{T}}\mathbb{E}_{Z(t) \sim p^{\nu}_t}\left[\left\lVert   \nabla \log p^{\nu}_t(Z(t))  -\nabla \log p^{\nu}_0(Z(t)) \right\rVert_2^2\right]dt\\
    & \lesssim \sigma^2k\epsilon_{\text{score}}^2 + \sigma^2\mathbb{E}_{Z(0) \sim \nu}\left[\left\lVert p^{\nu}_0(Z(0))  - p_{\boldsymbol{\Lambda}^*}(Z(0))  \right\rVert_2^2\right] \\ &+  \sigma^2\mathbb{E}_{Z(\tilde{T}) \sim \gamma^d_{\tilde{T}}}\left[\left\lVert p^{\nu}_{\tilde{T}}(Z(\tilde{T}))  - p^{\nu}_0(Z(\tilde{T}))  \right\rVert_2^2\right] + \max_{j = 1,2,\ldots,k}\sigma^2\mathbb{E}_{Z(\tilde{T}) \sim \gamma^d_{\tilde{T}}}\left[\left\lVert p^{j}_{\tilde{T}}(Z(\tilde{T}))  - p^{j}_0(Z(\tilde{T}))  \right\rVert_2^2\right] \\
    & \lesssim \sigma^2k\epsilon_{\text{score}}^2 + \sigma^2D_{\text{KL}}\left(\nu \parallel p_{\boldsymbol{\Lambda}}^*\right) + \max_{j = 1,2,\ldots,k}\sigma^2D_{\text{KL}}\left( p^{j}_{\tilde{T}} \parallel  p^{j}_0\right) \\
    &\lesssim \sigma^2k\epsilon_{\text{score}}^2 + \sigma^2 \epsilon_1^2 + \sigma^2k\mathcal{O}\left(\left(\tilde{T}\right)^{1/2}\right).
\end{align*}
Therefore, from Pinsker's inequality, with probability at least $1-\delta$,
\begin{align*}
\text{TV}\left(\nu, \hat{p}_{\hat{\boldsymbol{\Lambda}}} \right) 
& \lesssim \text{TV}\left(\nu, \hat{p}^{\tilde{T}}_{\hat{\boldsymbol{\Lambda}}} \right) + \text{TV}\left( p_{\hat{\boldsymbol{\Lambda}}} , \hat{p}_{\hat{\boldsymbol{\Lambda}}}\right) + \epsilon_1 + \text{TV}\left(p_{\hat{\boldsymbol{\Lambda}}}, p_{\boldsymbol{\Lambda}^*}\right)\\
& \lesssim \sqrt{D_{\text{KL}}\left(\nu \parallel \hat{p}^{\tilde{T}}_{\hat{\boldsymbol{\Lambda}}}\right)} + \text{TV}\left( p_{\hat{\boldsymbol{\Lambda}}} , \hat{p}_{\hat{\boldsymbol{\Lambda}}}\right) + \epsilon_1 + \sqrt{D_{\text{KL}}\left(p_{\hat{\boldsymbol{\Lambda}}} \parallel p_{\boldsymbol{\Lambda}^*}\right)}\\
& \lesssim (\sigma+1) \epsilon_1 + \sigma\sqrt{k}\mathcal{O}\left(\tilde{T}^{1/4}\right) + \mathcal{O}\left(\sigma\left(\log\left(\frac{1}{\delta}\right)\right)^{1/4}n^{-1/4}\right)\\ &+ \exp(-T)\max_{i = 1,2,\ldots, k}\sqrt{D_{\text{KL}}\left(p^i_T \parallel \pi\right)} +  \sigma\sqrt{kT}\left(\epsilon_{\text{score}} + L\sqrt{dh} + Lh\sqrt{M}\right),
\end{align*}which finishes the proof.
\end{proof}

\section{RELATED WORK IN FINANCE}\label{finance}
After the first draft of our paper, a concurrent work has been published by \citet{KLfinance}, which discusses the KL barycenter in the process level.  \citet{KLfinance} considers a constrained optimization problem in the finance setting of merging experts’ ideas, while our KL barycenter problem is unconstrained to fuse several auxiliary processes. The KL barycenter problem in \citet{KLfinance} imposes additional constrained conditions and coefficients of the SDEs. However, at this stage, our assumption is enough for the purpose of combining auxiliary models in the setting of generative models and is easy to integrate with neural network architecture. Our solution of the barycenter problem (Theorem \ref{sol2}) is equivalent to Proposition 2.4 in \citet{KLfinance} since the Radon-Nikodym derivative is 1 if our setting is plugged in. Moreover, the two papers discuss two different problems, use different approaches, and derive different results: in \citet{KLfinance}, the constrained optimization is solved via the dynamic programming approach and it is not related to statistics or machine learning, while in our case, we utilize the optimality of in the sense of KL barycenter to design a new machine learning method (ScoreFusion) and derive the sample complexity bound.

\section{TERMINOLOGY CLARIFICATION}\label{sec:clarify}
\begin{itemize}
    \item When we use the phrase ``diffusion model'' in this paper, or in discussing how to ``fuse'' a number of them, the underlying object is the U-Net \citep{unet} that parametrizes the score function of the time-reversed Ornstein–Uhlenbeck process, i.e., the $s_{t, \theta}(X)$ that parametrizes the $\nabla \log p_{t}(X)$ term in Equation \ref{continuous_rep}. Although our notation is based on the stochastic differential equation (SDE) view of diffusion modeling, we note that denoising diffusion probabilistic modeling (DDPM; \citet{ddpm}) can be formulated as a time-discretized version of SDE diffusion, as shown in \citet{scoreSDE}; the U-Net (i.e. $s_{t, \theta}(X)$) in SDE plays a similar role as the $\epsilon_{\theta}(X, t)$ function in DDPM.
    \item In this paper, we often used the term ``MNIST'' to refer to the digits dataset; but to be exact, what we meant is the extended version of the original MNIST dataset called EMNIST \citep{emnist}. It is curated by the same institution (NIST), but containing a significantly larger set of digit samples. Since MNIST is the better-known name for the digits dataset, we referred to it as such in the text. In our code base, however, you would see us referencing the dataset as \texttt{EMNIST}, its real name.
\end{itemize}

\section{MNIST DIGITS EXPERIMENT}\label{sec:mnist-appendix}

\subsection{Model Architecture}
Our parametrization of $s_{t, \theta}(X)$ employs a U-Net architecture composed of $8$ convolutional layers ($4$ for encoding and $4$ for decoding) with group normalization applied after each convolution, totaling $1M$ trainable parameters. Its code is modified from the \texttt{ScoreNet} class in the GitHub repository of \citet{scoreSDE}. The model also has skip connections between corresponding layers in the encoder and decoder. Temporal information is encoded via Gaussian random Fourier projections, followed by a dense layer to produce time embeddings, which are injected into each layer of the network through fully connected layers. This allows the model to condition its outputs on time steps effectively while maintaining multi-resolution feature extraction.

\subsection{Implementation Details}
Calibration of ScoreFusion (Part I of Algorithm \ref{Algo:2} in the main text) can be understood as optimizing the normalized weights of an additional linear layer that sum over the $s_{t, \theta_i}(X)$ score tensors. The PyTorch workflow is implemented by our \texttt{FusionNet} class in \texttt{src/model\_EMNIST.py}, provided in our Github repository. To ensure a fair comparison, the baselines and the auxiliary score models share the same U-Net architecture dimensions. The only difference between a Baseline 1 (B1) instance and an auxiliary instance is whether they are pre-trained or not. Each Baseline 2 (B2) instance is instantiated from a pre-trained auxiliary instance, and its parameters are fine-tuned on the target data until its test loss starts to increase.

We follow the standard machine learning convention of splitting each dataset into train, validation, and test sets with stratified sampling to ensure class balance. The ratio of training data to validation data is $4:1$. We use the ground truth digit labels only for data-splitting, hiding them from the model during training. Model training taking more than an hour was run on two NVIDIA A40 GPUs in a computing cluster, while lightweight tasks were run on Google Colab using one A100 GPU.

Auxiliary and baseline U-Nets use the same trainer function \texttt{generic\_train(*args)}, implemented in \texttt{src/model\_EMNIST.py}. Auxiliary models are trained from scratch on 25000 MNIST images. Baseline 1 instances are also trained from scratch, with training data ranging from $2^6$ up to $2^{14}$. At training time, we supply an additional $25\%$ data as the validation set. In short, auxiliary and baseline training share the training workflow, U-Net architecture, and score matching loss \texttt{loss\_fn(*args)} as defined in \texttt{src/model\_EMNIST.py}. Test evaluations are conducted on 5000 held-out images with a batch size of 200. The trainer contains the usual PyTorch pipeline of mini-batch + ADAM optimizer, plus three adaptations:

\begin{itemize}
    \item ADAM learning rate (\texttt{lr}) is 1e-4 for all auxiliary score training. Baseline 1 training follows the lr schedule in format (lr, [sample size using this lr]): {(1e-3, [64, 128]), (2e-4, [256, 512, 1024]), (1e-4, [4096, 16384])}. We set a higher lr for smaller datasets to accelerate loss decrease; empirically, it goes a lot slower on smaller datasets. The batch size is $min(128, TrainSize)$ for both training and validation loss.
    \item \texttt{ExponentialMovingAverage} (see \texttt{src/training.py}) is used for all checkpoint updating, the decay rate being a default $0.999$.
    \item Early-stopping of both baseline and auxiliary training is determined by the same algorithm to ensure consistency. A detailed description is given in below.
\end{itemize}

\paragraph{Early Stopping} To balance overfitting reduction and adequate learning, our experiments early-stop the model learning when validation loss exceeds the lowest realized validation loss by $\geq 50\%$ for more than $50$ consecutive epochs, implemented by the \texttt{EarlyStopper} class in \texttt{src/training.py}. We examine the fine-tuning / training loss curves of Baseline $2$ and $1$ U-Nets on $\leq 1000$ MNIST images and provide one such plot in Figure \ref{fig:training-loss}. These inspections suggest that there is no under-training of neither Baseline 1 nor Baseline 2.

\paragraph{Baseline 2 Setup} The auxiliary model we fine-tuned on generates an empirical digits distribution of \{‘\texttt{7}’: $72\%$, ‘\texttt{9}’: $24\%$, others: $4\%$\} over 10,000 samples. Out of the four pre-trained auxiliaries, this one is chosen because it was trained from the $(70\%, 30\%)$ frequency and therefore is the closest in frequency to the target distribution of \{‘\texttt{7}’: $60\%$, ‘\texttt{9}’: $40\%$\}; intuitively, it should be the easiest to finetune among the four. Using full parameter fine-tuning, we initialize the score weights with the chosen auxiliary, and update all weights every step. We use the ADAM optimizer with \texttt{lr} = 2e-5. \texttt{Max\_epoch} is set at 200, and rest of the training/test hyperparameters remain the same as stated as above (e.g. early-stopping criterion, exponential moving average).

\subsection{Additional Statistics \& Samples}\label{subsub:mnist-stats}

Tables \ref{tab:aux-hd-breakdown} and \ref{tab:aux-hd-breakdown-a2} provide additional information about the outcome of the MNIST experiments. Figure \ref{fig:training-loss} shows the typical finetuning / training loss dynamics for Baselines 1 and 2. Lastly, Figures \ref{fig:mnist-samples} and \ref{fig:mnist-naive} show i.i.d. digits samples generated by the ScoreFusion-trained model versus those of the two baseline methods.

\begin{table}[htbp]
  \centering
  \caption{Full digits proportions of 1024 images sampled from each of the four unadapted auxiliary models, classified by a SpinalNet \citep{spinalnet}. For simplicity, in the main text we have combined the proportions of digits other than \texttt{7} and \texttt{9} into one meta class named \texttt{Others}.}
    \begin{tabular}{crrrrrrrrrr}
    \Xhline{1.2pt} 
    Auxiliary & \multicolumn{1}{c}{\texttt{0}} & \multicolumn{1}{c}{\texttt{1}} & \multicolumn{1}{c}{\texttt{2}} & \multicolumn{1}{c}{\texttt{3}} & \multicolumn{1}{c}{\texttt{4}} & \multicolumn{1}{c}{\texttt{5}} & \multicolumn{1}{c}{\texttt{6}} & \multicolumn{1}{c}{\texttt{7}} & \multicolumn{1}{c}{8} & \multicolumn{1}{c}{9} \\
    \midrule
    1     & \multicolumn{1}{c}{0.1\%} & \multicolumn{1}{c}{0.1\%} & \multicolumn{1}{c}{0.6\%} & \multicolumn{1}{c}{0.6\%} & \multicolumn{1}{c}{1.1\%} & \multicolumn{1}{c}{0.3\%} & \multicolumn{1}{c}{0.0\%} & \multicolumn{1}{c}{18.7\%} & \multicolumn{1}{c}{0.2\%} & \multicolumn{1}{c}{78.2\%} \\
    2     & \multicolumn{1}{c}{0.1\%} & \multicolumn{1}{c}{0.1\%} & \multicolumn{1}{c}{0.3\%} & \multicolumn{1}{c}{0.8\%} & \multicolumn{1}{c}{1.1\%} & \multicolumn{1}{c}{0.5\%} & \multicolumn{1}{c}{0.0\%} & \multicolumn{1}{c}{41.1\%} & \multicolumn{1}{c}{0.2\%} & \multicolumn{1}{c}{55.8\%} \\
    3     & \multicolumn{1}{c}{0.0\%} & \multicolumn{1}{c}{0.2\%} & \multicolumn{1}{c}{0.7\%} & \multicolumn{1}{c}{0.7\%} & \multicolumn{1}{c}{1.2\%} & \multicolumn{1}{c}{0.8\%} & \multicolumn{1}{c}{0.0\%} & \multicolumn{1}{c}{72.1\%} & \multicolumn{1}{c}{0.6\%} & \multicolumn{1}{c}{23.7\%} \\
    4     & \multicolumn{1}{c}{0.1\%} & \multicolumn{1}{c}{0.5\%} & \multicolumn{1}{c}{0.7\%} & \multicolumn{1}{c}{0.5\%} & \multicolumn{1}{c}{0.9\%} & \multicolumn{1}{c}{0.4\%} & \multicolumn{1}{c}{0.1\%} & \multicolumn{1}{c}{87.9\%} & \multicolumn{1}{c}{0.3\%} & \multicolumn{1}{c}{8.6\%} \\
    \midrule
    Target Distribution &       &       &       &       &       &       &       & 60\%  &       & 40\% \\
    \Xhline{1.2pt} 
    \end{tabular}%
  \label{tab:aux-hd-breakdown}%
\end{table}%

\begin{table}[htbp]
  \centering
  \caption{Optimal weights $\boldsymbol{\lambda}^*$ corresponding to the ScoreFusion models whose NLL test losses we reported in Table \ref{nll}. Each column is a weight vector that parameterizes the ScoreFusion model trained with $2^j$ data.}
    \begin{tabular}{crrrrrrr}
    \Xhline{1.2pt} 
    $\boldsymbol{\lambda}_i$ & \multicolumn{1}{c}{$2^{6}$} & \multicolumn{1}{c}{$2^{7}$} & \multicolumn{1}{c}{$2^{8}$} & \multicolumn{1}{c}{$2^{9}$} & \multicolumn{1}{c}{$2^{10}$} & \multicolumn{1}{c}{$2^{12}$} & \multicolumn{1}{c}{$2^{14}$} \\
    \midrule
    $i=1$     & \multicolumn{1}{c}{0.199} & \multicolumn{1}{c}{0.187} & \multicolumn{1}{c}{0.182} & \multicolumn{1}{c}{0.181} & \multicolumn{1}{c}{0.167} & \multicolumn{1}{c}{0.183} & \multicolumn{1}{c}{0.176}  \\
    $i=2$   & \multicolumn{1}{c}{0.305}  & \multicolumn{1}{c}{0.326} & \multicolumn{1}{c}{0.328} & \multicolumn{1}{c}{0.319} & \multicolumn{1}{c}{0.345} & \multicolumn{1}{c}{0.311} & \multicolumn{1}{c}{0.310} \\
    $i=3$     & \multicolumn{1}{c}{0.279} & \multicolumn{1}{c}{0.267} & \multicolumn{1}{c}{0.284} & \multicolumn{1}{c}{0.285} & \multicolumn{1}{c}{0.319} & \multicolumn{1}{c}{0.294} & \multicolumn{1}{c}{0.295} \\
    $i=4$     & \multicolumn{1}{c}{0.217} & \multicolumn{1}{c}{0.220} & \multicolumn{1}{c}{0.206} & \multicolumn{1}{c}{0.216} & \multicolumn{1}{c}{0.170} & \multicolumn{1}{c}{0.213} & \multicolumn{1}{c}{0.220} \\
    \Xhline{1.2pt} 
    \end{tabular}%
  \label{tab:aux-hd-breakdown-a2}%
\end{table}%

\begin{figure*}[htbp]
    \centering
    \begin{subfigure}
        \centering
\includegraphics[width=0.47\textwidth]{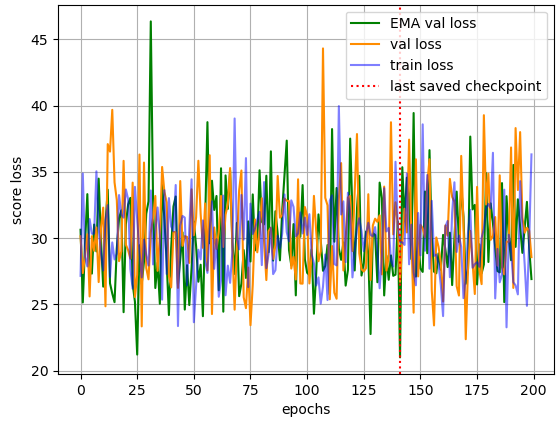}
    \end{subfigure}
    \hfill
    \begin{subfigure}
        \centering     \includegraphics[width=0.485\textwidth]{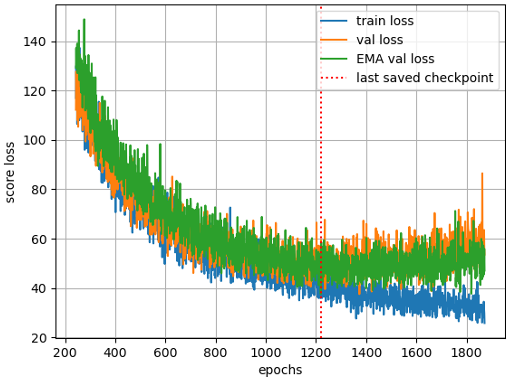}
    \end{subfigure}
    \caption{\textbf{Left}: Typical fine-tuning loss curves for Baseline 2. Train data size + Val data size $= 64+16 = 80$. \textbf{Right}: Baseline 1 training loss curves. Train data size + Val data size = 512+128 =640. EMA = Exponential Moving Average. Dotted line is the epoch where EMA checkpoint had the lowest loss on the validation data.}
    \label{fig:training-loss}
\end{figure*}

\begin{figure*}[htbp]
    \centering
    \begin{subfigure}
        \centering
\includegraphics[width=0.48\textwidth]{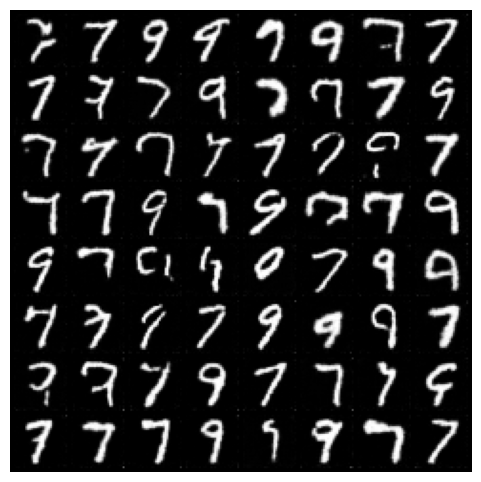}
    \end{subfigure}
    \hfill
    \begin{subfigure}
        \centering     \includegraphics[width=0.48\textwidth]{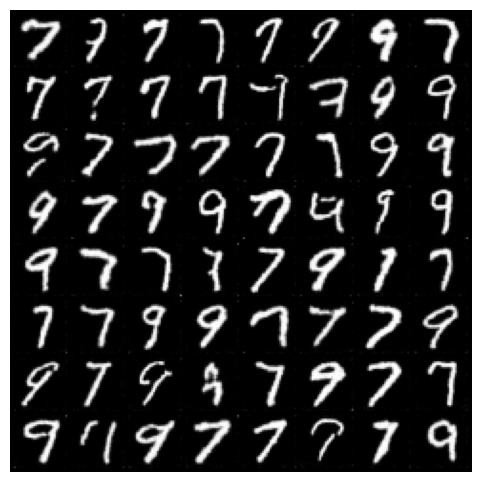}
    \end{subfigure}
    \caption{\textbf{Left}: uncurated samples generated by Baseline 2, obtained from directly fine-tuning a 70-30 auxiliary model. \textbf{Right}: uncurated samples generated by calibrating a ScoreFusion model. Both \textbf{Left} and \textbf{Right} models were fine-tuned/calibrated on the same $64$ images from the target population.}
    \label{fig:mnist-samples}
\end{figure*}

\begin{figure}[ht]
    \centering
    \includegraphics[width=0.95\textwidth]{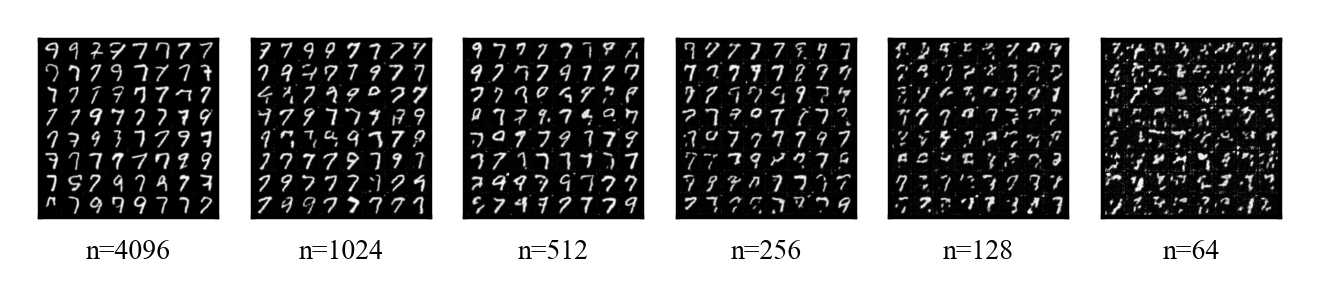}
    \caption{Samples generated by Baseline 1 under different quantities of training data.}
    \label{fig:mnist-naive}
\end{figure}

\section{SDXL PROFESSIONAL PORTRAITS EXPERIMENT}\label{face}

\subsection{Model Architecture}

All generative models in our stable diffusion experiment are derived from the open-source SDXL 1.0 base \citep{sdxl1_release}. As a type of latent diffusion model \citep{sd}, SDXL is composed of a variational autoencoder (VAE), a text encoder and a 2D conditioned U-Net. The U-Net consists of a combination of convolutional and cross-attention blocks. It operates across three resolution scales, with cross-attention integrated into both the downsampling and upsampling paths. The model conditions on text and time-step information via cross-attention, using positional embeddings for time encoding. According to Hugging Face's documentation \citep{lora_tutorial}, fine-tuned models in the stable diffusion community are almost always produced with DreamBooth \citep{dreambooth}, which outputs a LoRA adapter (Low-Rank Adaptation; \citet{lora}) of only the U-Net model, leaving the text encoder and the VAE unchanged. Intuitively, the LoRA adapter adds low-rank perturbations to the dense layers in a base U-Net so as to minimize the empirical score-matching loss on the fine-tuning dataset. 

The two auxiliary diffusion models used in our experiment are also products of LoRA fine-tuning; their U-Nets were each fine-tuned from a gender- and race-homogenoeous dataset by \citet{af_lora, wm_lora}, such that the unconditional generation (unspecified conditions being a person's gender and race) of professional portraits creates image samples that predominantly mirror the monolithic phenotype of the corresponding finetuning population. This design choice is intended to stylize the issue of distributional social bias in popular text-to-image (TTI) models, a problem widely recognized in the AI alignment literature \citep{dalleval, luccioni2024}. The sub-problem addressed by our SDXL experiment is: given TTI model checkpoints that are individually fine-tuned on a distinct subpopulation, sampling from the KL barycenter provides a theoretically grounded approach to organically blend heterogeneous features in the subpopulations into one generative model.

\subsection{Implementation Details}

Since Appendix \ref{sec:mnist-appendix} and Section \ref{sub:MNIST} have thoroughly tested the sample efficiency of the training phase of ScoreFusion and reflected our theoretical development, we focus on exploring the inference-time / sampling benefit of our proposed KL barycenter approach in the SDXL experiment. Hence we assumed that the barycenter weights $\boldsymbol{\lambda}$ are fixed at $(0.5, 0.5)$ in this experiment. Our goal is to probe the ability of KL barycenter sampling to sample from low-probability regions (with respect to the auxiliary models' sampling distributions) in the pixel space; we provide more theoretical motivation in section \ref{sec:theory-inspired}.

To implement this sampling, we revised the \href{https://github.com/huggingface/diffusers/blob/main/src/diffusers/pipelines/stable_diffusion_xl/pipeline_stable_diffusion_xl.py}{source code} of the \textit{Diffusers} library that implements the denoising loop of SDXL 1.0, allowing us to ensemble score evaluations of the two auxiliary models at inference time. 

\subsubsection{Sampling from KL-Divergence Barycenter}\label{sec:sdxl-kl}

Because the \texttt{StableDiffusionXLPipeline} class in Hugging Face's \texttt{Diffusers} library does not natively support the ensembling of two U-Nets denoisers during inference, we adapted their source code so that it can ensemble the log-probability gradient tensors for a specified set of barycenter weights. The new pipeline class can be inspected in \texttt{src/SDXL\_inference.py} in our Github repository. As mentioned in the previous subsection, the two auxiliary models share the exact same VAE and text encoder. Therefore, the only component of the generation pipeline that we combine at inference time is the U-Net's outputs.

Our hyperparameter setting follows the default values of a general \texttt{Diffusers} inference pipeline; checkpoints are loaded at \texttt{fp16} precision, CFG guidance scale is $5.0$, guidance rescale is zeroed, latent noise has shape 4x128x128, and the final tensor output has shape 4x1024x1024. The number of denoising steps is set to $100$, and the timesteps are given by the default discrete Euler scheduler. We used the same text prompts across all generations: \{`Positive Prompt': \textit{``a photo of a mathematics scientist, looking at the camera, ultra quality, sharp focus''}, `Negative Prompt': \textit{``cartoon, anime, 3d, painting, b\&w, low quality.''}\}

\subsubsection{Checkpoint Merging Model}

For comparison, we also generated samples from a diffusion model that results from equally merging LoRA checkpoints of the two auxiliary models. Note that because the base SDXL model is the same for both models, merging the LoRA checkpoints and appending the merged checkpoint to the base model is the same as merging the entirety of the two auxiliary model checkpoints. As stated in the main text, checkpoint merging is a common model adaptation method in the online stable diffusion community. Figure \ref{fig:automatic1111} shows the checkpoint merging UI of the popular image generation tool \texttt{stable-diffusion-webui} \citep{AUTOMATIC1111}. The inference-time hyperparameter setting is the same as the one stated in Section \ref{sec:sdxl-kl}.

\begin{figure}[ht]
    \centering
    \includegraphics[width=0.7\textwidth]{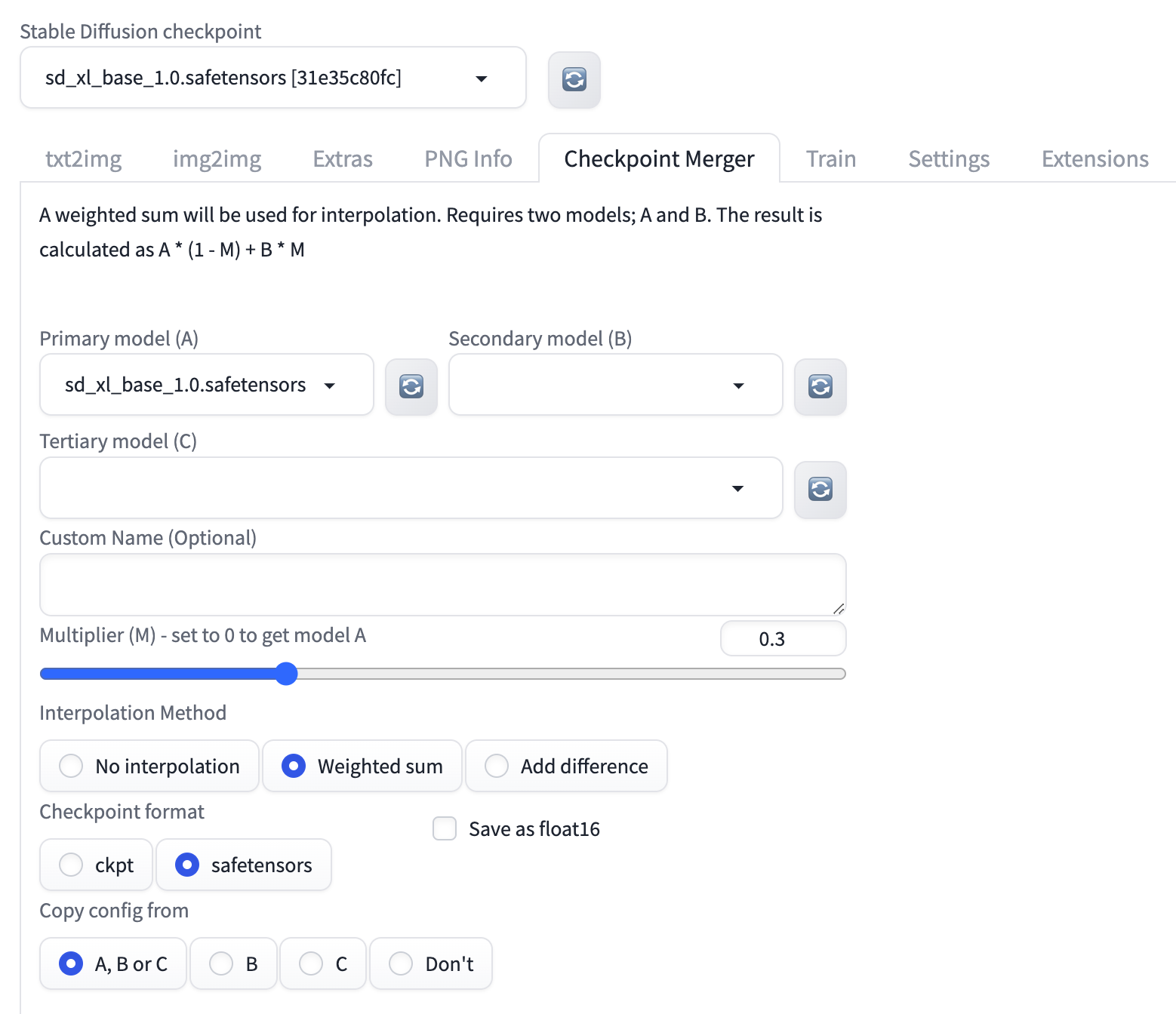}
    \caption{UI for checkpoint merging in \texttt{stable-diffusion-webui} \citep{AUTOMATIC1111}. Screenshot taken by us. The exposition text for computing the weighted sum was from the UI developer. The public repository has over $140$k GitHub stars and $26.7$k forks as of October 2024. To put its popularity in perspective, OpenAI's API demo repository \texttt{openai-cookbook} has $59.2$k stars and $9.4$k forks.}
    \label{fig:automatic1111}
\end{figure}

\subsubsection{Individual Auxiliary Models}

Sampling from an individual auxiliary model is the same as sampling from a regular SDXL model, also equivalent to setting $\boldsymbol{\lambda} = (1,0)$ or $(0,1)$ for the KL barycenter approach or checkpoint merging. The inference-time hyperparameter setting is the same as the one stated in Section \ref{sec:sdxl-kl}.

\subsubsection{CLIP Distance Computations}\label{subsub:CLIP}
To quantify the semantic shifts and nuances we observe visually in the generated samples, we calculate CLIP distances \citep{clip} between each sampled image and four semantic concepts: [Female, Male, East Asian, White] using OpenAI’s \href{https://huggingface.co/openai/clip-vit-base-patch32}{\texttt{clip-vit-base-patch32}} image encoder model. Per convention, the CLIP distance is calculated as $100 \times {cosine\_similarity}(E_I, E_T)$, where $E_I$ and $E_T$ are respectively the image and text embedding vectors encoded by the CLIP model. Intuitively, it measures the semantic similarity between a sampled image and a gender / ethnic concept. This setup was used to produce Figures \ref{fig:gender-scatter} and \ref{fig:main-kdes} shown earlier in the text. The kernel density estimation (KDE) contour plot is simply a smoothed version of the scatter plot, made using the \texttt{Seaborn} package for better readability. A couple additional comments are in place regarding Figure \ref{fig:gender-scatter}:

\begin{itemize}
    \item We use a 2D joint plot instead of picking one gender, because gender expression can be non-binary and an individual representation may be rationalizable to multiple classes simultaneously; a “woman” can also exhibit visual traits that are considered “masculine” by social norms; the capacity for storing and representing ambiguity, we believe, should be a trait of a robust, information-theoretically efficient world simulator.
    \item One can see the two unimodal clusters (red \& orange) formed by the empirical distribution of each biased, stereotypical model, confirming the qualitative judgement.
    \item The KL barycenter’s empirical distribution (blue) \textbf{interpolates} the valley region between the two modes, which was underexplored by the two biased models. By contrast, the baseline’s (checkpoint-merged model) empirical distribution appears to be more of a bimodal mixture of the two biased models’ empirical distributions, i.e., it is less effective than the barycenter at interpolating the underrepresented regions.
\end{itemize}

Figure \ref{fig:sdxl-ethnic-clips} repeats the statistical analysis of Figure \ref{fig:gender-scatter} for two ethnic concepts ``East Asian'' and ``White''. Notice that the empirical distributions of the two biased models still form two distinct unimodal clusters, but the distance between their centroids is shorter than that of the gender clusters’. The relative ambiguity is expected, as there are many more identifiable ethnic categories a person can fall under besides East Asian or White. A sharper clustering would require the inclusion of more ethnic covariates to ``lift'' the scatter points to a higher-dimensional space. One can still observe the semantic dispersion of the KL barycenter in the valley region between the two biased modes, though the distinction with the checkpoint-merged model is less pronounced than that in Figure \ref{fig:gender-scatter}. Both of the adapted models exhibit a slight bias towards sampling the ``White'' hyperplane, which, as we hypothesized earlier, is likely an artifact of marginalization over other latent ethnic concepts.

\begin{figure*}[htbp]
    \centering
    \begin{subfigure}
        \centering
        \includegraphics[width=0.48\textwidth]{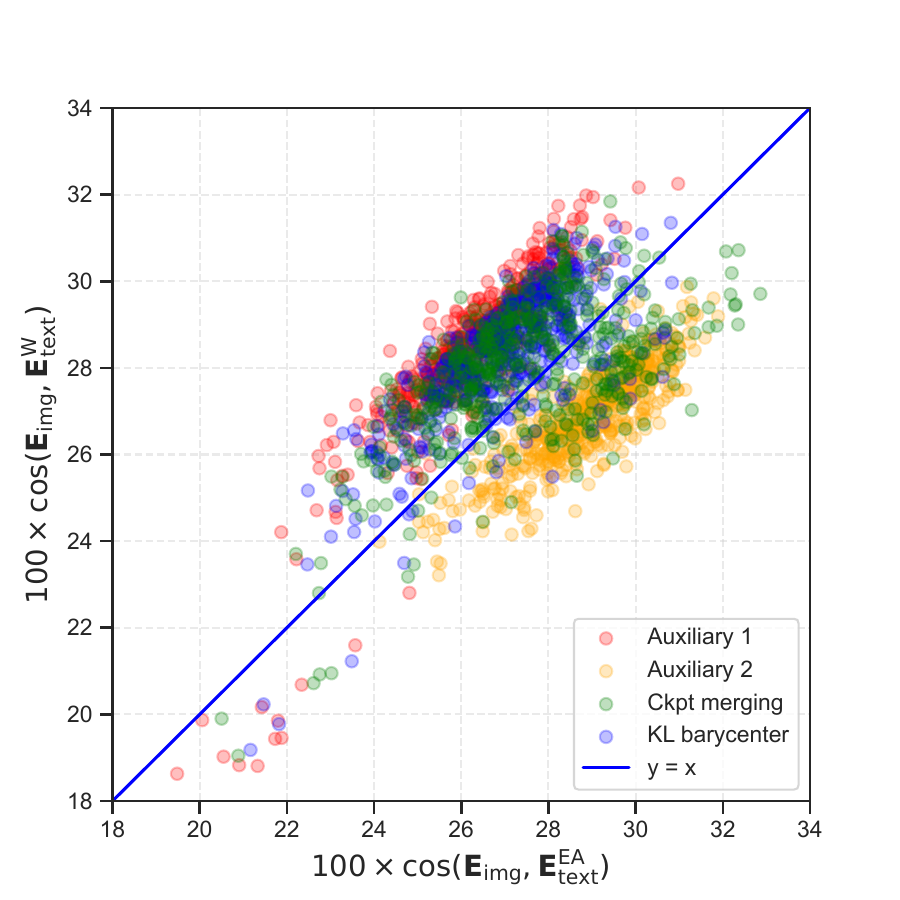}
    \end{subfigure}
    \hfill
    \begin{subfigure}
        \centering    
        \includegraphics[width=0.48\textwidth]{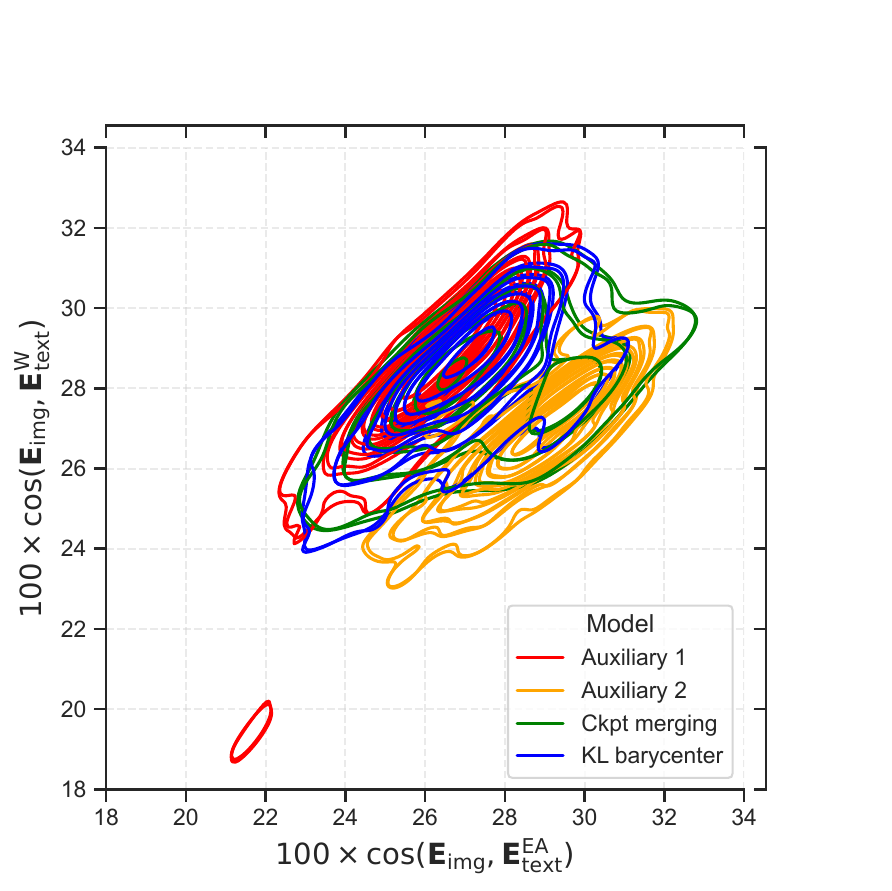}
    \end{subfigure}
    \caption{\textbf{Left}: Scatter plot of the empirical distribution (512 images) of each generative model, projected onto a 2D ethnicity semantic space. \textbf{Right}: KDE contour plot of the kernel density estimation (KDE) of the left scatter plot, with a bandwidth of 0.8. \newline $\mathbf{E}_{img}$ are CLIP embeddings of images samples. $\mathbf{E}_{text}^{EA}, \mathbf{E}_{text}^{W}$ are (fixed) text embeddings of \textit{``a photo of an East Asian scientist''} and \textit{``a photo of a White scientist''}. The line $y=x$ is drawn to indicate the idea of ``ethnic ambiguity''. The same PyTorch seed was used for all four pipelines.}
    \label{fig:sdxl-ethnic-clips}
\end{figure*}

\subsection{Ablation Study}\label{sub:ablation}

\begin{figure}[htbp]
    \centering
    \begin{subfigure}
        \centering
        \includegraphics[width=0.48\textwidth]{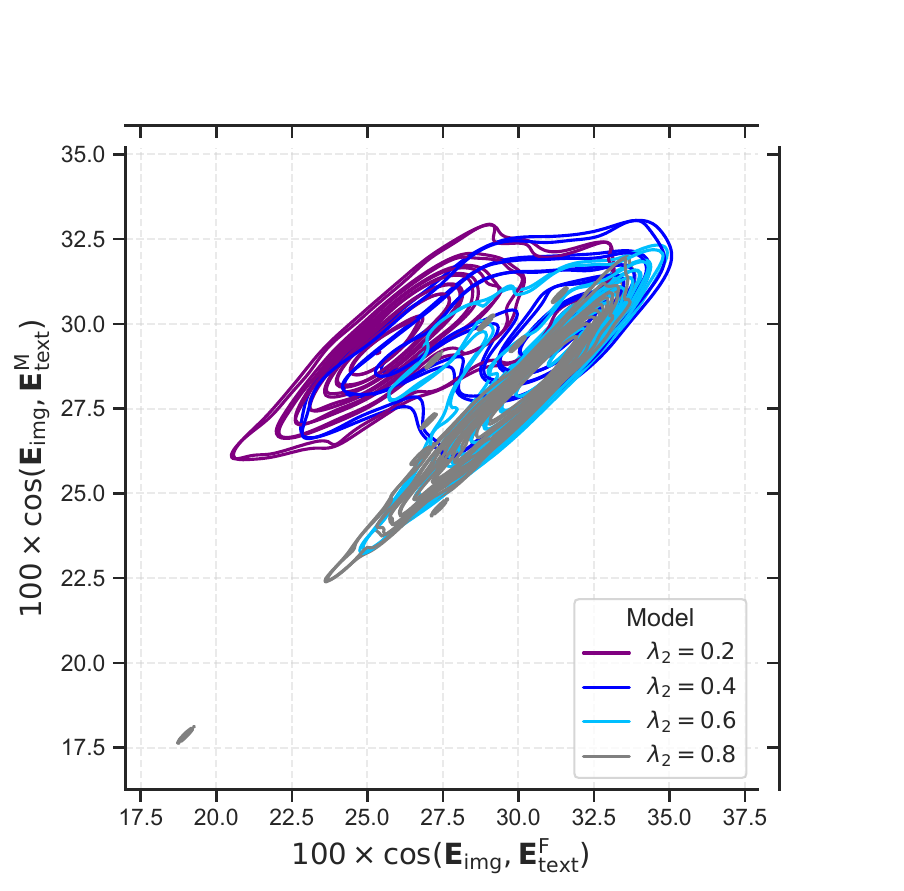}
    \end{subfigure}
    \hfill
    \begin{subfigure}
        \centering    
        \includegraphics[width=0.48\textwidth]{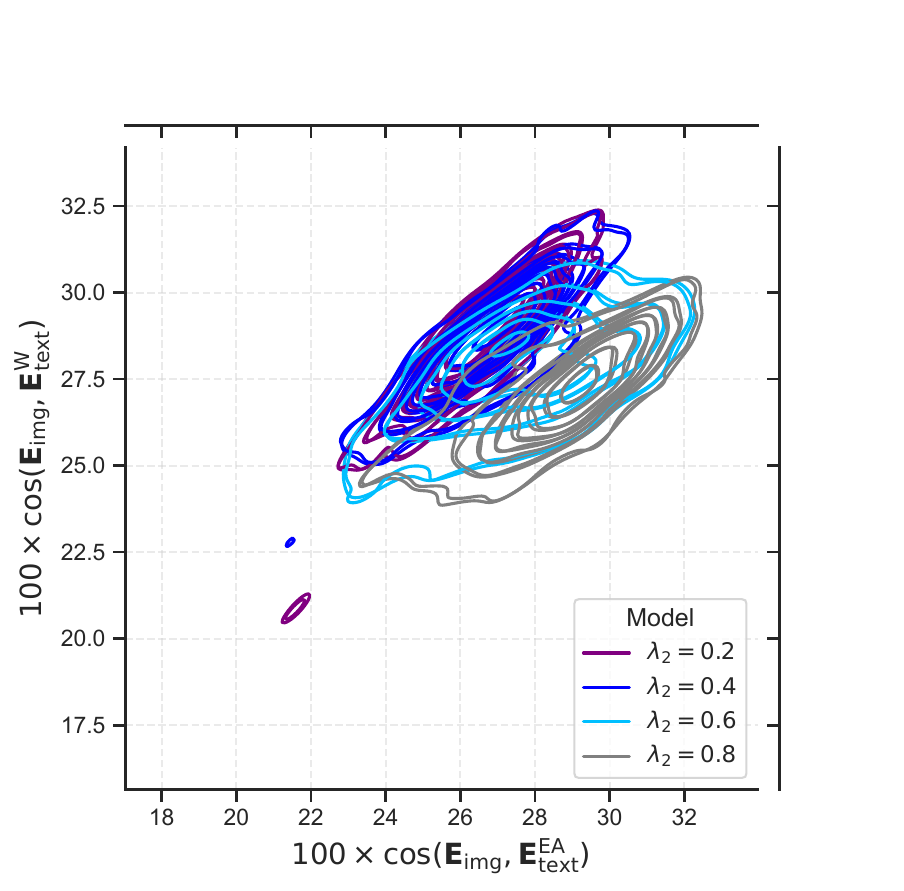}
    \end{subfigure}
    \caption{\textbf{Left}: KDE contour plot of the KL barycenter distribution under different weights $\lambda_2$, projected onto a 2D gender semantic space. \textbf{Right}: Same analysis, but onto a 2D ethnicity semantic space. \newline $\lambda_2=0$ and $\lambda_2=1$ (not shown) respectively correspond to ``\textit{Auxiliary 1}'' and ``\textit{Auxiliary 2}'' in Figure \ref{fig:SDXL-ablation-02461}.}
    \label{fig:SDXL-ablation-2468}
\end{figure}

\begin{figure}[htbp]
    \centering
    \begin{subfigure}
        \centering
        \includegraphics[width=0.48\textwidth]{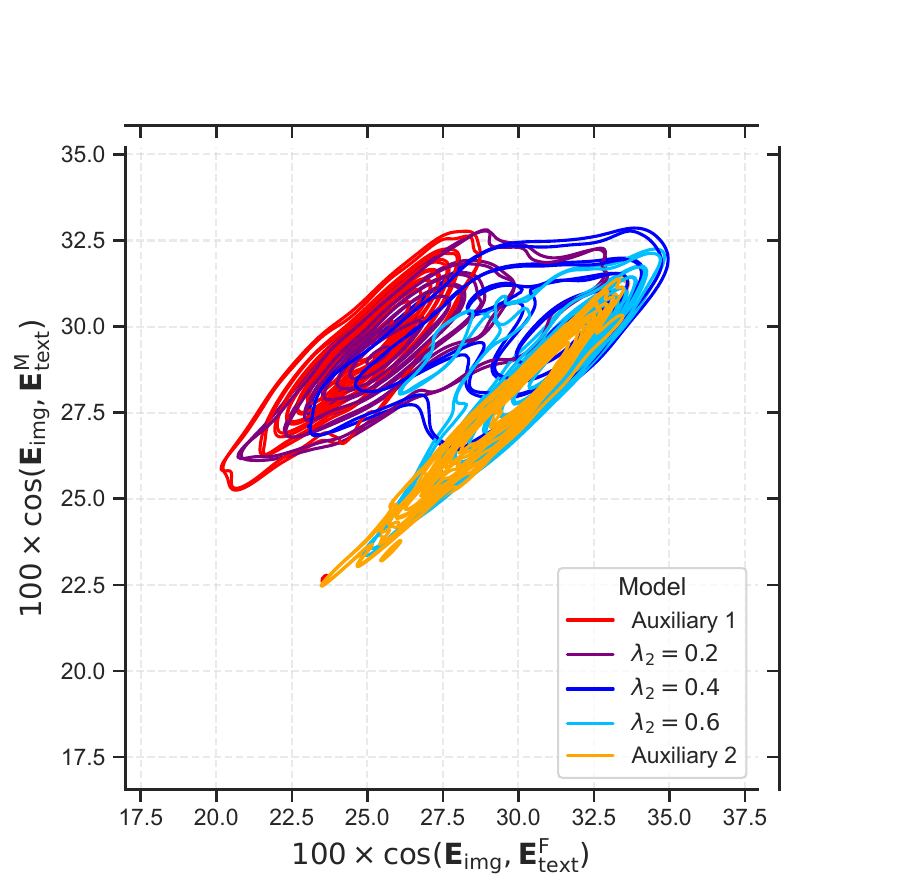}
    \end{subfigure}
    \hfill
    \begin{subfigure}
        \centering    
        \includegraphics[width=0.48\textwidth]{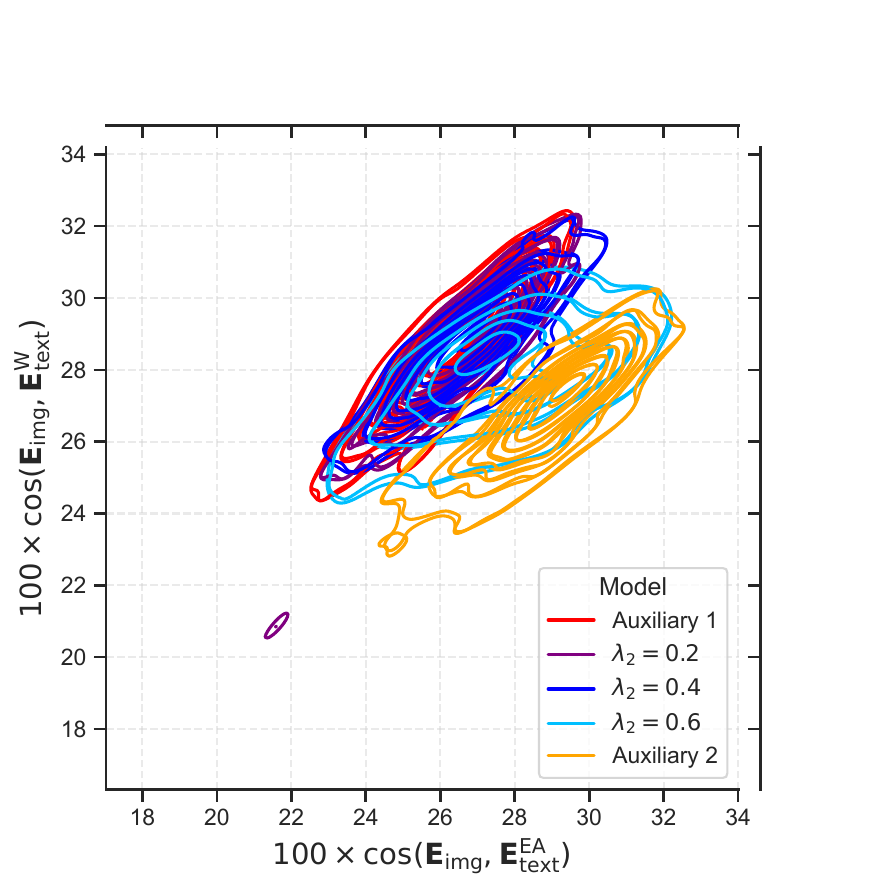}
    \end{subfigure}
    \caption{\textbf{Left}: KDE contour plot of the KL barycenter distribution under different weights $\lambda_2$, projected onto a 2D gender semantic space. \textbf{Right}: Same analysis, but for ethnicity CLIP distances. \newline In this figure, we focus on comparing three non-trivial interpolation values with the original distributions.}
    \label{fig:SDXL-ablation-02461}
\end{figure}

We provide additional quantitative evaluations on the visual effect of varying the KL barycenter weight vector $\boldsymbol{\lambda} \in \Delta^1$. Figure \ref{fig:SDXL-ablation-2468} showcases shifts in image semantics (left subfigure is in the gender space, right subfigure is in the ethnicity space). Both subfigures show a \textbf{directional} drift of the semantic distribution as $\lambda_2$ increases, indicating the smoothness of the KL-divergence barycenter trajectory.

Another noteworthy observation is the asynchronous-ness of the two semantic transitions (gender \& ethnicity): one might think that the transition would proceed at roughly the same rate, but instead, Figure \ref{fig:SDXL-ablation-2468} shows that between $0.2$ (purple contours) to $0.4$ (blue contours), the gender distribution drifts dramatically while the ethnicity distribution stays mostly unchanged. Figure \ref{fig:SDXL-ablation-02461} swaps out samples from $\lambda_2=0.8$ for the two original auxiliary models to allow for a complementary comparison.

In addition to Figure \ref{fig:compare-interpolation}, we provide another visual comparison of KL barycenter and checkpoint merging on different $\boldsymbol{\lambda}$ values in Figure \ref{fig:compare-interpolation-appendix-1}.

\begin{figure*}[htbp]
  \centering
  \includegraphics[width=0.95\textwidth]{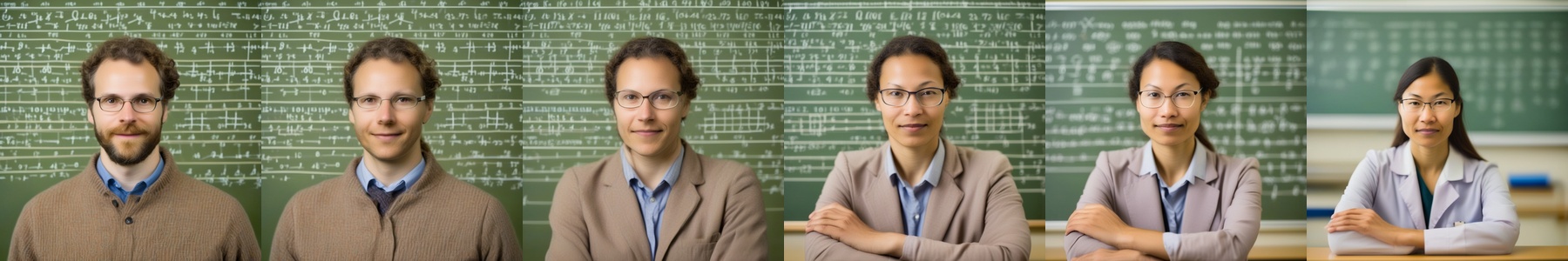}
  \includegraphics[width=0.95\textwidth]{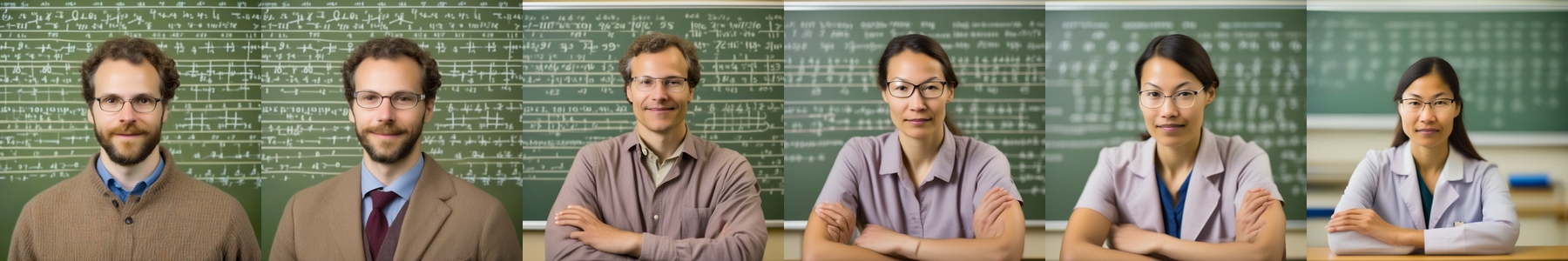}

  \caption{
    \textbf{Top row}: KL barycenter. 
    \textbf{Bottom row}: Checkpoint merging.\newline The same Gaussian noise (but different from that of Figure \ref{fig:compare-interpolation}) was used to seed all twelve images. From left to right, $\lambda_2 \in \{0, 0.2, 0.4, 0.6, 0.8, 1.0\}$ and $\lambda_1 = 1 - \lambda_2$. $\lambda_2=0$ and $\lambda_2=1$ each reduce to an original (biased) auxiliary SDXL model. Observe that the bottom row samples show an abrupt identity shift from $\lambda_2 = 0.2$ to $0.4$ and $0.4$ to $0.6$, whereas the top row shows a smoother transition from one identity concept to another.
    }
  \label{fig:compare-interpolation-appendix-1}
\end{figure*}

\subsection{Additional Portrait Samples}\label{SDXL:more-samples}

For consistency, we use the same text prompt as the one stated in section \ref{sec:sdxl-kl} for all image generations. Importantly, the text prompt does not specify the gender or race of the person. However, as Figure \ref{fig:SDXL-biased} shows, two grids of samples generated by the two auxiliary models display a homogeneous gender \& ethnic expression. This setup is to contrast how our KL barycenter approach can effectively spotlight an underrepresented subpopulation in either model. Figures \ref{fig:SDXL-comp1}, and \ref{fig:SDXL-comp2} provide additional samples generated by the model combination methods (checkpoint merging \& KL barycenter) beyond those presented in the main text.

\begin{figure*}[htbp]
    \centering
    \begin{subfigure}
        \centering
\includegraphics[width=0.42\textwidth]{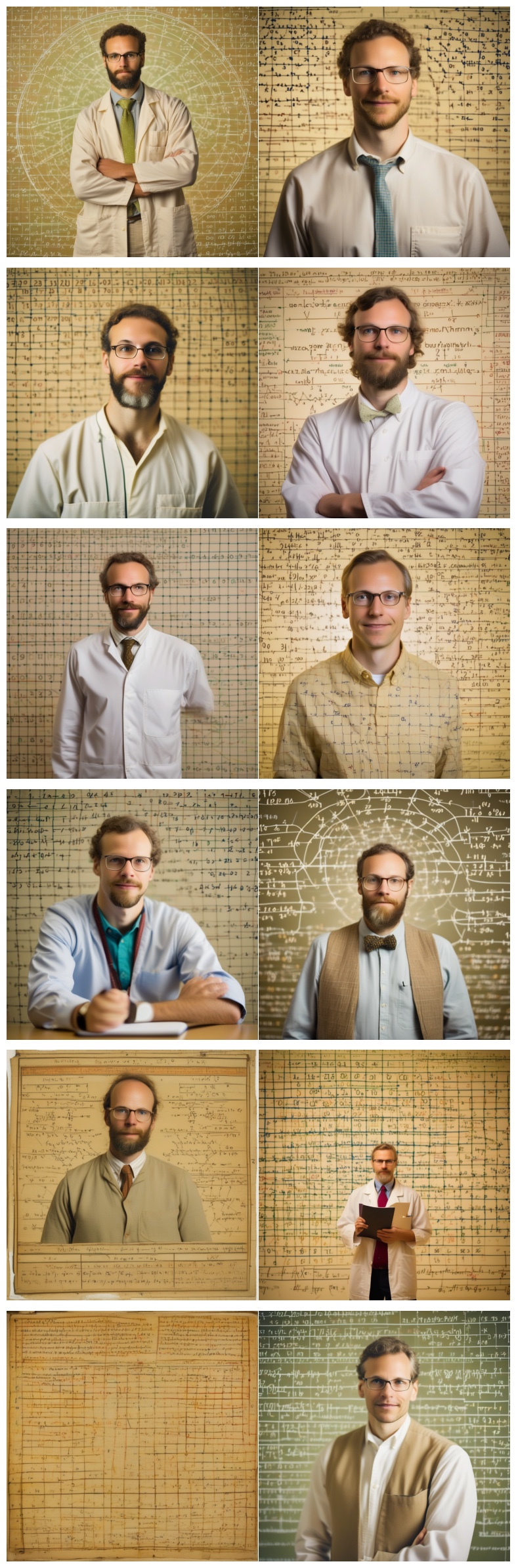}
    \end{subfigure}
    \hfill
    \begin{subfigure}
        \centering     \includegraphics[width=0.42\textwidth]{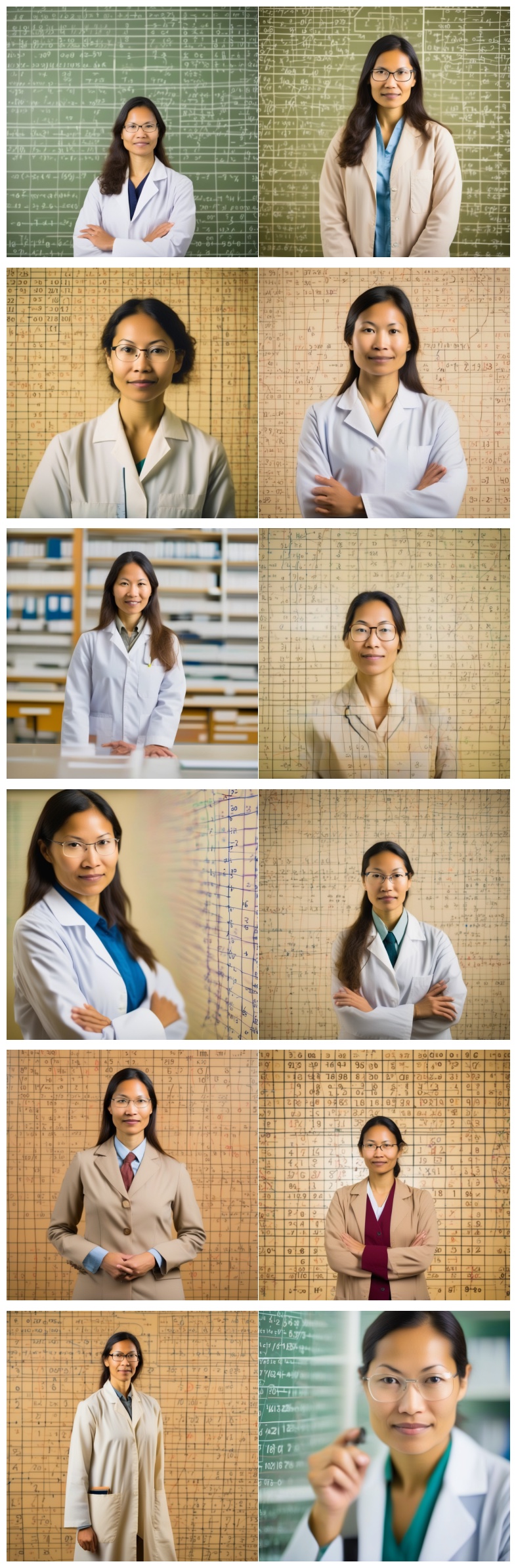}
    \end{subfigure}
    \caption{\textbf{Left}: uncurated samples from the first (biased) auxiliary model. \textbf{Right}: uncurated samples from the second (biased) auxiliary model. Each 6x2 grid of portraits (here and in Figure \ref{fig:SDXL-comp1}) is generated from the same set of initial latent noise tensors, ensuring comparability across models.}
    \label{fig:SDXL-biased}
\end{figure*}

\begin{figure*}[htbp]
    \centering
    \begin{subfigure}
        \centering
\includegraphics[width=0.42\textwidth]{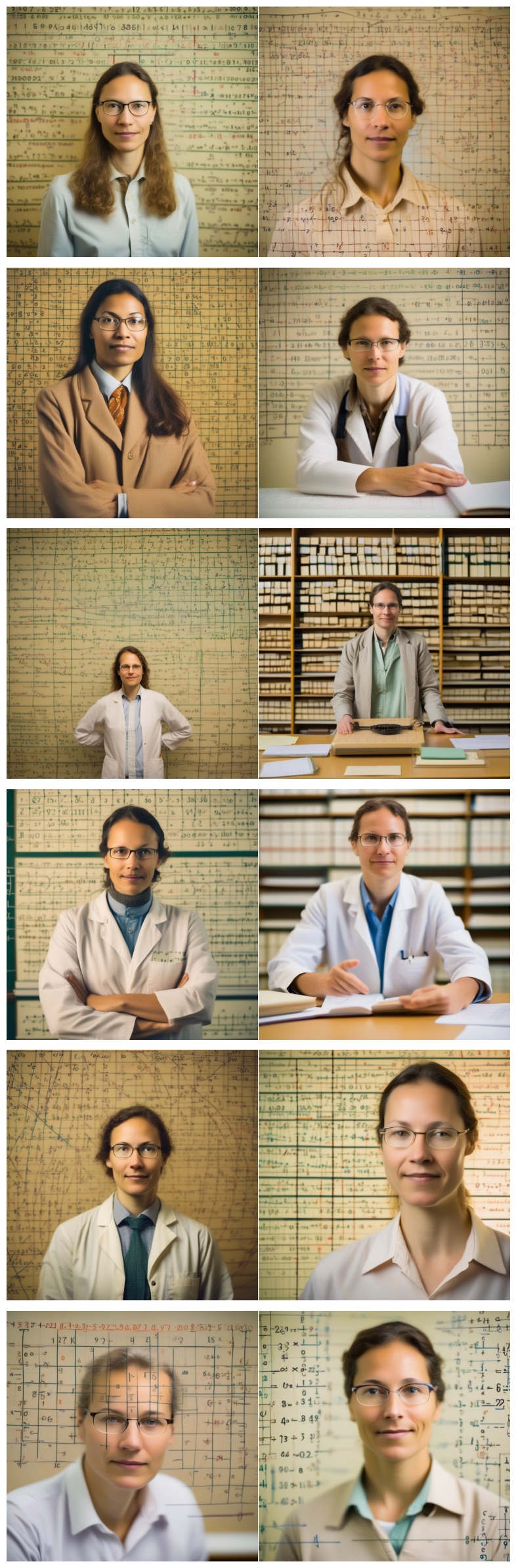}
    \end{subfigure}
    \hfill
    \begin{subfigure}
        \centering     \includegraphics[width=0.42\textwidth]{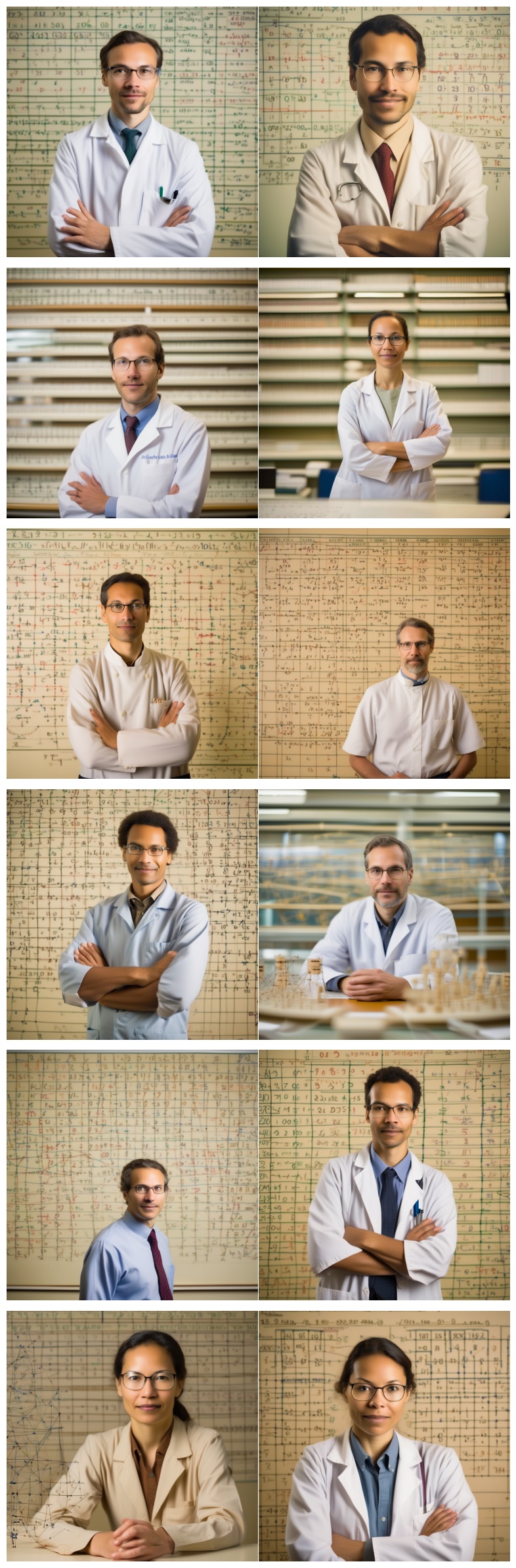}
    \end{subfigure}
    \caption{\textbf{Left}: uncurated samples from the KL barycenter distribution with $\boldsymbol{\lambda} = (0.5, 0.5)$. \textbf{Right}: uncurated samples from a 50-50 checkpoint merged model. The left grid embodies a more neutral, ambiguous gender expressions, such as the absence of mustaches or beards present in the right and Figure \ref{fig:SDXL-biased}.}
    \label{fig:SDXL-comp1}
\end{figure*}

\begin{figure*}[htbp]
    \centering
    \begin{subfigure}
        \centering
\includegraphics[width=0.42\textwidth]{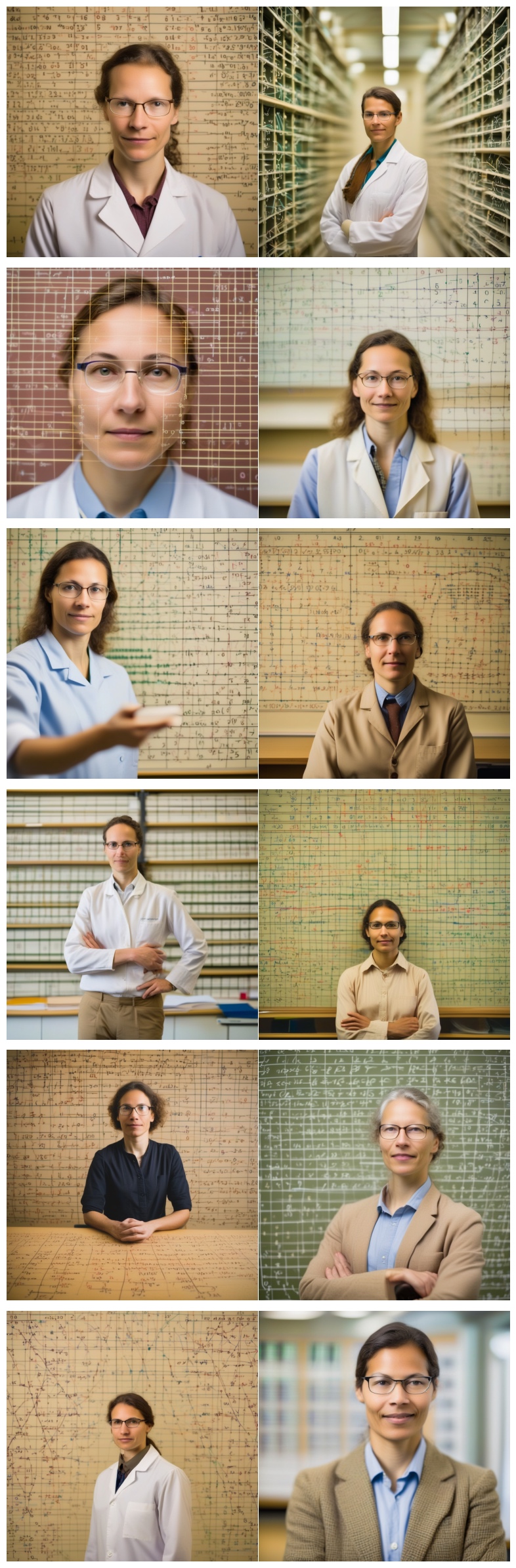}
    \end{subfigure}
    \hfill
    \begin{subfigure}
        \centering     \includegraphics[width=0.42\textwidth]{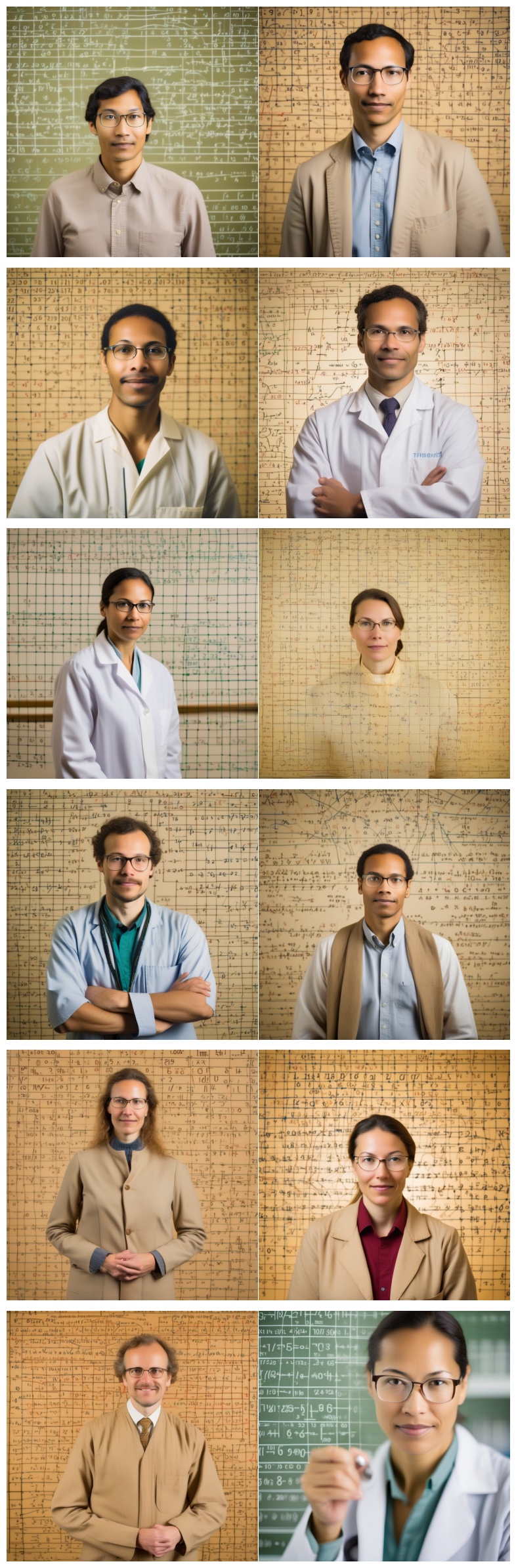}
    \end{subfigure}
    \caption{More image samples, continuing Figure \ref{fig:SDXL-comp1}. \textbf{Left}: uncurated samples from the KL barycenter distribution with $\boldsymbol{\lambda} = (0.5, 0.5)$. \textbf{Right}: uncurated samples from a 50-50 checkpoint merged model.}
    \label{fig:SDXL-comp2}
\end{figure*}

\subsection{Theory-Inspired Heuristic for the Phenomenon}\label{sec:theory-inspired}

As we saw in Figures \ref{fig:SDXL-comp1} and \ref{fig:SDXL-comp2}, the KL barycenter of the two auxiliary models generate professional portraits that seem unlikely under the marginal distribution of either model, depicting an underrepresented and much more gender-neutral gender expression. We provide a heuristic, theory-based explanation for this phenonmenon, using an example of combining two one-dimensional Gaussian models. Recall from Theorem \ref{sol1} the density function of the distribution-level KL barycenter with barycenter weights vector $\boldsymbol{\lambda}$:

\begin{equation*}
    p_{\boldsymbol{\lambda}}(x) = \frac{\prod _{i=1}^k p_i(x)^{\lambda_i}}{\int_{\mathbb{R}^d} \prod _{i=1}^k p_i(x)^{\lambda_i} dx},
\end{equation*}

where $p_1,\cdots,p_k$ are the probability density functions of the marginal distributions of the $k$ auxiliary models, and $d$ is the dimension of the Euclidean space that $x$ lives in. 

Set $k=2$, $d=1$. Let $p_1$ and $p_2$ be probability density functions of Gaussian distributions with means $\mu_1$, $\mu_2$ and variance $\sigma_1^2$, $\sigma_2^2$. Then the numerator becomes

\begin{equation*}
    \prod_{i=1}^2 p_i(x)^{\lambda_i} = \left( \prod_{i=1}^2 \left( \frac{1}{\sqrt{2 \pi \sigma_i^2}} \right)^{\lambda_i} \right) \exp\left( -\sum_{i=1}^2 \lambda_i \frac{(x - \mu_i)^2}{2 \sigma_i^2} \right).
\end{equation*}

Putting aside constant terms, we see that $p_{\boldsymbol{\lambda}}$ is also Gaussian. With some routine algebraic manipulations, it can be shown that the distribution has mean $\mu_{\boldsymbol{\lambda}} = \frac{B}{A}$ and variance $\sigma_{\lambda}^2=\frac{1}{2A}$, where 

\begin{equation*}
    A = \lambda_1 \frac{1}{2\sigma_1^2} + \lambda_2 \frac{1}{2\sigma_2^2},\; B = \lambda_1 \frac{\mu_1}{2\sigma_1^2} + \lambda_2 \frac{\mu_2}{2\sigma_2^2}
\end{equation*}

and we impose the constraint that $\lambda_i \geq 0$ and $\lambda_1 + \lambda_2=1$. When $\lambda_i = 1$, $\mu_{\boldsymbol{\lambda}}=\mu_i$ and $\sigma_{\boldsymbol{\lambda}}=\sigma^2_i$. Figure \ref{fig:1d-gaussian} shows the barycenter distribution under different $\lambda_1$ values.

\begin{figure}[htbp]
    \centering
    \includegraphics[width=0.45\textwidth]{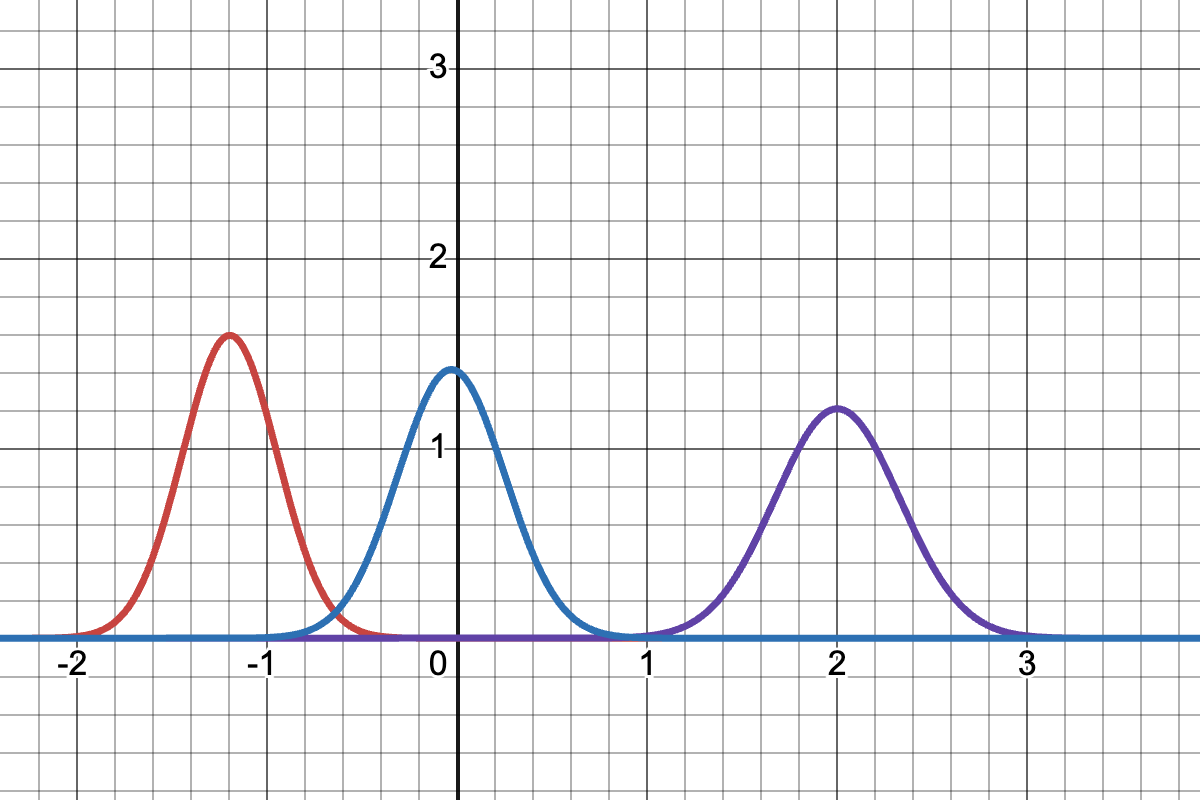}
    \caption{\textbf{Blue}: KL-divergence barycenter PDF with $\lambda_1 = 0.5$. \textbf{Red} and \textbf{Purple}: two auxiliary (unimodal Gaussian) PDFs. Red corresponds to $\lambda_1 = 1$, purple corresponds to $\lambda_1 = 0$.}
    \label{fig:1d-gaussian}
\end{figure}

One way to interpret this stylized setup is that the real line represents a spectrum of gender expression (or another semantic concept), with the origin ($x=0$) denoting the peak of gender ambiguity. This corresponds to the two auxiliary models in Figure \ref{fig:SDXL-biased}, each generating distinct, yet narrowly focused gender representations. As Figure \ref{fig:1d-gaussian} illustrates, at $\lambda_1 = 0.5$, the barycenter distribution ultimately emphasizes sampling from regions where neither generative model assigns high probability. Like celestial bodies pulling one another into balance, the diffusion process finds a shared orbit between the biases of both models. The barycenter does not merely split the difference—it crafts a new gravitational midpoint, where the intersection of opposing forces yields something not yet observed, harmonizing their influence on the vast canvas of the latent space.

\section{ADDITIONAL EXPERIMENT WITH BIMODAL GAUSSIAN MIXTURES}\label{sub:1D}

We also test ScoreFusion's ability to approximate a one-dimensional \textit{bimodal} Gaussian mixture distribution using two auxillary models that were trained on different Gaussian mixtures. Since the data is synthetic, we can access the true density function of the target distribution and auxiliary distributions, shown in the right of Figure \ref{fig:1d-64}; the ground truth distribution is in grey. Table \ref{tab:w1-a1} gives the $1$-Wasserstein distance $\mathcal{W}_1$ between the distribution learned by ScoreFusion and the ground truth distribution, calculated using \texttt{SciPy}.

\begin{table}[ht]
\centering
\caption{1-Wasserstein distance from the ground truth distribution. Standard error is calculated from the $\mathcal{W}_1$ distances of $10$ random draws of $8096$ samples from each generator.}
\label{tab:w1-a1}
\begin{tabular}{@{}cccc@{}}
\Xhline{1.2pt} 
Model & $2^5$ & $2^6$ & $2^7$ \\ 
\midrule
Baseline & $106.93 \pm 1.43$ & $13.46 \pm 0.28$ & $16.74 \pm 0.27$ \\
ScoreFusion & $\mathbf{0.39 \pm 0.02}$ & $\mathbf{0.51 \pm 0.03}$ & $\mathbf{0.36 \pm 0.02}$ \\
\midrule
$\boldsymbol{\lambda}^*$ of ScoreFusion & $[0.62, 0.38]$ & $[0.65, 0.35]$ & $[0.46, 0.54]$ \\
\Xhline{1.2pt} 
\end{tabular}
\end{table}

\begin{table}[ht]
\centering
\label{tab:w1-a2}
\begin{tabular}{@{}ccccccc@{}}
\Xhline{1.2pt} 
Model & $2^8$ & $2^9$ & $2^{10}$ \\ 
\midrule
Baseline & $2.13 \pm 0.12$ & $0.55 \pm 0.04$ & $\mathbf{0.15 \pm 0.02}$ \\
ScoreFusion & $\mathbf{0.58 \pm 0.03}$ & $\mathbf{0.38 \pm 0.02}$ & $0.30 \pm 0.02$ \\
\midrule
$\boldsymbol{\lambda}^*$ of ScoreFusion & $[0.68, 0.32]$ & $[0.61, 0.39]$ & $[0.58, 0.42]$ \\
\Xhline{1.2pt} 
\end{tabular}
\end{table}

\begin{figure*}[ht]
    \centering
    \begin{subfigure}
        \centering
\includegraphics[width=0.48\textwidth]{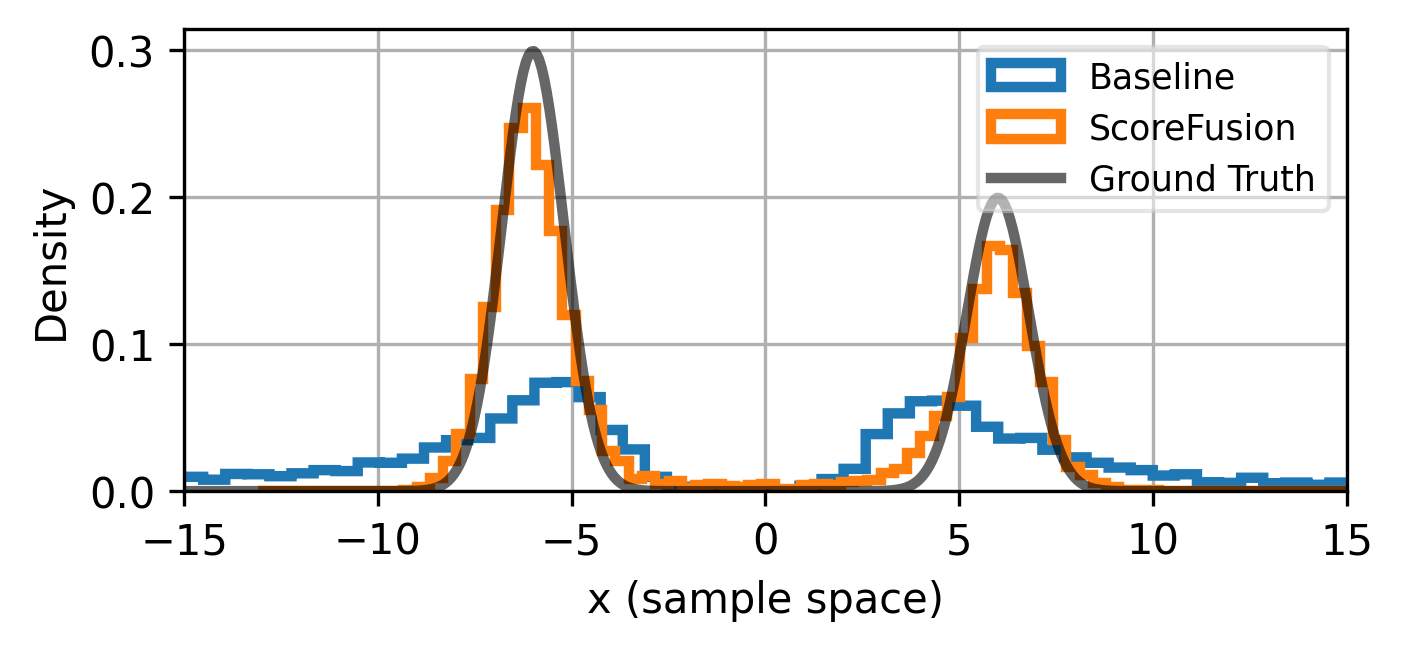}
    \end{subfigure}
    \hfill
    \begin{subfigure}
        \centering     \includegraphics[width=0.48\textwidth]{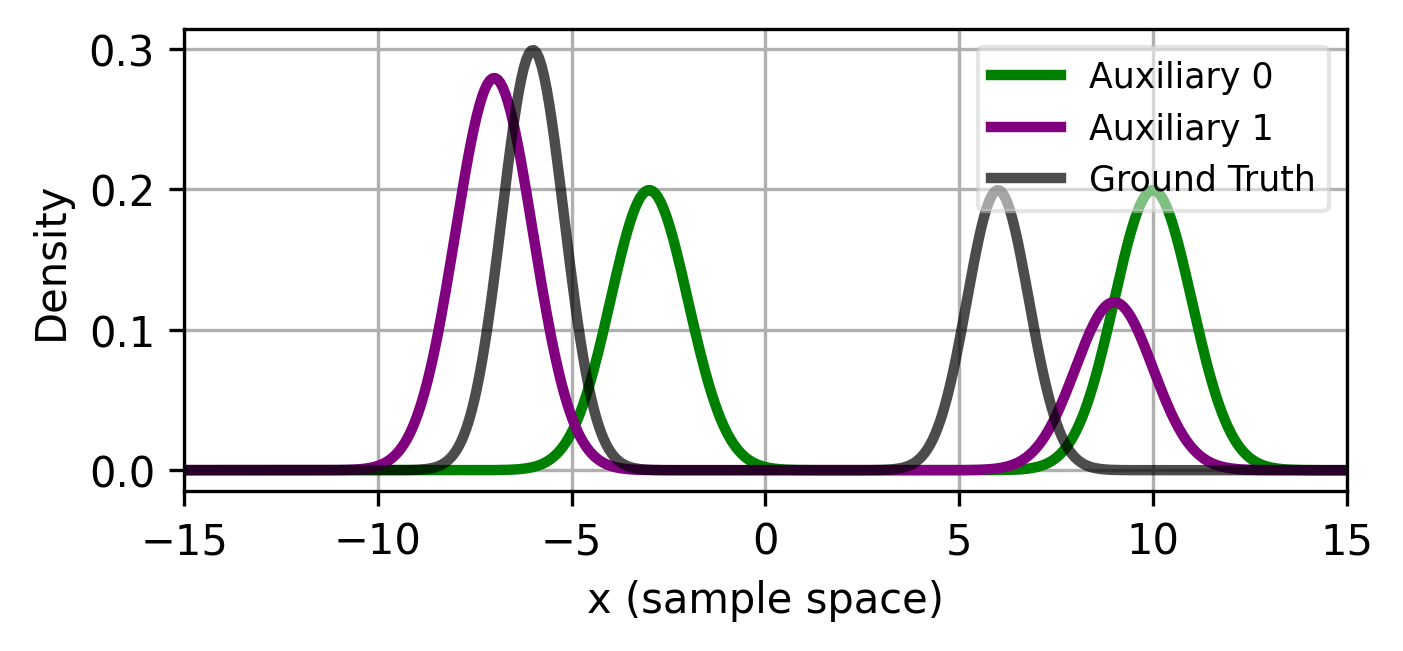}
    \end{subfigure}
    \caption{\textbf{Left}: Histograms of $8096$ ScoreFusion samples and $8096$ baseline samples; both models are calibrated / trained on $64$ samples. \textbf{Right}: Density functions of ground truth vs. the auxiliary distributions.}
    \label{fig:1d-64}
\end{figure*}

Using only $64$ training data, ScoreFusion can already learn a good representation of the ground truth distribution. In contrast, the standard diffusion model is overly widespread and fails to capture the modes of the Gaussian mixture. Moreover,   ScoreFusion  consistently outperforms the baseline in $\mathcal{W}_1$ distance when the number of training data is fewer than $2^{10}$. 

Additional histograms of the distributions learned by ScoreFusion versus the baseline are attached:

\begin{figure}[H]
    \centering
    \begin{subfigure}
        \centering
\includegraphics[width=0.48\textwidth]{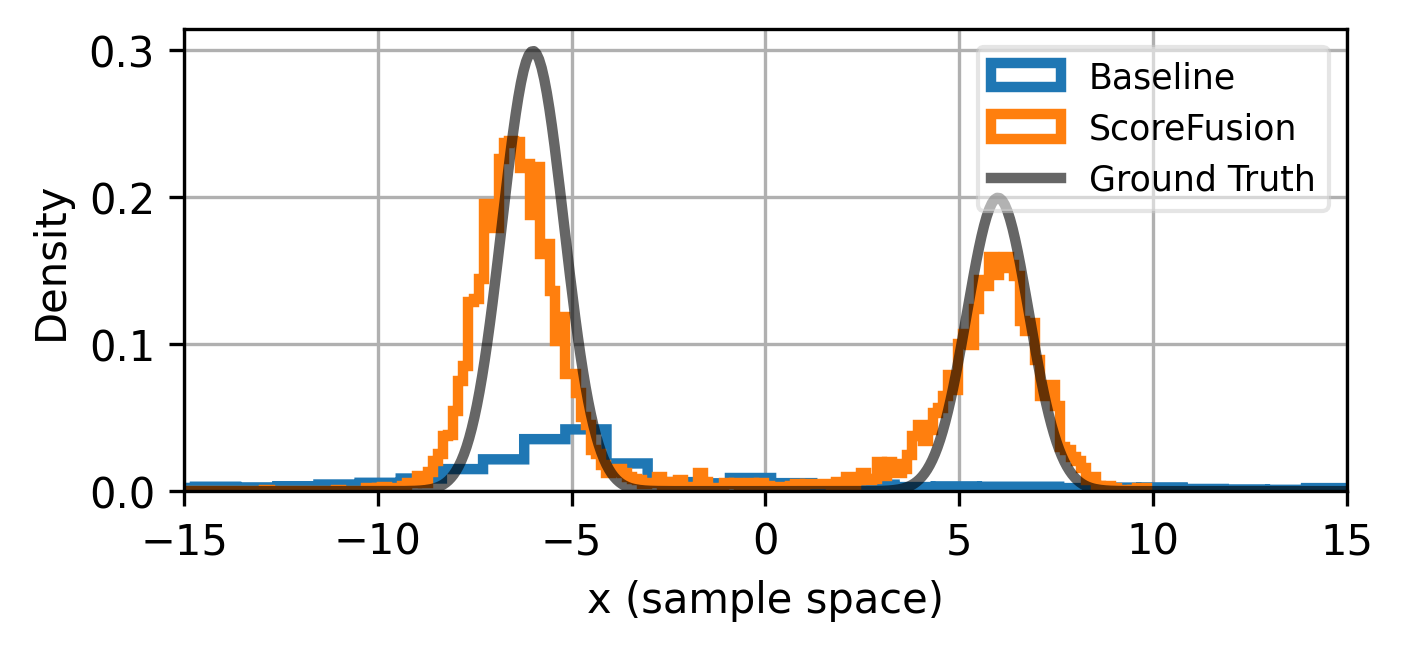}
    \end{subfigure}
    \hfill
    \begin{subfigure}
        \centering     \includegraphics[width=0.48\textwidth]{images/1D/1d_64_comp1.png}
    \end{subfigure}
    \caption{\textbf{Left}: Models trained on $32$ samples. \textbf{Right}: Models trained on $64$ samples.}
    \label{fig:1d-a1}
\end{figure}

\begin{figure}[H]
    \centering
    \begin{subfigure}
        \centering
\includegraphics[width=0.48\textwidth]{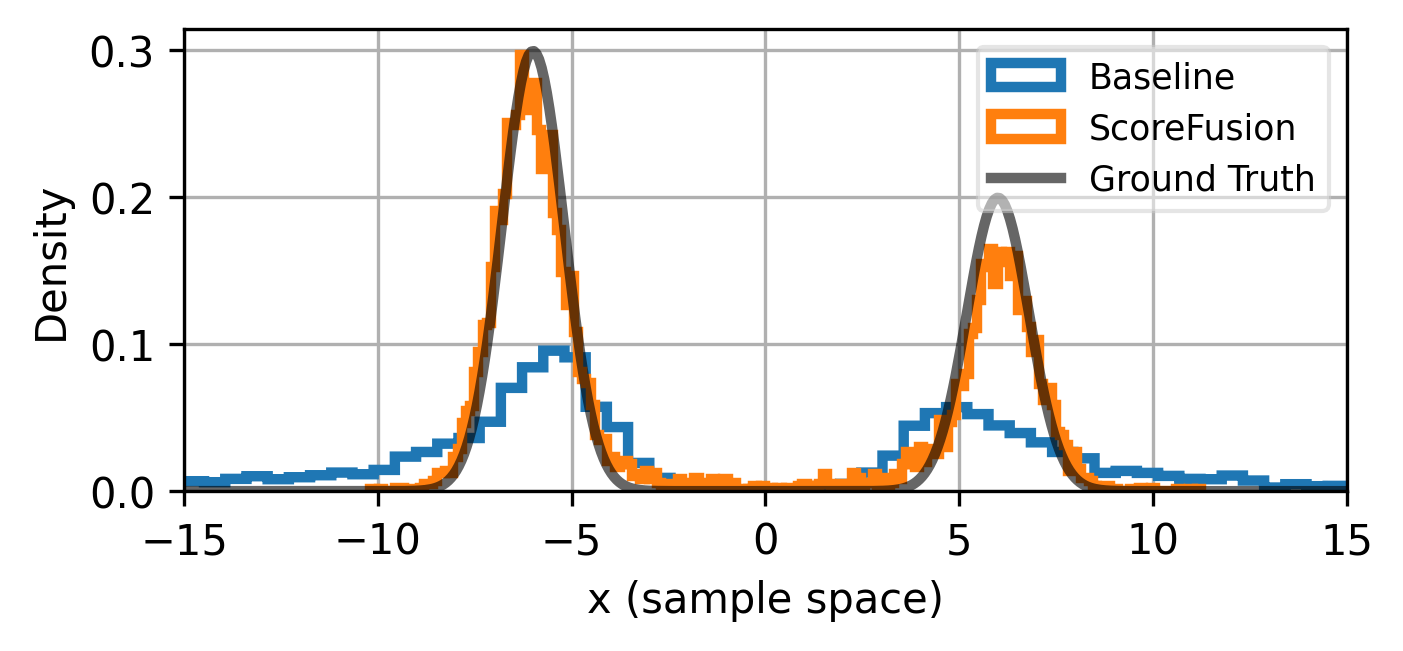}
    \end{subfigure}
    \hfill
    \begin{subfigure}
        \centering     \includegraphics[width=0.48\textwidth]{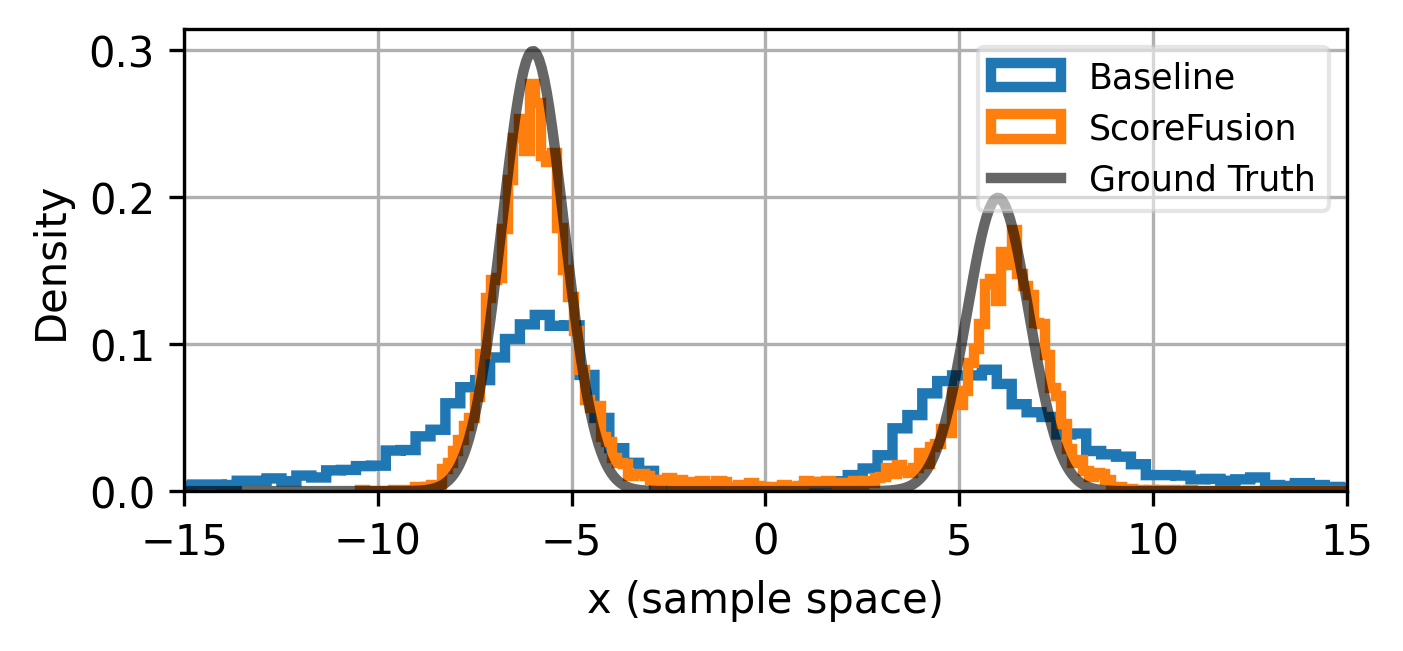}
    \end{subfigure}
    \caption{\textbf{Left}: Models trained on $128$ samples. \textbf{Right}: Models trained on $256$ samples.}
    \label{fig:1d-a2}
\end{figure}

\begin{figure}[H]
    \centering
    \begin{subfigure}
        \centering
\includegraphics[width=0.48\textwidth]{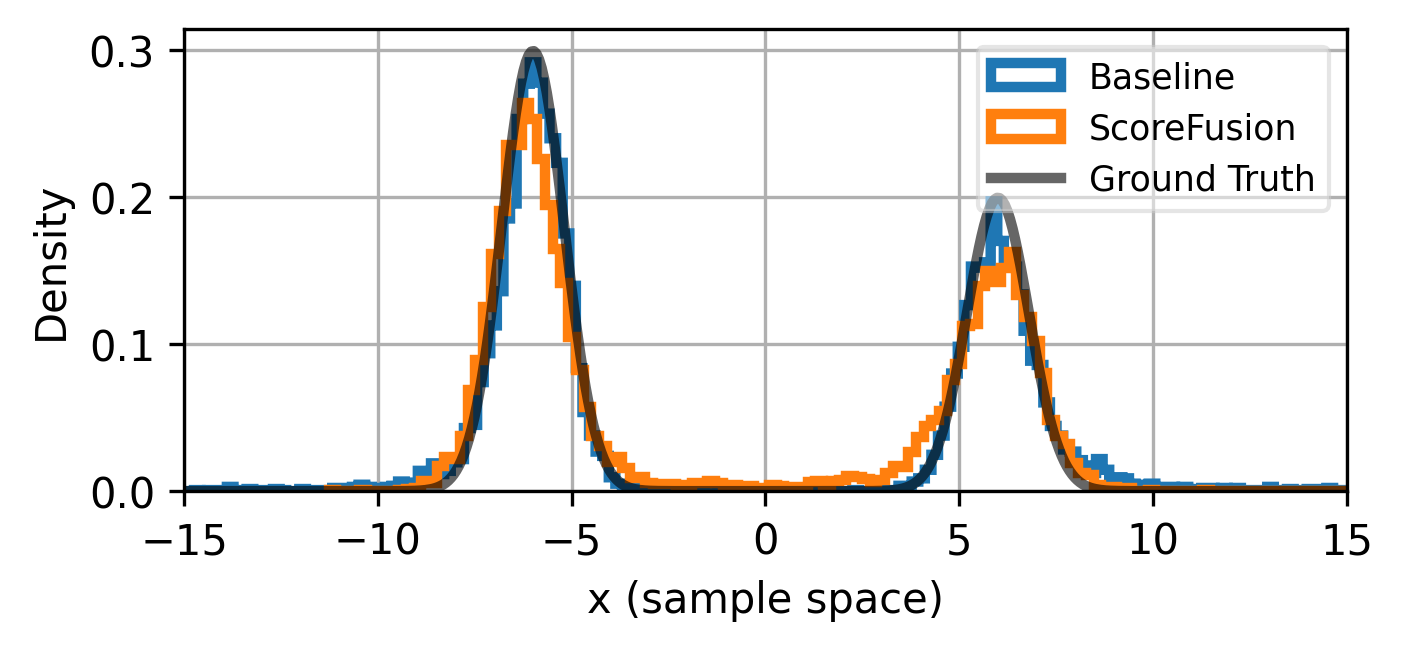}
    \end{subfigure}
    \hfill
    \begin{subfigure}
        \centering     \includegraphics[width=0.48\textwidth]{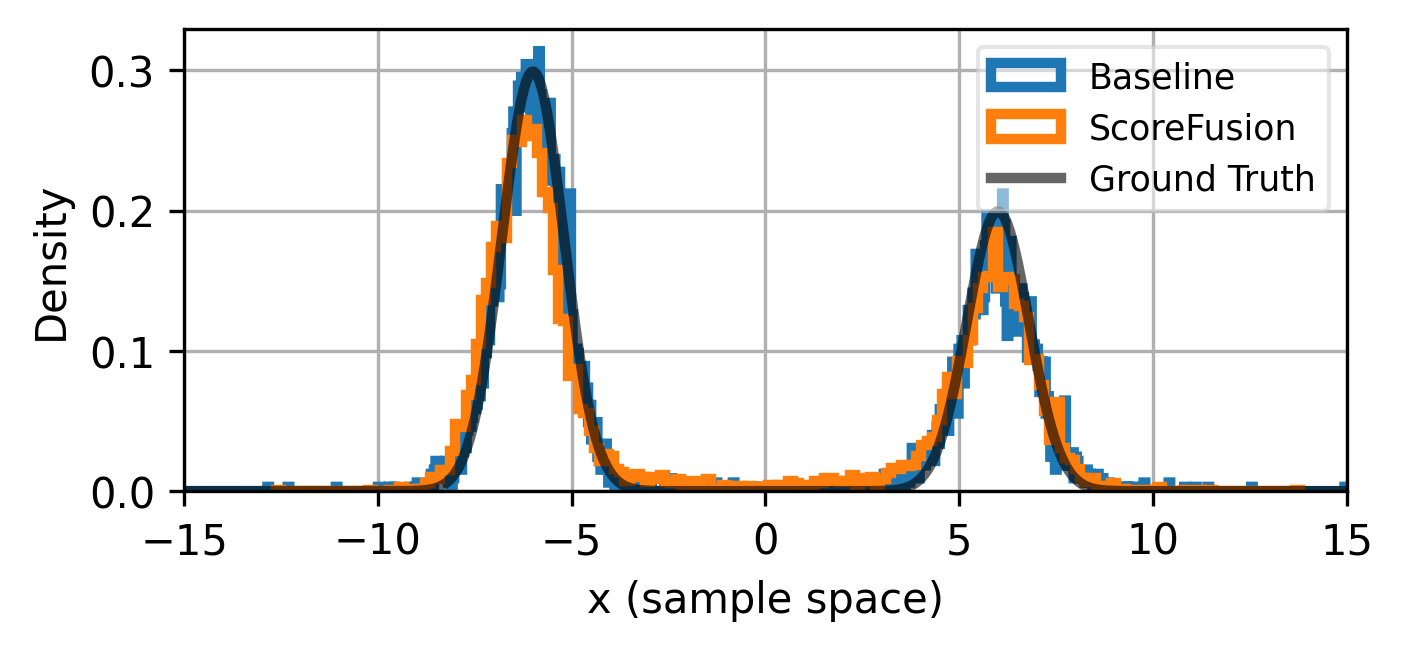}
    \end{subfigure}
    \caption{\textbf{Left}: Models trained on $512$ samples. \textbf{Right}: Models trained on $1024$ samples.}
    \label{fig:1d-a3}
\end{figure}

\end{document}